  \providecommand\BibTeX{{%
    \normalfont B\kern-0.5em{\scshape i\kern-0.25em b}\kern-0.8em\TeX}}}
\def\eqref#1{equation~\ref{#1}}
\def\1{\bm{1}}
\DeclareMathAlphabet{\mathsfit}{\encodingdefault}{\sfdefault}{m}{sl}
\SetMathAlphabet{\mathsfit}{bold}{\encodingdefault}{\sfdefault}{bx}{n}
\DeclareMathOperator*{\argmax}{arg\,max}
\DeclareMathOperator*{\argmin}{arg\,min}
\newcommand\xbf{\boldsymbol{x}}
\theoremstyle{plain}
\newtheorem{theorem}{Theorem}
\newtheorem{lemma}{Lemma}
\newtheorem{definition}{Definition}
\theoremstyle{definition}
\newtheorem{assumption}{Assumption}
\DeclareMathOperator*{\Ee}{\mathbb{E}}
\begin{document}



\title{Fisher Information guided Purification against Backdoor Attacks}

\author{Nazmul Karim}
\authornote{Both authors contributed equally to this work.}
\email{nazmul.karim170@gmail.com}
\affiliation{%
  \institution{University of Central Florida}
  \streetaddress{P.O. Box 1212}
  \city{Orlando}
  \state{Florida}
  \country{USA}
  \postcode{43017-6221}
}
\author{Abdullah Al Arafat}
\authornotemark[1]
\email{aalaraf@ncsu.edu}
\affiliation{%
  \institution{North Carolina State University}
  \streetaddress{P.O. Box 1212}
  \city{Raleigh}
  \state{North Carolina}
  \country{USA}
  \postcode{43017-6221}
}

\author{Adnan Siraj Rakin}
\email{arakin@binghamton.edu}
\affiliation{%
  \institution{Binghamton University (SUNY)}
  \streetaddress{P.O. Box 1212}
  \city{Binghamton}
  \state{New York}
  \country{USA}
  \postcode{43017-6221}
}

\author{Zhishan Guo}
\email{zguo32@ncsu.edu}
\affiliation{%
  \institution{North Carolina State University}
  \streetaddress{P.O. Box 1212}
  \city{Raleigh}
  \state{North Carolina}
  \country{USA}
  \postcode{43017-6221}
}

\author{Nazanin Rahnavard}
\email{nazanin.rahnavard@ucf.edu}
\affiliation{%
  \institution{University of Central Florida}
  \streetaddress{P.O. Box 1212}
  \city{Orlando}
  \state{Florida}
  \country{USA}
  \postcode{43017-6221}
}


\begin{abstract}
Studies on backdoor attacks in recent years suggest that an adversary can compromise the integrity of a deep neural network (DNN) by manipulating a small set of training samples. 
Our analysis shows that such manipulation can make the backdoor model converge to a \emph{bad local minima}, i.e., sharper minima as compared to a benign model. Intuitively, the backdoor can be purified by re-optimizing the model to smoother minima.  However, a na\"ive adoption of any optimization targeting smoother minima can lead to sub-optimal purification techniques hampering the clean test accuracy. Hence, to effectively obtain such re-optimization, inspired by our novel perspective establishing the connection between backdoor removal and loss smoothness, we propose \emph{\underline{F}isher \underline{I}nformation guided \underline{P}urification (FIP)}, a novel backdoor purification framework. Proposed FIP consists of a couple of novel regularizers that aid the model in suppressing the backdoor effects and retaining the acquired knowledge of clean data distribution throughout the backdoor removal procedure through exploiting the knowledge of \emph{Fisher Information Matrix (FIM)}.
In addition, we introduce an efficient variant of FIP, dubbed as \emph{Fast FIP}, which reduces the number of tunable parameters significantly and obtains an impressive runtime gain of almost $5\times$. Extensive experiments show that the proposed method achieves state-of-the-art (SOTA) performance on a wide range of backdoor defense benchmarks: 5 different tasks---\emph{Image Recognition, Object Detection, Video Action Recognition, 3D point Cloud, Language Generation}; 11 different datasets including \emph{ImageNet, PASCAL VOC, UCF101}; diverse model architectures spanning both CNN and vision transformer; 14 different backdoor attacks, e.g., \emph{Dynamic, WaNet, LIRA, ISSBA, etc.} Our code is available in this \href{https://github.com/nazmul-karim170/FIP-Fisher-Backdoor-Removal}{\textbf{GitHub Repository}}.
\end{abstract}

\begin{CCSXML}
<ccs2012>
 <concept>
  <concept_id>00000000.0000000.0000000</concept_id>
  <concept_desc>Do Not Use This Code, Generate the Correct Terms for Your Paper</concept_desc>
  <concept_significance>500</concept_significance>
 </concept>
 <concept>
  <concept_id>00000000.00000000.00000000</concept_id>
  <concept_desc>Do Not Use This Code, Generate the Correct Terms for Your Paper</concept_desc>
  <concept_significance>300</concept_significance>
 </concept>
 <concept>
  <concept_id>00000000.00000000.00000000</concept_id>
  <concept_desc>Do Not Use This Code, Generate the Correct Terms for Your Paper</concept_desc>
  <concept_significance>100</concept_significance>
 </concept>
 <concept>
  <concept_id>00000000.00000000.00000000</concept_id>
  <concept_desc>Do Not Use This Code, Generate the Correct Terms for Your Paper</concept_desc>
  <concept_significance>100</concept_significance>
 </concept>
</ccs2012>
\end{CCSXML}

\ccsdesc{Computing Methodologies~Machine Learning}
\ccsdesc{Security and Privacy ~ Software and application security}

\keywords{maching learning, poisoning, backdoor purification, smoothness}


\maketitle

\section{Introduction}\label{sec:intro}
Training a deep neural network (DNN) with a fraction of poisoned or malicious data is often security-critical since the model can successfully learn both clean and adversarial tasks equally well.
This is prominent in scenarios where one outsources the DNN training to a vendor. In such scenarios, an adversary can mount backdoor attacks~\citep{gu2019badnets,chen2017targeted} by poisoning a portion of training samples so that the model will classify any sample with a \emph{particular trigger} or \emph{pattern} to an adversary-set label. Whenever a DNN is trained in such a manner, it becomes crucial to remove the effect of a backdoor before deploying it for a real-world application.   In recent times, several attempts have been made~\citep{liu2018fine,wang2019neural,wu2021adversarial, li2021anti,zheng2022data, zhu2023enhancing} to tackle the backdoor issue in DNN training. Defense techniques such as fine-pruning (FP)~\citep{liu2018fine} aim to prune vulnerable neurons affected by the backdoor. Most of the recent backdoor defenses can be categorized into two groups based on the intuition or perspective they are built on. They are (i) \emph{pruning based defense}~\cite{liu2018fine, wu2021adversarial, zheng2022data}: some weights/channel/neurons are more vulnerable to backdoor than others. Therefore, pruning or masking bad neurons should remove the backdoor. 
(ii) \emph{trigger approximation based defense}~\cite{ zeng2021adversarial, chai2022one}:  recovering the original trigger pattern and fine-tuning the model with this trigger attached to samples and corresponding benign labels would remove the backdoor. However, they require computationally expensive adversarial search approaches to find the backdoor-sensitive model parameters or reverse-engineered backdoor triggers, which makes efficient post-training model purification challenging.

In contrast to the expensive adversarial search and reverse-engineering methods, general-purpose fine-tuning can moderately remove the effects of backdoors and has been adopted as a component in existing defenses~\cite{liu2018fine,li2021neural}. However, adopting vanilla fine-tuning is challenging with limited benign data~\cite{wu2022backdoorbench} and cannot remove strong backdoor attacks, e.g., Blend~\cite{chen2017targeted} and smooth low frequency (LF) trigger~\cite{zeng2021rethinking}. Recently, Zhu et al.,~\cite{zhu2023enhancing} proposed an enhanced fine-tuning method to effectively remove backdoors following their observation aligning with earlier results that neurons with large norms are responsible for backdoors. However, their enhancement is based on a general-purpose optimizer, SAM~\cite{foret2021sharpnessaware}, which affects the runtime of the purification process and accuracy for the clean samples.
Rather than empirical observations, there is a research gap in thoroughly analyzing backdoored models in connecting the purification defense to key \textit{changes} in a model during backdoor insertion, which will lead to an efficient method.

To address this gap, we propose a \emph{novel perspective for analyzing the backdoor in DNNs}. We explore the backdoor insertion and removal phenomena from the DNN optimization point of view. Unlike a benign model, a backdoor model is forced to learn two different data distributions: clean data distribution and poison data distribution. Having to learn both distributions, backdoor model optimization usually leads to a \emph{bad local minima} or sharper minima \emph{w.r.t.} clean distribution. We verify this phenomenon by tracking the spectral norm over the training of a benign and a backdoor model (see Figure~\ref{fig:eigen_spectral}). We also provide theoretical justification for such discrepancy in convergence behavior. Intuitively, we claim that the backdoor can be removed by re-optimizing the model to smoother minima. In addition, instead of na\"ively adopting any re-optimization strategy targeting smooth minima, in this work, we propose a novel backdoor purification technique---\emph{Fisher Information guided Purification (FIP)} by exploiting the knowledge of \emph{Fisher Information Matrix (FIM)} of a DNN to remove the imprint of the backdoor. Specifically, an FIM-guided regularizer has been introduced to achieve smooth convergence, effectively removing the backdoor. Our contribution can be summarized as follows:
\begin{itemize}
    \item \emph{Novel Perspective for Backdoor Analysis.} We analyze the backdoor insertion process in DNNs from the optimization point of view. Our analysis shows that the optimization of a backdoor model leads to a \emph{bad local minima} or sharper minima compared to a benign model. We also provide theoretical justifications for our novel findings. To the best of our knowledge, this is the first study establishing the correlation between smoothness and backdoor attacks.
    \item \emph{Novel Backdoor Defense.} We propose a novel technique, FIP, that removes the backdoor by re-optimizing the model to smooth minima. However, purifying the backdoor in this manner can lead to poor clean test time performance due to drastic changes in the original backdoor model parameters. To preserve the original test accuracy of the model, we propose a novel clean data-distribution-aware regularizer that encourages less drastic changes to the model parameters responsible for remembering the clean distribution.
    \item \emph{Better Runtime Efficiency.}  In addition, we propose a computationally efficient variant of FIP, i.e., \emph{Fast FIP}, where we perform spectral decomposition of the weight matrices and fine-tune only the singular values while freezing the corresponding singular vectors. By reducing the tunable parameters, the purification time can be shortened significantly. 
    \item \emph{Comprehensive Evaluation.} Evaluation on a wide range of backdoor-related benchmarks shows that FIP obtains SOTA performance both in terms of purification performance and runtime. 
\end{itemize}

\begin{figure*}[h!]
  \centering
  \begin{subfigure}{0.235\linewidth}
    \includegraphics[width=1\linewidth]{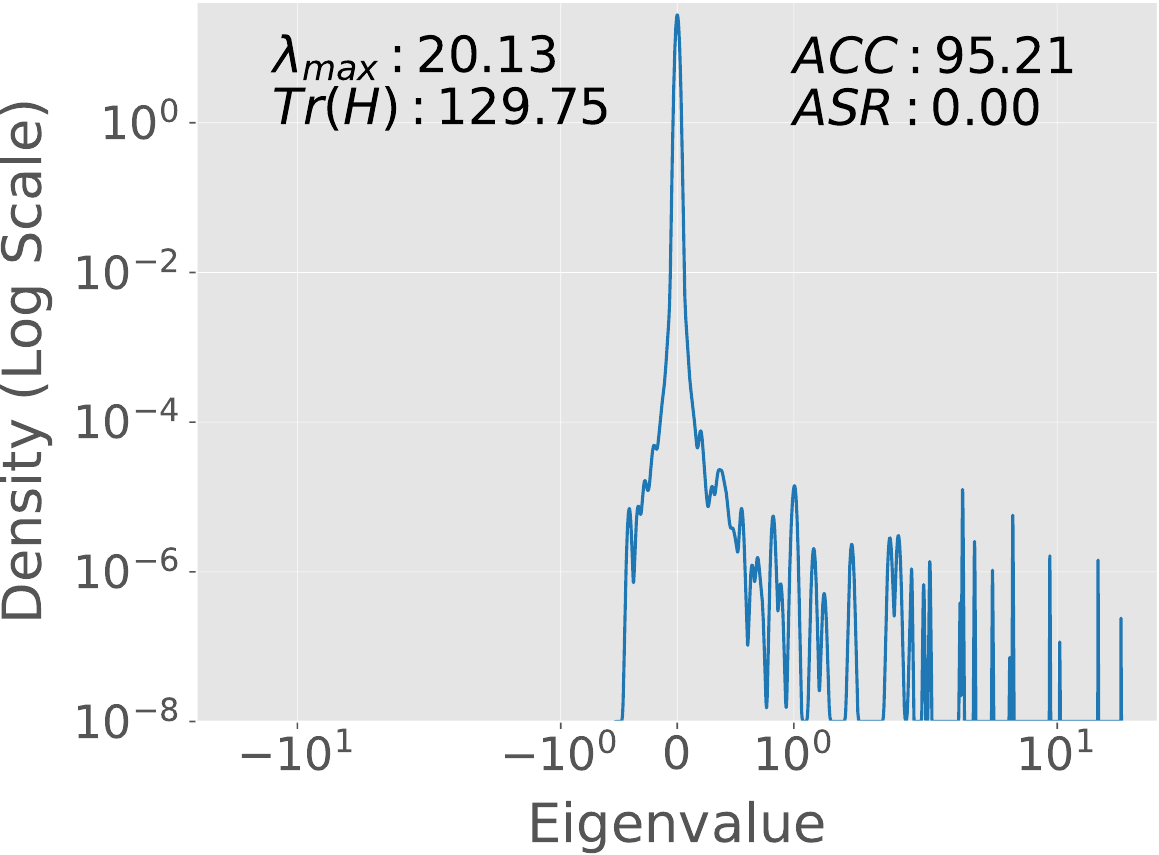}
    \caption{ \scriptsize Benign Model}
    \label{fig:ls-cln}
  \end{subfigure}
  \hfill
    \begin{subfigure}{0.235\linewidth}
    \includegraphics[width=1\linewidth]{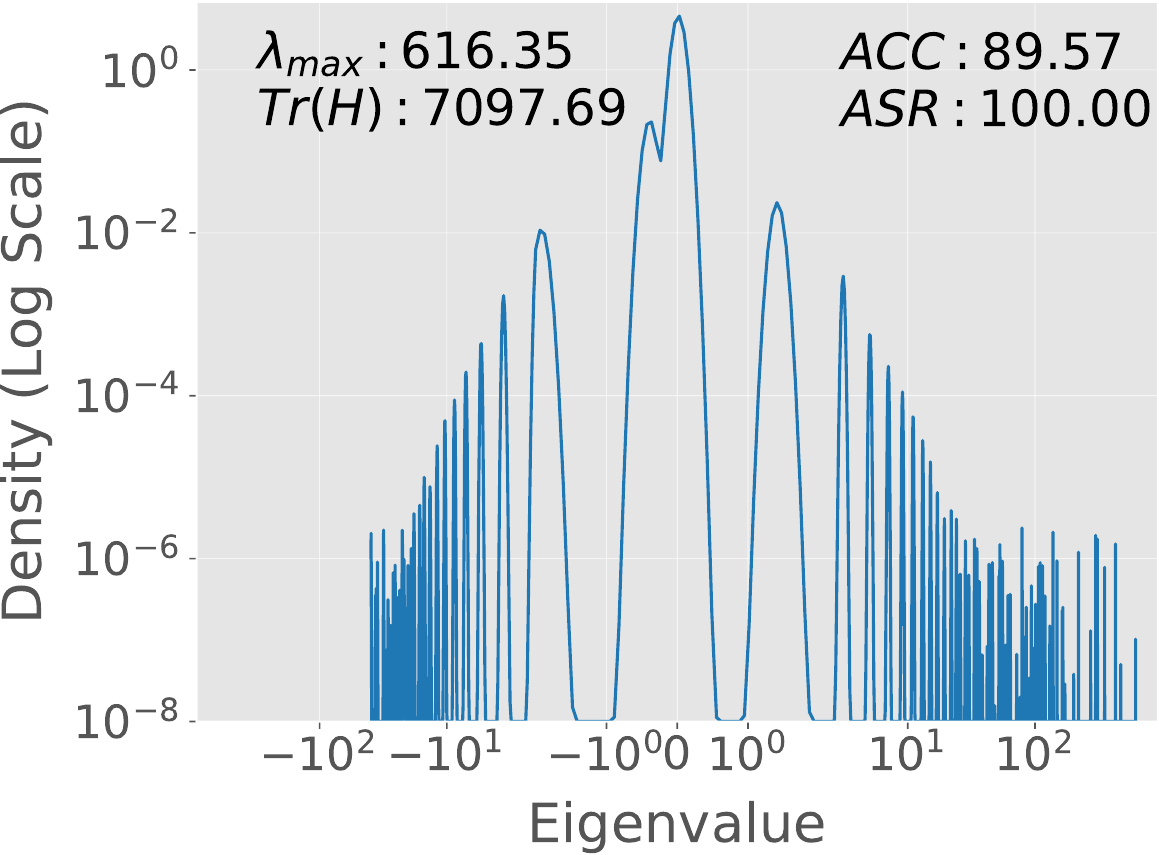}
    \caption{ \scriptsize Backdoor Model}
    \label{fig:ls-bd}
  \end{subfigure}
  \hfill
\begin{subfigure}{0.235\linewidth}
    \includegraphics[width=1\linewidth]{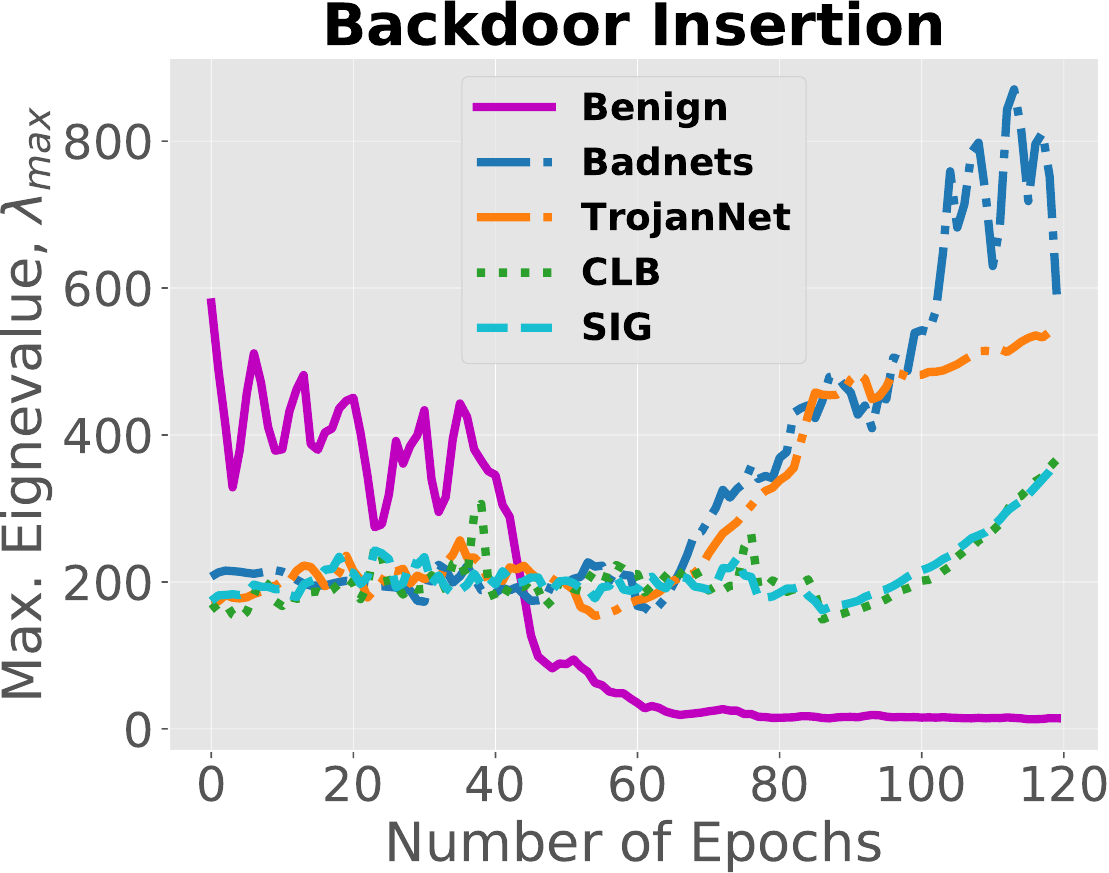}
    \caption{\scriptsize $\lambda_\mathsf{max}$ vs. Epochs}
    \label{fig:backdoor_eigens}
\end{subfigure}
\hfill
\begin{subfigure}{0.235\linewidth}
    \includegraphics[width=1\linewidth]{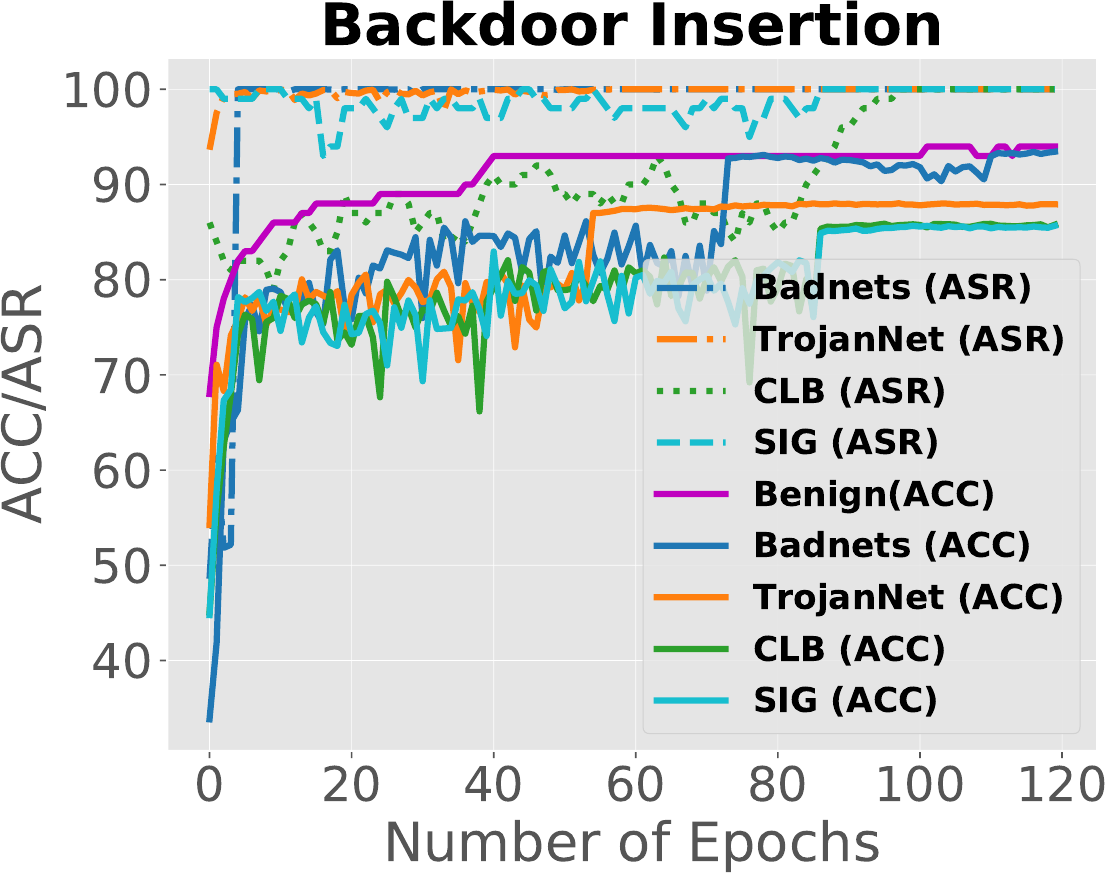}
    \caption{\scriptsize  ACC/ASR vs. Epochs}
    \label{fig:backdoor_asr}
\end{subfigure}

  \caption{ a \& b) \textbf{Eigen spectral density plots of loss Hessian} for benign and backdoor (TrojanNet~\citep{liu2017trojaning}) models. In each plot, the maximum eigenvalue ($\lambda_\mathsf{max}$), the trace of Hessian ($\mathsf{Tr}(H)$), clean test accuracy (ACC), and attack success rate (ASR) are also reported. Here, low $\lambda_\mathsf{max}$ and $\mathsf{Tr}(H)$ hints at the presence of a smoother loss surface, which often results in low ASR and high ACC. Compared to a benign model, a backdoor model tends to reach sharper minima, as shown by the larger range of eigenvalues (x-axis).  c) The convergence phenomena over the course of training. As the backdoor model converges to sharper minima, d) both ASR and ACC increase (around 80 epochs). We use the CIFAR10 dataset with a PreActResNet18~\citep{he2016identity} architecture for all evaluations.}
  \label{fig:eigen_spectral}
  
\end{figure*}

\section{Related Work}
This section discusses the existing works related to the backdoor attack methods and the defenses for backdoor attacks, as well as the works related to the smoothness analysis of DNN.

\noindent\textbf{Backdoor Attacks.} Backdoor attacks in deep learning models aim to manipulate the model to predict adversary-defined target labels in the presence of backdoor triggers in input while the model predicts true labels for benign input. \citep{manoj2021excess} formally analyzed DNN and revealed the intrinsic capability of DNN to learn backdoors. Backdoor triggers can exist in the form of dynamic patterns, a single pixel~\citep{tran2018spectral}, sinusoidal strips~\citep{barni2019new}, human imperceptible noise~\citep{zhong2020backdoor}, natural reflection~\citep{liu2020reflection}, adversarial patterns~\citep{zhang2021advdoor}, blending backgrounds~\citep{chen2017targeted}, hidden trigger~\citep{saha2020hidden}, etc. Based on target labels, existing backdoor attacks can generally be classified as poison-label or clean-label backdoor attacks. In poison-label backdoor attack, the target label of the poisoned sample is different from its ground-truth label, e.g., BadNets~\citep{gu2019badnets}, Blended attack~\citep{chen2017targeted}, SIG attack~\citep{barni2019new}, WaNet~\citep{nguyen2021wanet}, Trojan attack~\citep{liu2017trojaning}, and BPPA~\citep{wang2022bppattack}. Contrary to the poison-label attack, a clean-label backdoor attack doesn't change the label of the poisoned sample~\citep{turner2018clean, huang2022backdoor,zhao2020clean}. \cite{saha2022backdoor} studied backdoor attacks on self-supervised learning, and \cite{ahmed2023ssda} analyzed the effects of backdoor attacks on domain adaptation problems. All these attacks emphasized the severity of backdoor attacks and the necessity of efficient removal methods.

\noindent\textbf{Backdoor Defenses.} 
Defense against backdoor attacks can be classified as training time defenses and inference time defenses. Training time defenses include model reconstruction approach~\cite{zhao2020bridging,li2021neural}, poison suppression approach ~\cite{hong2020effectiveness,du2019robust,borgnia2021strong}, and pre-processing approaches~\cite{li2021anti,doan2020februus}. Although training time defenses are often successful, they suffer from huge computational burdens and are less practical in enforcing the defense pipeline while training, considering attacks during DNN outsourcing. Inference time defenses are mostly based on pruning approaches such as~\cite{ koh2017understanding, ma2019nic, tran2018spectral,diakonikolas2019sever,steinhardt2017certified}. 
Pruning-based approaches are typically based on model vulnerabilities to backdoor attacks---finding the backdoor-sensitive model parameters/neurons (often involving computationally expensive searching approaches) and subsequently pruning those sensitive parameters. For example, MCR~\cite{zhao2020bridging} and CLP~\cite{zheng2022data} analyzed node connectivity and channel Lipschitz constant to detect backdoor vulnerable neurons. 
Adversarial Neuron Perturbations (ANP)~\citep{wu2021adversarial} adversarially perturbs the DNN weights by employing and pruning bad neurons based on pre-defined thresholds. 
The disadvantage of such \emph{pre-defined thresholds} is that they can be dataset or attack-specific. ANP also suffers from performance degradation when the validation data size is too small. A more recent technique, Adversarial Weight Masking (AWM)~\citep{chai2022one}, has been proposed to circumvent the issues of ANP by replacing the adversarial weight perturbation module with an adversarial input perturbation module. Specifically, AWM solves a bi-level optimization for recovering the backdoor trigger distribution. Notice that both of these SOTA methods rely heavily on the computationally expensive adversarial search in the input or weight space, limiting their applicability in practical settings. I-BAU~\citep{zeng2021adversarial} also employs similar adversarial search-based criteria for backdoor removal. Recently, \cite{zhu2023enhancing} proposed a regular weight fine-tuning (FT) technique that employs popular sharpness-aware minimization (SAM)~\citep{foret2021sharpnessaware} optimizer to remove the effect of backdoor. However, a na\"ive addition of SAM to the FT leads to poor clean test accuracy after backdoor purification. Moreover, SAM is designed to train modes for general purposes involving two forward passes in each iteration, affecting the overall purification time of FT-SAM.     
RNP~\cite{li2023reconstructive} proposed to purify the backdoor in multiple stages--neuron unlearning, filter recovery (masking is required), and filter pruning. A concurrent work~\cite{karim2024augmented} proposed to fine-tune a backdoor model with MixUp augmented validation set to remove the backdoor. Compared to these existing defenses, our proposed approach is both computationally efficient (requires significantly less time) and performs better in removing backdoors and retaining clean accuracy. 

\noindent\textbf{Smoothness Analysis of DNN.} 
Having smoothness properties of an optimization algorithm is provably favorable for convergence~\citep{boyd2004convex}. Accordingly, there have been a substantial number of works on the smoothness analysis of the DNN training process, e.g.,~\citep{cohen2019certified, foret2021sharpnessaware,kwon2021asam}. \cite{jastrzebski2020break} showed that spectral norm and the trace of loss-Hessian could be used as proxies to measure the smoothness of a DNN model. However, to our knowledge, {\em no prior works either analyze the smoothness properties of a backdoor model or leverage these properties to design a backdoor purification technique}. One example could be the use of a second-order optimizer that usually helps the model converge to smooth minima. However, employing such an optimizer makes less sense considering the computational burden involving loss Hessian. A better alternative to a second-order optimizer is Fisher-information matrix-based natural gradient descent (NGD)~\citep{amari1998natural}. Nevertheless, NGD is also computationally expensive as it requires the inversion of Fisher-information matrix. 

\section{Threat Model}
This section presents the threat model under consideration by discussing the backdoor attack model and defense goal from a backdoor attack.

\noindent\textbf{Attack Model.} Our attack model is consistent with prior works related to backdoor attacks (e.g.,~\citep{gu2019badnets, chen2017targeted, nguyen2021wanet, wang2022bppattack}, etc.). We consider an adversary with the capabilities of carrying a backdoor attack on a DNN model, $f_\theta: \mathbb{R}^d \rightarrow \mathbb{R}^c$, by training it on a poisoned data set $\mathbb{D}_{\mathsf{train}} = \{X_{\mathsf{train}}, Y_{\mathsf{train}}\}$;  $X_{\mathsf{train}}=\{\xbf_i \}_{i=1}^{N_s}, Y_{\mathsf{train}}=\{y_i \}_{i=1}^{N_s}$ where $N_s$ is the total number of training samples. Here, $\theta$ is the parameters of the model, $d$ is the input data dimension, and $c$ is the total number of classes. Each input $\xbf\in X_{\mathsf{train}}$ is labeled as $y\in \{1,2,\cdots, c\}$. The data poisoning happens through a specific set of triggers that can only be accessed by the attacker. The adversary goal is to train the model in a way such that any triggered samples $\xbf_b = \xbf \oplus \delta \in \mathbb{R}^d$ will be classified to an adversary-set target label ${y_b}$, i.e., $\argmax(f_\theta(\xbf_b)) = {y_b}\neq y$. Here, $\xbf$ is a clean test sample, and $\delta \in \mathbb{R}^d$ represents the trigger pattern with the properties of $||\delta|| \leq \epsilon$; where $\epsilon$ is the trigger magnitude determined by its shape, size, and color. Note that \(\oplus\) operator can be any specific operation depending on how an adversary designed the trigger. We define the \emph{poison rate (PR)} as the ratio of poison and clean data in $\mathbb{D}_{\mathsf{train}}$. An attack is considered successful if the model behaves as $\argmax{(f_\theta(\xbf))} = y$ and $\argmax{(f_\theta(\xbf_b))} = {y_b}$, where, $y$ is the true label for $\xbf$. We use attack success rate (ASR) (i.e., predicting backdoored samples as adversary-set target label) to measure the effectiveness of a particular attack. Figure~\ref{fig:method_illustration}a illustrates the attack model under consideration in this work.


\noindent\textbf{Defense Goal.} We assume the defender has complete control over the pre-trained model $f_\theta(.)$, e.g., access to model parameters. Hence, we consider a defender with a task to purify the backdoor model $f_\theta(.)$ using a small clean validation set $\mathbb{D}_{\mathsf{val}} = \{X_{\mathsf{val}}, Y_{\mathsf{val}}\}$ (usually $0.1\sim10\%$ of the training data depending on the dataset). The goal is to repair the model such that it becomes immune to attack, i.e., $\argmax{(f_{\theta_p}(\xbf_b))} = {y}$, where $f_{\theta_p}$ is the purified model. Note that the defense method must retain clean accuracy of $f_{\theta}(.)$ for benign inputs even if the model has no backdoor.

\begin{figure*}
    \centering
    \includegraphics[width=\linewidth, trim=0.2in 9.35in 0.35in 0.12in, clip]{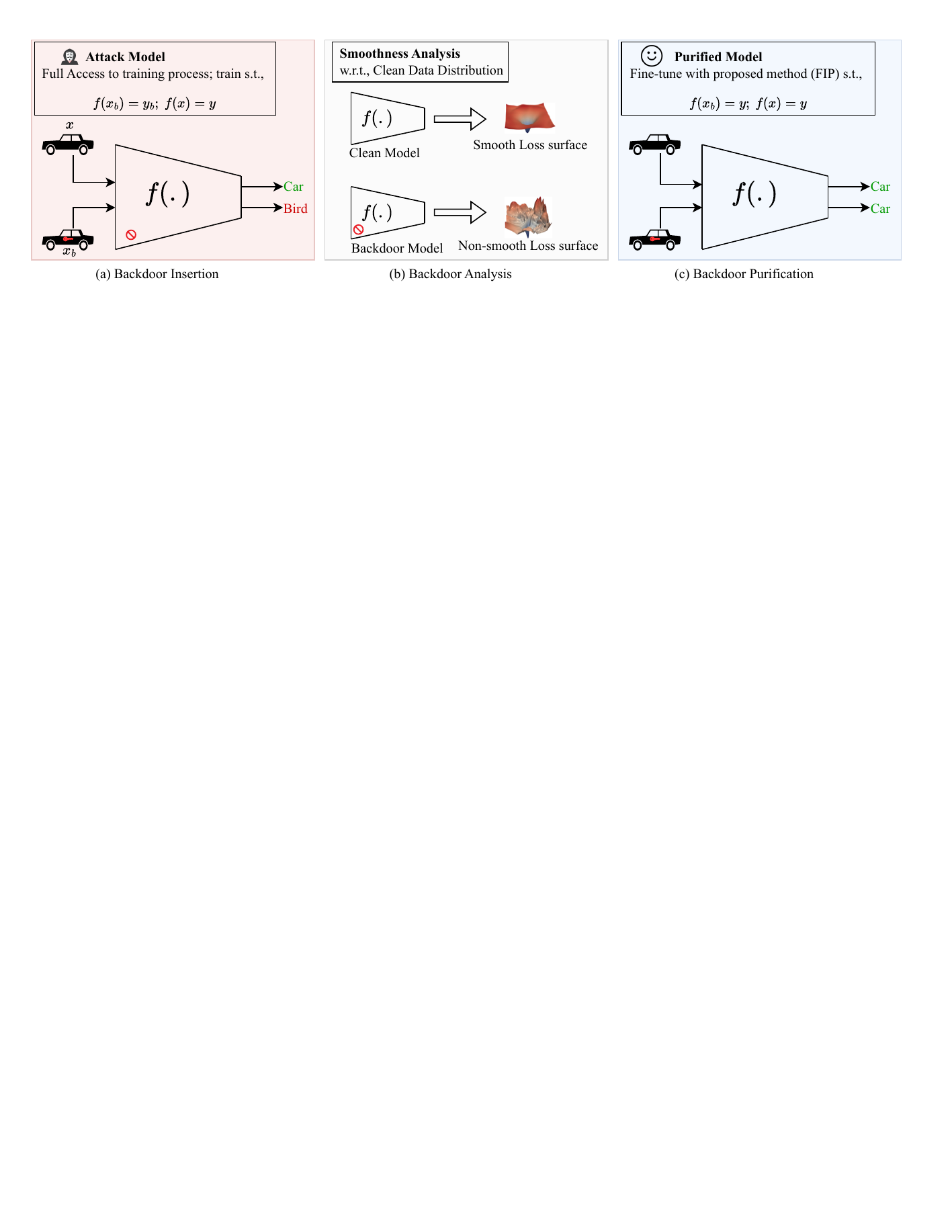}
    \caption{An illustration of the proposed backdoor model analysis and corresponding purification method. In Figure~\ref{fig:method_illustration}a, we assume a standard backdoor insertion scenario where the attacker has full control over the training process. Figure~\ref{fig:method_illustration}b illustrates our observation following the smoothness analysis of a pre-trained model. Figure~\ref{fig:method_illustration}c shows that a model purified via the proposed method FIP is immune to backdoor trigger and can predict true label in the presence of a backdoor trigger. Note, figures to illustrate loss surface (in Figure~\ref{fig:method_illustration}b) are taken from~\cite{foret2021sharpnessaware}.}
    \label{fig:method_illustration}
\end{figure*}

\section{Smoothness Analysis of Backdoor Models}\label{sec:smoothAnalysis}
In this section, we analyze the loss surface geometry of benign and backdoor models. To study the loss curvature properties of different models, we aim to analyze the Hessian of loss (loss-Hessian), $H = \nabla^2_{\theta} \mathcal{L}$, where $\mathcal{L}$ is computed using the {training samples}. The spectral decomposition of symmetric square matrix \(H\) is $H{=[h_{ij}]}=Q\Lambda Q^T$, where $\Lambda =  \mathsf{diag}(\lambda_1, \lambda_2, \cdots, \lambda_N)$ is a diagonal matrix that contains the eigenvalues of $H$ and $Q = [q_1 q_2 \cdots q_N]$, where $q_i$ is the $i^{th}$ eigenvector of H. As a measure for smoothness, we take the spectral norm of $H$, $\sigma (H)= \lambda_1 = \lambda_{max}$, and the trace of the Hessian, $\mathsf{Tr}(H) = \sum_{i=1}^{i=N} h_{ii}$. \emph{Low values for these two proxies} indicate the presence of a \emph{highly smooth loss surface} \citep{jastrzebski2020break}. The Eigen Spectral density plots in Fig.~\ref{fig:ls-cln} and~\ref{fig:ls-bd} elaborate on the optimization of benign and backdoor models. From the comparison of $\lambda_{max}$ and $\mathsf{Tr}(H)$, it can be conjectured that optimization of a benign model leads to a smoother loss surface. Since the main difference between a benign and a backdoor model is that the latter needs to learn two different data distributions (clean and poison), we state the following observation:

\noindent\textbf{Observation 1.} \textit{Having to learn two different data distributions, a backdoor model reaches a sharper minima, i.e., large $\sigma(H)$ and $\mathsf{Tr}(H)$, as compared to the benign model.}      

We support our observation with empirical evidence presented in Fig.~\ref{fig:backdoor_eigens}~and~\ref{fig:backdoor_asr}. Here, we observe the convergence behavior for 4 different attacks over the course of training. Compared to a benign model, the loss surface of a backdoor \emph{becomes much sharper as the model becomes well optimized for both distributions}, i.e., high ASR and high ACC. Backdoor and benign models are far from being well-optimized at the beginning of training. The difference between these models is prominent once the model reaches closer to the final optimization point. As shown in Fig.~\ref{fig:backdoor_asr}, the training becomes reasonably stable after 100 epochs with ASR and ACC near saturation level. Comparing $\lambda_{\mathsf{max}}$ of benign and all backdoor models after 100 epochs, we notice a sharp contrast in Fig.~\ref{fig:backdoor_eigens}. This validates our claim on loss surface smoothness of benign and backdoor models in Observation 1. 
All of the backdoor models have high attack success rates (ASR) and high clean test accuracy (ACC), which indicates that the model had learned both distributions, providing additional support for Observation 1. Similar phenomena for different attacks, datasets, and architectures have been observed; details are provided in Section~\ref{sec:ab-smooth} and \textbf{Appendix~\ref{sec:more_smoothness_analysis}}. 




\noindent\textbf{Theoretical Justification.} 
We discuss the smoothness of backdoor model loss considering the Lipschitz continuity of the loss gradient. Let us first define the \(L-\)Lipschitzness and \(L-\)Smoothness of a function as follows:

\begin{definition}
   \emph{[\(L-\)Lipschitz]} A function \(f(\theta)\) is \(L-\)Lipschitz on a set \(\Theta\), if there exists a constant \(0\leq L<\infty\) such that,
    \[
    ||f(\theta_1) - f(\theta_2)|| \leq L ||\theta_1 -\theta_2||, ~\forall \theta_1, \theta_2 \in \Theta
    \]
\end{definition}

\begin{definition}
    \emph{[\(L-\)Smooth]} A function \(f(\theta)\) is \(L-\)Smooth on a set \(\Theta\), if there exists a constant \(0\leq L <\infty\) such that,
    \[
    ||\nabla_\theta f(\theta_1) - \nabla_\theta f(\theta_2) || \leq L ||\theta_1 - \theta_2||, ~\forall \theta_1, \theta_2 \in \Theta
    \]
\end{definition}

Following prior works~\citep{sinha2018certifiable,liu2020loss,kanai2023relationship} related to the smoothness analysis of the loss function of DNN, we assume the following conditions on the loss:

\begin{assumption}\label{assumption:first}
  The loss function \(\ell(\xbf, \theta)\) satisfies the following inequalities,
  \begin{align}
     & ||\ell(\xbf, \theta_1) - \ell(\xbf, \theta_2)|| \leq K ||\theta_1 - \theta_2||  \\
      & ||\nabla_\theta\ell(\xbf, \theta_1) - \nabla_\theta\ell(\xbf, \theta_2) || \leq L ||\theta_1 - \theta_2|| \label{eqn.assump1}
  \end{align}
  where \(0\leq K<\infty\), \(0\leq L <\infty\), \(\forall \theta_1, \theta_2 \in \Theta\), and \(\xbf\) is any training sample (i.e., input).
\end{assumption}

Using the above assumptions, we state the following theorem:


\begin{theorem}\label{thm:smoothness}
 If the gradient of loss corresponding to clean and poison samples are  $L_c-$Lipschitz and $L_b-$Lipschitz, respectively, then the overall loss (i.e., loss corresponding to training samples {with their ground-truth labels}) of backdoor model is $L_b-$Smooth and $L_c < L_b$. 
\end{theorem}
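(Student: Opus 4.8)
The plan is to model the backdoor training loss as a convex combination (weighted by the poison rate) of the clean-sample loss and the poison-sample loss, and then to show that (i) a sum/convex combination of Lipschitz-gradient functions is again Lipschitz-gradient with constant controlled by the larger of the two constants, and (ii) the poison constant dominates the clean one. Concretely, write the empirical training loss as $\mathcal{L}(\theta) = (1-p)\,\mathcal{L}_c(\theta) + p\,\mathcal{L}_b(\theta)$, where $p$ is the poison rate, $\mathcal{L}_c(\theta) = \frac{1}{|S_c|}\sum_{\xbf \in S_c} \ell(\xbf,\theta)$ is the average loss over the clean subset $S_c$ with ground-truth labels, and $\mathcal{L}_b$ is the analogous average over the poisoned subset. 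By Assumption \ref{assumption:first} and the hypothesis, each per-sample clean gradient is $L_c$-Lipschitz and each per-sample poison gradient is $L_b$-Lipschitz; averaging preserves the bound (triangle inequality on $\nabla\mathcal{L}_c(\theta_1)-\nabla\mathcal{L}_c(\theta_2)$), so $\mathcal{L}_c$ is $L_c$-Smooth and $\mathcal{L}_b$ is $L_b$-Smooth.

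Next I would bound the Lipschitz constant of $\nabla\mathcal{L}$ directly: for any $\theta_1,\theta_2 \in \Theta$,
\begin{align*}
\|\nabla\mathcal{L}(\theta_1) - \nabla\mathcal{L}(\theta_2)\|
&\leq (1-p)\,\|\nabla\mathcal{L}_c(\theta_1) - \nabla\mathcal{L}_c(\theta_2)\| + p\,\|\nabla\mathcal{L}_b(\theta_1) - \nabla\mathcal{L}_b(\theta_2)\| \\
&\leq \big((1-p)L_c + p L_b\big)\,\|\theta_1 - \theta_2\|.
\end{align*}
Since $L_c \le L_b$ we have $(1-p)L_c + pL_b \le L_b$, so $\mathcal{L}$ is $L_b$-Smooth, which is the first claim. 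For the strict inequality $L_c < L_b$, I would argue from Observation 1 together with the standard fact (see \cite{jastrzebski2020break}) that for a twice-differentiable loss the best smoothness constant equals the supremum of the spectral norm of the Hessian over $\Theta$: the backdoor model must additionally fit the poison distribution, whose decision-boundary geometry forces a strictly larger maximal Hessian eigenvalue $\sigma(H)$ along the poison-relevant directions than the clean loss alone induces, i.e. $\sup_\theta \sigma(\nabla^2_\theta \mathcal{L}_b) > \sup_\theta \sigma(\nabla^2_\theta \mathcal{L}_c)$, giving $L_b > L_c$. This is consistent with the empirical sharpness gap documented in Figures \ref{fig:ls-cln}--\ref{fig:backdoor_eigens}.

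The main obstacle is the strict inequality $L_c < L_b$: the smooth combination argument only yields $\le$, and upgrading to $<$ genuinely requires an extra ingredient about the poison distribution rather than a purely formal manipulation. I expect the cleanest route is to tie $L_b$ and $L_c$ to the maximal Hessian eigenvalues via the mean-value/integral form $\nabla\mathcal{L}(\theta_1)-\nabla\mathcal{L}(\theta_2) = \big(\int_0^1 \nabla^2\mathcal{L}(\theta_2 + t(\theta_1-\theta_2))\,dt\big)(\theta_1-\theta_2)$, so that the tightest Lipschitz constant is exactly $\sup_{\theta\in\Theta}\sigma(H(\theta))$, and then invoke Observation 1 (the backdoor model provably/empirically reaches sharper minima, so its attainable $\sigma(H)$ is strictly larger) to separate the constants. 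A careful writeup should also note the mild regularization that $p \in (0,1)$ and that the clean and poison losses are not identical functions on $\Theta$, so the convex combination is nondegenerate.
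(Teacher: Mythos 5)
Your first claim (the $L_b$-Smoothness of the overall loss) is correct but reached by a genuinely different decomposition than the paper's. You write the empirical loss as the convex combination $(1-p)\,\mathcal{L}_c + p\,\mathcal{L}_b$ and bound the gradient difference by $\bigl((1-p)L_c + pL_b\bigr)\|\theta_1-\theta_2\| \le L_b\|\theta_1-\theta_2\|$; the paper instead works per sample, observing that any training point is either clean or poisoned, so $\|\nabla_\theta\ell(\xbf,\theta_1)-\nabla_\theta\ell(\xbf,\theta_2)\| \le \max\{\|\nabla_\theta\ell(\xbf_c,\theta_1)-\nabla_\theta\ell(\xbf_c,\theta_2)\|,\ \|\nabla_\theta\ell(\xbf_b,\theta_1)-\nabla_\theta\ell(\xbf_b,\theta_2)\|\}$ and then argues that the maximum is attained by the poison term. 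Your averaged version in fact yields the slightly sharper constant $(1-p)L_c+pL_b$, which is a small bonus; either route suffices for the smoothness conclusion.

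The genuine gap is in the strict inequality $L_c<L_b$. You propose to derive it from Observation 1 and the empirically observed sharpness gap, but within the paper's logic that is circular: Theorem~\ref{thm:smoothness} is offered as the \emph{theoretical justification} for Observation 1 (via the bound $\sup_\theta \sigma(\nabla^2_\theta\mathcal{L}) \le L$), so the observation cannot be an input to the theorem's proof. The paper's actual route is different and non-circular (though still informal): because the overall loss is evaluated with ground-truth labels, the loss gradient on a poisoned sample is large at the trained parameters $\theta_1$ (the model outputs the target label $y_b$ rather than $y_c$), and a small perturbation of the few backdoor-sensitive parameters $\theta^{(b)}$ destroys the trigger response and collapses that gradient, whereas the clean-sample gradient barely moves because generic features are learned rather than memorized. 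Hence $\|\nabla_\theta\ell(\xbf_b,\theta_1)-\nabla_\theta\ell(\xbf_b,\theta_2)\| > \|\nabla_\theta\ell(\xbf_c,\theta_1)-\nabla_\theta\ell(\xbf_c,\theta_2)\|$ for the same pair $\theta_1,\theta_2$, and taking the smallest admissible Lipschitz constants gives $L_c<L_b$. This argument leans on prior findings about backdoor-sensitive neurons rather than on the Hessian measurements the theorem is meant to explain. Your integral-form remark tying the optimal Lipschitz constant to $\sup_\theta\sigma(H(\theta))$ is a good instinct, but to close the gap you would still need a reason, independent of Observation 1, why the poison samples inflate that supremum.
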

Theorem~\ref{thm:smoothness} describes the nature of the overall loss of backdoor model resulting from both clean and poison samples. 

To infer the characteristics of smoothness of overall loss from Theorem~\ref{thm:smoothness}, let us consider the results from Keskar~et~al.~\cite{keskar2016large}. \citep{keskar2016large} shows that the loss-surface smoothness of $\mathcal{L}$ for differentiable $\nabla_{\theta} \mathcal{L}$ can be related to $L-$Lipschitz of $\nabla_{\theta} \mathcal{L}$ as 
\begin{equation}\label{eqn-lsigma}
 \sup_{\theta} \sigma(\nabla^2_{\theta} \mathcal{L}) \leq L.   
\end{equation}

Therefore, inferring from Eq.~(\ref{eqn-lsigma}), Theorem~\ref{thm:smoothness} supports our empirical results related to backdoor and benign model optimization as larger Lipschitz constant implies sharper minima and the Lipchitz constant of backdoor model is \textit{strictly greater} than benign model (i.e., $L_c < L_b$).

\textbf{Remark. }As per the smoothness analysis of backdoor model, applying sharpness-aware optimization techniques would remove the effect of the backdoor from a model. One such ad-hoc method (FT-SAM~\cite{zhu2023enhancing}) is enhancing FT-based backdoor removal with SAM~\cite{foret2021sharpnessaware} optimizer. However, SAM is a general-purpose optimization technique requiring a double forward pass in each iteration, which affects the time required to  
purify the model (ref. Figure~\ref{fig:run-time}), and a lack of knowledge of model parameters toward clean data distribution affects the clean accuracy of the purified model (ref. Table~\ref{tab:main},~\ref{tab:multi_label},~\ref{tab:action_rec},~\ref{tab:3d_point_cloud}), necessitating a backdoor-specific method discussed in the following section.

\section{Fisher Information guided Purification (FIP)} \label{method}
Our proposed backdoor purification method---Fisher Information guided Purification (FIP) consists of two novel components: (i) \textit{Backdoor Suppressor} for backdoor purification and (ii) \textit{Clean Accuracy Retainer} to preserve the clean test accuracy of the purified model.

\noindent\textbf{Backdoor Suppressor.} Let us consider a backdoor model $f_\theta: \mathbb{R}^d \rightarrow \mathbb{R}^c$ with parameters $\theta \in \mathbb{R}^N$ to be fitted (fine-tuned) with input (clean validation) data $\{(\xbf_i, y_i)\}_{i=1}^{|\mathbb{D}_{\mathsf{val}}|}$ from an input data distribution $P_{\xbf,y}$, where $\xbf_i \in X_{\mathsf{val}}$ is an input sample and $y_i \in Y_{\mathsf{val}}$ is its label. We fine-tune the model by solving the following:
\begin{equation}\label{vanillaFT}
    \argmin_{\theta} ~\mathcal{L}(\theta),
\end{equation}
where, \(\mathcal{L}(\theta)  = \mathcal{L}(y,f_\theta(\xbf)) = \sum_{(x_i, y_i)\in \mathbb{D}_{\mathsf{val}}} \left (-\log[f_\theta(\xbf_i)]_{y_i}\right)\) 
is the empirical full-batch cross-entropy (CE) loss. Here, $[f_\theta(\xbf)]_y$ is the $y^{th}$ element of $f_\theta(\xbf)$. Our smoothness study in Section~\ref{sec:smoothAnalysis} showed that backdoor models are optimized to sharper minima as compared to benign models. Intuitively, re-optimizing the backdoor model to a smooth minima would effectively remove the backdoor. However, the \emph{vanilla fine-tuning} objective presented in Eq.~(\ref{vanillaFT}) is not sufficient to effectively remove the backdoor as we are not using any smoothness constraint or penalty.      

To this end, we propose to regularize the spectral norm of loss-Hessian \(\sigma(H)\) in addition to minimizing the cross entropy-loss $\mathcal{L}(\theta)$ as follows,
\vspace{-0.7mm}
\begin{equation}\label{eqn:ce_loss_constraint}
    \argmin_{\theta}  ~\mathcal{L}(\theta) + \sigma(H).
\end{equation}
By explicitly regularizing the $\sigma(H)$, we intend to obtain smooth optimization of the backdoor model. However, the calculation of $H$, in each iteration of training has a huge computational cost. Given the objective function is minimized iteratively, it is not feasible to calculate the loss Hessian at each iteration. Additionally, the calculation of \(\sigma(H)\) will further add to the computational cost. Instead of directly computing $H$ and \(\sigma(H)\), we analytically derived a computationally efficient 
upper-bound of \(\sigma(H)\) in terms of 
\(\mathsf{Tr}(H)\) as follows,


\begin{lemma}\label{lemma}
The spectral norm of loss-Hessian \(\sigma(H)\) is upper-bounded by
\(
\sigma(H) \leq \mathsf{Tr}(H) \approx \mathsf{Tr}(F)
\),
where 
\begin{equation}\label{F}
F = \Ee_{(\xbf,y)\sim P_{\xbf,y}}\left[\nabla_\theta\mathsf{log}[f_\theta(\xbf)]_y\cdot (\nabla_\theta\mathsf{log}[f_\theta(\xbf)]_y)^T\right]    
\end{equation}
is the Fisher-Information Matrix (FIM). 
\end{lemma}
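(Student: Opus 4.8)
The plan is to establish the two parts of the statement separately: the exact inequality $\sigma(H)\le \mathsf{Tr}(H)$, and then the approximation $\mathsf{Tr}(H)\approx\mathsf{Tr}(F)$.

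For the first inequality I would use the spectral decomposition $H=Q\Lambda Q^T$ already introduced in Section~\ref{sec:smoothAnalysis}, with eigenvalues $\lambda_1=\lambda_{\max}\ge\lambda_2\ge\cdots\ge\lambda_N$. Since the (pre-trained) backdoor model has converged to a local minimum of the training loss, the loss-Hessian $H=\nabla^2_\theta\mathcal{L}$ there is positive semi-definite, so $\lambda_i\ge 0$ for every $i$. Then $\sigma(H)=\lambda_1\le\sum_{i=1}^N\lambda_i=\mathsf{Tr}(H)$, since each discarded eigenvalue is non-negative. (If one only assumes $H$ is approximately PSD, the same bound holds up to the small negative mass $\sum_{\lambda_i<0}|\lambda_i|$, which I would simply absorb into the $\approx$.)

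For the second step I would write the per-sample loss as the negative log-likelihood $\ell(\xbf,\theta)=-\log[f_\theta(\xbf)]_y$, so that $\mathcal{L}(\theta)$ is (up to the scaling by $|\mathbb{D}_{\mathsf{val}}|$) the expectation of $\ell$ over $P_{\xbf,y}$. A direct differentiation, writing $g=[f_\theta(\xbf)]_y$ and using $\nabla^2_\theta\log g=\tfrac{\nabla^2_\theta g}{g}-\nabla_\theta\log g\,(\nabla_\theta\log g)^T$, gives
\[
\nabla^2_\theta\ell(\xbf,\theta)=\nabla_\theta\log[f_\theta(\xbf)]_y\,(\nabla_\theta\log[f_\theta(\xbf)]_y)^T-\frac{\nabla^2_\theta [f_\theta(\xbf)]_y}{[f_\theta(\xbf)]_y}.
\]
Taking expectations and using that $\sum_y[f_\theta(\xbf)]_y\equiv 1$ implies $\sum_y\nabla^2_\theta[f_\theta(\xbf)]_y=0$, the second term has expectation zero when the label expectation is taken under the model's own predictive distribution, and is negligible near convergence when $f_\theta(\cdot\,|\,\xbf)$ is close to the data-conditional distribution. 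Hence $H\approx F$ with $F$ as in Eq.~(\ref{F}), and in particular $\mathsf{Tr}(H)\approx\mathsf{Tr}(F)=\E_{(\xbf,y)\sim P_{\xbf,y}}\!\big[\|\nabla_\theta\log[f_\theta(\xbf)]_y\|^2\big]$. Chaining this with the first inequality yields $\sigma(H)\le\mathsf{Tr}(H)\approx\mathsf{Tr}(F)$.

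The main obstacle is making the ``$\approx$'' honest: the identity $H=F$ is exact only for the ``true'' Fisher (label expectation under $f_\theta(\cdot\,|\,\xbf)$), whereas Eq.~(\ref{F}) with the observed labels is the ``empirical'' Fisher; the discrepancy is governed by the residual term $\nabla^2_\theta[f_\theta(\xbf)]_y/[f_\theta(\xbf)]_y$ averaged against $p_{\mathrm{data}}-f_\theta$, which is small precisely in the regime of interest (a well-fitted model being fine-tuned on clean data). I would therefore record this as the standing assumption, as is standard in the natural-gradient / FIM literature, and emphasize that the inequality $\sigma(H)\le\mathsf{Tr}(H)$ itself is unconditional once PSD-ness of $H$ at the minimum is granted, so only the FIM replacement carries the approximation.
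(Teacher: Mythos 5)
Your proof is correct and follows essentially the same route as the paper's: the inequality $\sigma(H)\le\mathsf{Tr}(H)$ via the trace being the sum of (nonnegative) eigenvalues, and the replacement of $\mathsf{Tr}(H)$ by $\mathsf{Tr}(F)$ via the standard Fisher-as-Hessian-proxy argument. The only difference is that you make explicit two things the paper leaves implicit---the positive semi-definiteness of $H$ at the converged minimum (without which the eigenvalue bound would fail) and the empirical-versus-true Fisher discrepancy behind the ``$\approx$''---whereas the paper simply asserts the first step as trivial and cites Amari's natural-gradient work for the second.
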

\begin{proof}
The inequality \(\sigma(H)\leq \mathsf{Tr}(H)\) follows trivially as \(\mathsf{Tr}(H)\) of symmetric square matrix \(H\) is the sum of all eigenvalues of \(H\),~\(\mathsf{Tr}(H) = \sum_{\forall i} \lambda_i \geq \sigma(H)\). The approximation of \(\mathsf{Tr}(H)\) using \(\mathsf{Tr}(F)\) follows the fact that $F$ is negative expected Hessian of log-likelihood and used as a proxy of Hessian $H$~\citep{amari1998natural}. 
\end{proof}


Following Lemma~\ref{lemma}, we adjust our objective function described in Eq.~(\ref{eqn:ce_loss_constraint}) to
\begin{equation}\label{eqn:loss}
    \argmin_{\theta}  ~\mathcal{L}(\theta) + \eta_F \mathsf{Tr}(F),
\end{equation}
where $\eta_F$ is a regularization constant. Optimizing Eq.~(\ref{eqn:loss}) will force the backdoor model to converge to smooth minima.  Even though this would purify the backdoor model, the clean test accuracy of the purified model may suffer due to significant changes in $\theta$. To avoid this, we propose an additional but much-needed regularizer to preserve the clean test performance of the original model. 


\noindent\textbf{Clean Accuracy Retainer.} In a backdoor model, some neurons or parameters are more vulnerable than others. The vulnerable parameters are believed to be the ones that are sensitive to poison or trigger data distribution \citep{wu2021adversarial}. In general, CE loss does not discriminate whether a parameter is more sensitive to clean or poison distribution. Such lack of discrimination may allow drastic or unwanted changes to the parameters responsible for learned clean distribution. This usually leads to sub-par clean test accuracy after purification, and it requires additional measures to fix this issue. To this end, we introduce a novel \textit{clean distribution aware regularization} term as,
\[
L_{r} = \sum_{\forall i} \mathsf{diag} (\bar{F})_i \cdot (\theta_{i} - \bar{\theta}_{i})^2.
\]
Here, $\bar{\theta}$ is the parameter of the initial backdoor model and remains fixed throughout the purification phase. $\bar{F}$ is FIM computed only once on $\bar{\theta}$ and also remains unchanged during purification. 
$L_{r}$ is a product of two terms: i) an error term that accounts for the deviation of $\theta$ from $\bar{\theta}$;
ii) a vector, $\mathsf{diag}(\bar{F})$, consisting of the diagonal elements of FIM $(\bar{F})$. As the first term controls the changes of parameters \emph{w.r.t.} $\bar{\theta}$, it helps the model to remember the already learned distribution. However, learned data distribution consists of both clean and poison distribution. To explicitly force the model to remember the \emph{clean distribution}, we compute $\bar{F}$ using a \emph{clean} validation set; with a similar distribution as the learned clean data. Note that $\mathsf{diag}(\bar{F})_i$ represents the square of the derivative of log-likelihood of clean distribution \emph{w.r.t.} $\bar{\theta}_{i}$, $[\nabla_{\bar{\theta}_{i}}\mathsf{log}[f_\theta(\xbf)]_y]^2$~(ref. Eq.~(\ref{F})). In other words, $\mathsf{diag}(\bar{F})_i$ is the measure of the importance of $\bar{\theta}_{i}$ towards remembering the learned clean distribution. If $\mathsf{diag}(\bar{F})_i$ has higher importance, we allow minimal changes to $\bar{\theta}_{i}$ over the purification process.
This careful design of such a regularizer significantly improves clean test performance.


Finally, to purify the backdoor model as well as to preserve the clean accuracy, we formulate the following objective function as
\begin{equation}\label{eqn:loss2}
    \argmin_{\theta}  ~\mathcal{L}(\theta) + \eta_F \mathsf{Tr}(F) + \frac{\eta_r}{2}L_{r},
\end{equation}
where $\eta_F$ and $\eta_r$ are regularization constants.

\begin{table*}[t]
\footnotesize
\centering
 \caption{ Removal Performance (\%) of FIP and other defenses in \textbf{single-label settings}. Backdoor removal performance, i.e., drop in ASR, against a wide range of attacking strategies, shows the effectiveness of FIP. We use a poison rate of 10\% for CIFAR10 and 5\% for ImageNet. For GTSRB, the poison rate is $10\%$. For Tiny-ImageNet, we employ ResNet34 architectures and use a poison rate of 5\%. For Tiny-ImageNet and ImageNet, we report performance on successful attacks (ASR $\sim$ 100\%) only. Average drop ($\downarrow$) indicates the \% changes in ASR/ACC compared to the baseline, i.e., \emph{No Defense}. A higher ASR drop and lower ACC drop are desired for a good defense.}

\scalebox{1}{
\begin{tabular}{c|c|cc|cc|cc|cc|cc|cc|cc}
\toprule
\multirow{2}{*}{Dataset} & Method & \multicolumn{2}{c|}{\begin{tabular}[c|]{@{}c@{}}No Defense\end{tabular}} & \multicolumn{2}{c|}{ANP} & \multicolumn{2}{c|}{I-BAU} & \multicolumn{2}{c|}{AWM} & \multicolumn{2}{c|}{FT-SAM} & \multicolumn{2}{c|}{RNP}  &  \multicolumn{2}{|c}{FIP  (Ours)}\\ \cmidrule{2-16}
 & Attacks & ASR &ACC & ASR &ACC & ASR &ACC & ASR &ACC & ASR &ACC & ASR &ACC & ASR &ACC  \\ \cmidrule{1-16}
\multirow{12}{*}{\footnotesize CIFAR-10} 
  &  \emph{Benign} & 0 & 95.21 & 0 & 92.28 & 0 & 93.98&0&93.56&0& 93.80 & 0 & 93.16 & 0 & \textbf{94.10}  \\
 &   Badnets & 100 & 92.96 & 6.87 & 86.92 & 2.84 & 85.96&9.72&87.85&3.74&86.17 & 2.75 & 88.46 & \textbf{1.86} &\textbf{ 89.32} \\
 &   Blend & 100 & 94.11  & 5.77 & 87.61 & 7.81 & 89.10&6.53&89.64&2.13&88.93 & 0.91 & 91.53 &\textbf{ 0.38} & \textbf{92.17} \\
  &  Troj-one & 100 & 89.57& 5.78 & 84.18& 8.47 & {85.20} & 7.91&\textbf{87.50}&5.41&86.45 & 3.84 & 87.39   &\textbf{2.64} & 87.21 \\
  &  Troj-all & 100 & 88.33 & 4.91 & 84.95& 9.53 & 84.89 & 9.82&84.97&3.42&84.60 & 4.02 & 85.80 & \textbf{2.79} & \textbf{86.10} \\
 &   SIG & 100 & 88.64 & 2.04 & 84.92 & 1.37 & 83.60 &2.12&83.57&0.73&83.38& \textbf{0.51} & 86.46 &  0.92 & \textbf{86.73}  \\
  &  Dyn-one & 100 & 92.52 & 8.73 & 88.61 & 7.78 & 86.26 &6.48&88.16&3.35&88.41& 8.61 & 90.05 & \textbf{1.17} & \textbf{90.97} \\
 &  Dyn-all & 100 & 92.61 & 7.28 & 88.32 & 8.19 & 84.51 &6.30&89.74&2.46&87.72& 10.57 & 90.28 & \textbf{1.61} & \textbf{91.19}\\
 &  CLB & 100 & 92.78 & 5.83 & 89.41 & 3.41 & 85.07 &5.78&86.70&\textbf{1.89}&87.18& 6.12 & 90.38 & 2.04 & \textbf{91.37}\\
 &  CBA & 93.20 & 90.17 & 25.80 & 86.79 &24.11&85.63&26.72&85.05&18.81&85.58& 17.72 & 86.40 & \textbf{14.60}&\textbf{86.97}\\
 & FBA & 100&90.78 &11.95 &86.90 &16.70&87.42&10.53&87.35&10.36&87.06& 9.48 & \textbf{87.63} & \textbf{6.21}&87.30\\
 & LIRA & 99.25 & 92.15 & 6.34 & 87.47 & 8.51 & 89.61 & 6.13 & 87.50 & 3.93 & 88.70 & 11.83 & 87.59 &\textbf{2.53}&\textbf{89.82} \\ 
 & WaNet&98.64&92.29&9.81&88.70&7.18&89.24&8.72&85.94&2.96&87.45& 8.10 & \textbf{90.26} &\textbf{2.38}&89.67\\
 & ISSBA&99.80&92.80&10.76&85.42&9.82&89.20&9.48&88.03&\textbf{3.68}&88.51& 7.58 & 88.62 &  4.24&\textbf{90.18}\\
& BPPA & 99.70 & 93.82 & 13.94 & 89.23 & 10.46&88.42&9.94&89.68&7.40&89.94& 9.74 & 91.37 & \textbf{5.14}&\textbf{92.84}\\
 \cmidrule{2-16} 
& {Avg. Drop} & - &-  &  $90.34\downarrow$ & $4.57\downarrow$ &  $90.75\downarrow$ &  $4.96\downarrow$ &$90.31\downarrow$&$4.42\downarrow$&$94.29\downarrow$&$4.53\downarrow$ & $92.06\downarrow$ & $2.95\downarrow$ & {\textbf{95.86}} $\downarrow$ &{\textbf{2.28}} $\downarrow$ \\ 
 \cmidrule{1-16}
\multirow{9}{*}{\footnotesize GTSRB} 
  & \emph{Benign} & 0 & 97.87 & 0&93.08 & 0 & 95.42&0&96.18&0&95.32  & 0  & 95.64 & 0   &\textbf{96.76}\\
  & Badnets & 100 & 97.38 & 7.36 & 88.16 & 2.35 & 93.17 &2.72&{93.55}&2.84&93.58& 3.93 & 94.57 & \textbf{0.24} & \textbf{96.11} \\
  & Blend & 100 & 95.92 & 9.08&89.32 & 5.91 & 93.02 &4.13&{92.30}&4.96&92.75& 5.85 & 93.41 &\textbf{2.41} & \textbf{94.16}  \\
  & Troj-one & 99.50 & 96.27 & 6.07&90.45 & 3.81 & 92.74&3.04&93.17&2.27&93.56 & 4.18 & 93.60 & \textbf{ 1.21} &  \textbf{95.18} \\
  &  Troj-all & 99.71 & 96.08 & 6.48&89.73& 5.16 & 92.51 &2.79&91.28&1.94&92.84&  4.86 & 92.08 &\textbf{1.58} & \textbf{93.77} \\
  &  SIG & 97.13 & 96.93 & 5.93 &91.41 & 8.17 & 91.82 &\textbf{2.64}&91.10&5.32&92.68 & 6.44 & 93.79 &  2.74 & \textbf{95.08}  \\
  &  Dyn-one & 100 & 97.27 & 6.27&91.26 & 5.08 & 93.15 &5.82&{92.54}&1.89&93.52& 7.24 & 93.95 &\textbf{1.51} & \textbf{95.27}  \\
  &  Dyn-all & 100 & 97.05 & 8.84 & 90.42 & 5.49 & 92.89 &4.87&93.98&2.74&93.17& 8.17 & 94.74 & \textbf{1.26} & \textbf{96.14}\\
  & WaNet & 98.19 & 97.31 & 7.16 & 91.57 & 5.02 & 93.68 &4.74 &93.15 & 3.35 & 94.61 & 5.92 & 94.38 & \textbf{1.43} & \textbf{95.86}  \\
  & ISSBA & 99.42 & 97.26 & 8.84 & 91.31 & 4.04 & 94.74 & 3.89 & 93.51 & \textbf{1.08} & 94.47 & 4.80 & 94.27 & 1.20 & \textbf{96.24} \\
  & LIRA & 98.13 & 97.62 & 9.71 & 92.31 & 4.68 & 94.98 & 3.56 & 93.72 & 2.64 & 95.46 & 5.42 & 93.06 &\textbf{1.52}&\textbf{96.54} \\ 
  & BPPA & 99.18 & 98.12 & 12.14 & 93.48 & 9.19 & 93.79&8.63&92.50&5.43&94.22 & 7.55 & 94.69 & \textbf{3.35}& \textbf{96.47} \\ 
\cmidrule{2-16}
 &   {Avg. Drop} & - & -  & 91.03 $\downarrow$ & 6.16 $\downarrow$ & 94.12$\downarrow$ & 3.70 $\downarrow$  &$94.95\downarrow$&$4.26\downarrow$&$96.07\downarrow$&$3.58\downarrow$&  93.35 $\downarrow$ & 3.15 $\downarrow$ &{\textbf{97.51}} $\downarrow$ &{\textbf{1.47}} $\downarrow$\\ 
\cmidrule{1-16}
\multirow{7}{*}{\footnotesize Tiny-ImageNet} 
&\emph{Benign}&0&62.56&0&58.20&0&59.29&0&59.34&0&59.08& 0 & 58.14 &0&\textbf{59.67}\\
&Badnets&100&59.80&8.84&53.58&7.23&54.41&13.29&54.56&\textbf{2.16}&54.81& 4.63 & 55.96 & 2.34&\textbf{57.84}\\
&Trojan&100&59.16&11.77&52.62&7.56&53.76&5.94&{54.10}&8.23&54.28& 5.83 & 54.30 &\textbf{3.38}&\textbf{55.87}\\
&Blend&100&60.11&7.18&52.22&9.58&54.70&7.42&54.19&4.37&54.78& 4.08 & 55.47 & \textbf{1.58}&\textbf{57.48}\\
&SIG&98.48&60.01&12.02&52.18&11.67&53.71&7.31&{53.72}&4.68&54.11& 6.71 & 55.22 & \textbf{2.81}&\textbf{55.63}\\
&CLB&97.71&60.33&10.61&52.68&8.24&54.18&10.68&53.93&3.52&54.02 & 4.87 & 56.92 & \textbf{2.46}&\textbf{57.40} \\ 
&Dynamic& 100& 60.54& 8.36 &52.57& 9.56&54.03 &6.26&54.19&4.26&54.21& 7.23 & 55.80 & \textbf{2.24}&\textbf{57.96}\\
&{WaNet}&99.16&60.35&8.02&52.38&8.45&54.65&8.43&{54.32}&7.84&54.04& 5.66 & 55.19 & \textbf{4.48}&\textbf{56.21}\\
&{ISSBA}&98.42&60.76&{6.26}&53.41&10.64&54.36&11.47&53.83&\textbf{3.72}&55.32& 8.24 & 55.35& {4.25}&\textbf{57.35} \\ 
&{BPPA}&98.52&60.65&11.23&53.03&9.62&54.63&8.85&53.03&5.34&54.48& 10.86 & 56.32 & \textbf{3.89}&\textbf{57.39} \\
\cmidrule{2-16} 
& {Avg. Drop} & - & -  & 89.77 $\downarrow$ & 7.44 $\downarrow$ & 92.97$\downarrow$ & 5.92 $\downarrow$ &$90.29\downarrow$&$6.98\downarrow$&$93.91\downarrow$&$5.85\downarrow$ & 92.69 $\downarrow$ &4.58 $\downarrow$ & {\textbf{96.10}} $\downarrow$ &{\textbf{3.08}} $\downarrow$\\ 
\cmidrule{1-16}
\multirow{12}{*}{\footnotesize ImageNet} 
&\emph{Benign}&0&77.06&0&73.52&0&71.85&0&74.21&0&71.63& 0 & 75.20 & 0&\textbf{75.51}\\
&Badnets&99.24&74.53&6.97&69.37&6.31&68.28&\textbf{0.87}&69.46&1.18&70.44& 7.58 & 70.49 &  1.61&\textbf{71.46}\\
&Troj-one&99.21&74.02&7.63&69.15&7.73&67.14&5.74&69.35&2.86&70.62& 2.94 & 72.17 & \textbf{2.16}&\textbf{72.47}\\
&Troj-all&97.58&74.45&9.18&69.86&7.54&68.20&6.02&69.64&3.27&69.85& 4.81 & 71.45 & \textbf{2.38}&\textbf{72.63}\\
&Blend&100&74.42&9.48&70.20&7.79&68.51&7.45&68.61&2.15&70.91& 5.69 & 70.24 & \textbf{1.83}&\textbf{72.02}\\
&SIG&94.66&74.69&8.23&69.82&4.28&67.08&5.37&70.02&2.47&69.74& 4.36 & 70.73 & \textbf{0.94}&\textbf{72.86}\\
&CLB&95.08&74.14&8.71&69.19&4.37&68.41&7.64&69.70&1.50&70.32& 9.44 & 71.52 & \textbf{1.05}&\textbf{72.75}\\
&{Dyn-one}&98.24&74.80&6.68&69.65&8.32&69.61&8.62&70.17&4.42&70.05& 12.56 & 70.39 &  \textbf{2.62}&\textbf{71.91}\\
&{Dyn-all}&98.56&75.08&13.49&70.18&9.82&68.92&12.68&70.24&4.81&69.90& 14.18 & 69.47 & \textbf{3.77}&\textbf{71.62}\\
&{LIRA}&96.04&74.61&12.86&69.22&12.08&69.80&13.27&69.35&3.16&\textbf{71.38}& 12.31 & 70.50 & \textbf{2.62}&70.73\\
&{WaNet}&97.60&74.48&9.34&68.34&5.67&69.23&6.31&70.02&\textbf{2.42}&69.20& 7.78 & 71.62 & 2.71&\textbf{72.58}\\
&{ISSBA}&98.23&74.38&9.61&68.42&4.50&68.92&8.21&69.51&3.35&70.51& 9.74 & 70.81 & \textbf{2.86}&\textbf{72.17} \\ 
\cmidrule{2-16}  
 &  {Avg. Drop} & - & -  & 88.38 $\downarrow$ &  5.11$\downarrow$ & 90.54$\downarrow$ & 5.95 $\downarrow$ &$90.21\downarrow$&$4.77\downarrow$&$94.80\downarrow$& $4.24\downarrow$& $89.37\downarrow$ & $3.66\downarrow$ & {\textbf{95.44}} $\downarrow$ &{\textbf{2.40}} $\downarrow$\\
\bottomrule
\end{tabular}}
\vspace{-1mm}
 \label{tab:main}
\end{table*}

\begin{table*}[t]
    \centering
    \caption{Performance analysis for the \textbf{multi-label backdoor attack}~\citep{chen2023clean}. Mean average precision (mAP) and ASR of the model, with and without defenses, have been shown.}    
    \scalebox{0.85}{
    \begin{tabular}{l|cc|cc|cc|cc|cc|cc|cc|cc}
        \toprule
        \multirow{2}{*}{Dataset} & \multicolumn{2}{c|}{No defense}  & \multicolumn{2}{c|}{FP} & \multicolumn{2}{c|}{Vanilla FT} & \multicolumn{2}{c|}{MCR}  & \multicolumn{2}{c|}{NAD}  & \multicolumn{2}{c|}{FT-SAM} & \multicolumn{2}{c|}{RNP}  & \multicolumn{2}{|c}{FIP (Ours)} \\ \cmidrule{2-17}
         & ASR & mAP & ASR & mAP & ASR & mAP & ASR & mAP & ASR & mAP & ASR & mAP & ASR & mAP & ASR & mAP \\ \cmidrule{1-17}
        VOC07 & 86.4 & 92.5 & 61.8 & 87.2 & 19.3 & 86.9 & 28.3 & 86.0 & 26.6 & 87.3 & 17.9 & 87.6 & 19.3 & 86.8 & \textbf{16.1} & \textbf{89.4} \\
        VOC12 & 84.8 & 91.9 & 70.2 & 86.1 & 18.5 & 85.3 & 20.8 & 84.1 & 19.0 & 84.9 & 15.2 & 85.7 & 14.6 & 87.1 & \textbf{13.8} & \textbf{88.6} \\
        MS-COCO & 85.6 & 88.0 & 64.3 & 83.8 & 17.2 & 84.1 & 24.2 & 82.5 & 22.6 & 83.4 & \textbf{14.3} & 83.8 & 16.2 & 84.4 & 15.0 & \textbf{85.2} \\
        \bottomrule
    \end{tabular}}
    \label{tab:multi_label}
\end{table*}

\subsection{Fast FIP (f-FIP)} 
In general, any backdoor defense technique is evaluated in terms of removal performance and the time it takes to remove the backdoor, i.e., purification time. It is desirable to have a very short purification time. To this aim, we introduce a few unique modifications to FIP to perform fine-tuning in a more compact space than the original parameter space. 

Let us represent the weight matrices for model with $L$ number of  layers as ${\theta}= [{\theta}_1, {\theta}_2, \cdots {\theta}_L]$. We take spectral decomposition of $\theta_i ={U}_i {\Sigma}_i {V}_i^T \in \mathbb{R}^{M \times N}$, where ${\Sigma}_i=\mathsf{diag}({\sigma}_i)$ and ${\sigma}_i=[{\sigma}_i^1, {\sigma}_i^2, \cdots, {\sigma}_i^M]$ are singular values arranged in descending order. The spectral shift of the parameter space is defined as the difference between singular values of original ${\theta}_i$ and the updated $\hat{\theta}_i$ can be expressed as $\delta_i = [\delta_i^1, \delta_i^2, \cdots, \delta_i^{M}]$. Here, $\delta_i^j$ is the difference between individual singular value ${\sigma}_i^j$. Instead of  updating $\theta$, we update the total spectral shift $\delta = [\delta_1, \delta_2, \cdots, \delta_L]$ as,
\begin{equation}\label{eqn:loss_spectral}
    \argmin_{\delta} ~\mathcal{L}(\delta) + \eta_F \mathsf{Tr}(F) + \frac{\eta_r}{2}L_{r}
\end{equation}

Here, we keep the singular vectors (${U}_i$,${V}_i$) frozen during the updates. We obtain the updated singular values as, $\widehat{\Sigma}_i = \text{diag}(\text{ReLU}({\sigma}_i+\delta_i))$ which gives us the updated weights $\hat{\theta_i} = {U}_i \widehat{\Sigma}_i {V}_i^T$. Fine-tuning the model in the spectral domain reduces the number of tunable parameters and purification time significantly (see Figure~\ref{fig:run-time}).

\noindent{\bf Numerical Example related to f-FIP.} 
Let us consider a convolution layer with a filter size of $5\times5$, an output channel of 256, and an input channel of 128. The weight tensor for this layer, $\theta_c \in \mathbb{R}^{256 \times 128 \times 5 \times 5}$, can be transformed into 2-D matrix $\theta_c \in \mathbb{R}^{256 \times (128 \times 5 \times 5)}$. If we take the SVD of this 2D matrix, we only have 256 parameters ($\sigma$) to optimize instead of 8,19,200 parameters. For this particular layer, we reduce the tunable parameter by 3200$\times$ as compared to vanilla fine-tuning. \emph{By reducing the number of tunable parameters, fast FIP significantly improves the computational efficiency of FIP.} In the rest of the paper, we use f-FIP and FIP interchangeably unless otherwise stated.

\begin{table*}[t]
    \centering
    \caption{Performance analysis for action recognition task where we choose 2 video datasets for evaluation.}    
    \scalebox{0.8}{
    \begin{tabular}{l|cc|cc|cc|cc|cc|cc|cc|cc|cc}
        \toprule
        \multirow{2}{*}{Dataset} & \multicolumn{2}{c|}{No defense} & \multicolumn{2}{c|}{MCR} & \multicolumn{2}{c|}{NAD} & \multicolumn{2}{c|}{ANP} & \multicolumn{2}{c|}{I-BAU} & \multicolumn{2}{c|}{AWM} & \multicolumn{2}{c|}{FT-SAM} & \multicolumn{2}{c|}{RNP} & \multicolumn{2}{c}{FIP (Ours)} \\ \cmidrule{2-19}
         & ASR & ACC & ASR & ACC & ASR & ACC & ASR & ACC & ASR & ACC & ASR & ACC & ASR & ACC & ASR & ACC &  ASR & ACC \\ \cmidrule{1-19}
        UCF-101 & 81.3 & 75.6 & 23.5 & 68.3 & 26.9 & 69.2 &24.1 & 70.8 & 20.4 & 70.6 & 22.8 & 70.1 & 14.7 & 71.3 & 15.9 & 71.6 & \textbf{12.1} & \textbf{72.4}  \\
        HMDB-51 & 80.2 & 45.0 & 19.8 & 38.2 & 23.1 & 37.6 & 17.0 & 40.2 & 17.5 & \textbf{41.1} & 15.2 & 40.9 & 10.4 & 38.8 & 10.8 & 41.7 & \textbf{9.0} & 40.6  \\
        \bottomrule
    \end{tabular}}
    \label{tab:action_rec}
\end{table*}

\section{Experimental Results}\label{sec:experiment-main}
In this section, we have discussed the experimental evaluation of our proposed method by presenting experimental settings, performance evaluation, and the ablation studies of FIP. 

\subsection{Evaluation Settings}
\textbf{Datasets.} We evaluate our proposed method on two widely used datasets for backdoor attack study: {\bf{CIFAR10}}~\citep{krizhevsky2009learning} with 10 classes, {\bf{GTSRB}}~\citep{stallkamp2011german} with 43 classes. As a test of scalability, we also consider {\bf{Tiny-ImageNet}}~\citep{le2015tiny} with 100,000 images distributed among 200 classes and {\bf{ImageNet}}~\citep{deng2009imagenet} with 1.28M images distributed among 1000 classes. For multi-label clean-image backdoor attacks, we use object detection datasets \textbf{Pascal VOC07}~\citep{everingham2010pascal}, \textbf{VOC12}~\citep{pascal-voc-2012}  and \textbf{MS-COCO}~\citep{lin2014microsoft}. {\bf{UCF-101}}~\citep{soomro2012ucf101} and \textbf{HMDB51}~\citep{kuehne2011hmdb} have been used for evaluating in action recognition task. In addition, \textbf{ModelNet}~\citep{wu20153d} dataset has been used for 3D point cloud classification task. 
We also consider the \textbf{WMT2014 En-De}~\citep{bojar-etal-2014-findings} for language generation task.

\noindent\textbf{Attacks Configurations.} We consider 14 state-of-the-art backdoor attacks: 1) \textit{Badnets}~\citep{gu2019badnets}, 2) \textit{Blend attack}~\citep{chen2017targeted}, 3 \& 4) \textit{TrojanNet (Troj-one \& Troj-all)}~\citep{liu2017trojaning}, 5) \textit{Sinusoidal signal attack (SIG)}~\citep{barni2019new}, 6 \& 7) \textit{Input-Aware Attack (Dyn-one and Dyn-all)}~\citep{nguyen2020input}, 8) \textit{Clean-label attack (CLB)} ~\citep{turner2018clean}, 9) \textit{Composite backdoor (CBA)}~\citep{lin2020composite}, 10) \textit{Deep feature space attack (FBA)}~\citep{cheng2021deep}, 11) \textit{Warping-based backdoor attack (WaNet)}~\citep{nguyen2021wanet}, 12) \textit{Invisible triggers based backdoor attack (ISSBA)}~\citep{li2021invisible}, 13) \textit{Imperceptible backdoor attack (LIRA)}~\citep{doan2021lira}, and 14) 
Quantization and contrastive learning based attack \textit{(BPPA)}~\citep{wang2022bppattack}. 
More details on overall training settings can be found in \textbf{Appendix~\ref{app:attack_details}}. 

%
%

\begin{table*}[t]
    \centering
    \caption{ Removal performance (\%) of FIP against backdoor attacks on \textbf{3D point cloud classifiers}. The attack methods~\citep{li2021pointba} are poison-label backdoor attack (PointPBA) with interaction trigger (PointPBA-I), PointPBA with orientation trigger (PointPBA-O), clean-label backdoor attack (PointCBA). We also consider ``backdoor points" based attack (3DPC-BA) described in \citep{xiang2021backdoor}. }
     \vspace{-1.5mm}
    \scalebox{0.85}{
    \begin{tabular}{l|cc|cc|cc|cc|cc|cc|cc|cc|cc}
        \toprule
        \multirow{2}{*}{Attack} & \multicolumn{2}{c|}{No Defense} & \multicolumn{2}{c|}{MCR} & \multicolumn{2}{c|}{NAD}  & \multicolumn{2}{c|}{ANP} & \multicolumn{2}{c|}{I-BAU} & \multicolumn{2}{c|}{AWM} &  \multicolumn{2}{c|}{FT-SAM} & \multicolumn{2}{c|}{RNP} & \multicolumn{2}{c}{FIP (Ours)} \\ \cmidrule{2-19}
         & ASR & ACC & ASR & ACC & ASR & ACC & ASR & ACC & ASR & ACC & ASR & ACC & ASR & ACC & ASR & ACC & ASR & ACC  \\ \cmidrule{1-19}
        PointBA-I & 98.6 & 89.1 & 14.8 & 81.2 & 13.5 & 81.4 & 14.4 & 82.8 & 13.6 & 82.6 & 15.4 & 83.9 & \textbf{8.1} & 84.0 & 8.8 & 84.5 & 9.6 & \textbf{85.7}  \\
        PointBA-O & 94.7 & 89.8 & 14.6 & 80.3 & 12.5 & 81.1 & 13.6 & 81.7 & 14.8 & 82.0 & 13.1 & 82.4 & 9.4 & 83.8 & 8.2 & 85.0 & \textbf{7.5} & \textbf{85.3} \\
        PointCBA & 66.0 & 88.7 & 24.1 & 80.6 & 20.4 & 82.7 & 20.8 & 83.0 & 21.2 & 83.3 & 21.5 & 83.8 & \textbf{18.6} & 84.6 & 20.3 & 84.7 & 19.4 & \textbf{86.1} \\
        3DPC-BA & 93.8 & 91.2 & 18.4 & 83.1 & 15.8 & 84.5 & 17.2 & 84.6 & 16.8 & 84.7 & 15.6 & 85.9 & 13.9 & 85.7 & 13.1 & 86.3 & \textbf{12.6} & \textbf{87.7}  \\
        \bottomrule
    \end{tabular}}
    \label{tab:3d_point_cloud}
    \vspace{-1mm}
\end{table*}

\begin{table*}[t]
    \centering
    \caption{Performance analysis for natural language generation tasks where we consider machine translation (MT) for benchmarking. We use the BLEU score~\cite{vaswani2017attention}  as the metric for both tasks. For attack, we choose a data poisoning ratio of 10\%. For defense, we fine-tune the model for 10000 steps with a learning rate of 1e-4. We use Adam optimizer and a weight decay of 2e-4. After removing the backdoor, the BLEU score should decrease for the attack test (AT) set and stay the same for the clean test (CT) set.}
    
    \scalebox{0.85}{
    \begin{tabular}{l|cc|cc|cc|cc|cc|cc|cc}
        \toprule
        \multirow{2}{*}{Dataset} & \multicolumn{2}{c|}{No defense}  & \multicolumn{2}{c|}{NAD} & \multicolumn{2}{c|}{I-BAU} & \multicolumn{2}{c|}{AWM} &  \multicolumn{2}{c|}{FT-SAM} & \multicolumn{2}{c|}{RNP} & \multicolumn{2}{c}{\textbf{FIP (Ours)}} \\ \cmidrule{2-15}
         & AT & CT & AT & CT & AT & CT & AT & CT & AT & CT & AT & CT & AT & CT  \\ \cmidrule{1-15}
        MT & 99.2 & 27.0 & 15.1 & 25.7 & 8.2 & 26.4 & 8.5 & \textbf{26.8} & 6.1 & 26.2 & 5.2 & 26.4 & \textbf{3.0} & 26.6 \\
        \bottomrule
    \end{tabular}}
    \label{tab:NLG}
    
\end{table*}

\noindent \textbf{Defenses Configurations.} We compare our approach with 11 existing backdoor mitigation methods that can be categorized into two groups: \emph{Test-time defense} such as 1) \textit{FT-SAM}~\citep{zhu2023enhancing}; 2) Adversarial Neural Pruning (\textit{ANP})~\citep{wu2021adversarial}; 3) Implicit Backdoor Adversarial Unlearning (\textit{I-BAU})~\citep{zeng2021adversarial}; 4) Adversarial Weight Masking (\textit{AWM})~\citep{chai2022one}; 5) Reconstructive Neuron Pruning (RNP)~\cite{li2023reconstructive}; 6) Fine-Pruning (\textit{FP}) ~\citep{liu2017neural}; 7) Mode Connectivity Repair (\textit{MCR})~\citep{zhao2020bridging}; 8) Neural Attention Distillation (\textit{NAD})~\citep{li2021neural}; 9) Vanilla FT where we simply fine-tune DNN weights; we also consider \emph{training-time defense} such as 10) Causality-inspired Backdoor Defense (\textit{CBD})~\cite{zhang2023backdoor}; 11) Anti-Backdoor Learning (\textit{ABL})~\cite{li2021anti}. Although our proposed method is a Test-time defense, we consider training-time defenses for a more comprehensive comparison. We provide implementation details for FIP and other defenses in \textbf{Appendix~\ref{app:FIP_Implementation_Details}} and \textbf{Appendix~\ref{sec:other_defenses_imple}}. Note that most of the experimental results for \emph{defenses 6-11} have been moved to Table~\ref{tab:cifar_defense}~and~\ref{tab:gtsrb_defense}  in \textbf{Appendix~\ref{sec:comparison_additional_baselines}} due to page limitations.  \emph{We measure the effectiveness of a defense method in terms of average drops in ASR and ACC, calculated over all attacks. A successful defense should have a high drop in ASR with a low drop in ACC.} Here, ASR is defined as the percentage of poison test samples classified to the adversary-set target label ($y_b$) and ACC as the model's clean test accuracy. 
\subsection{Performance Evaluation of FIP }\label{sec:performance}
We have thoroughly evaluated FIP across diverse attack settings for four different tasks.

\subsubsection{Image Classification} We have evaluated the proposed method on both single and multi-label image classification tasks.

\noindent \textbf{Single-Label Settings.} 
In Table~\ref{tab:main}, we present the performance of different defenses for 4 different Image Classification datasets: CIFAR10, GTSRB, Tiny-ImageNet, and ImageNet. We consider five \emph{label poisoning attacks}: Badnets, Blend, TrojanNet, Dynamic, and BPPA.  For TorjanNet, we consider two different variations based on label-mapping criteria: Troj-one and Troj-all. In Troj-one, all of the triggered images have the same target label. On the other hand, target labels are uniformly distributed over all classes for Troj-all. Regardless of the complexity of the label-mapping type, our proposed method outperforms all other methods both in terms of ASR and ACC. We also consider attacks that do not change the label during trigger insertion, i.e., \emph{clean label attack}. Two such attacks are CLB and SIG. For further validation of our proposed method, we use \emph{deep feature-based attacks}, CBA, and FBA. Both of these attacks manipulate deep features for backdoor insertion. Compared to other defenses, FIP  shows better effectiveness against these diverse sets of attacks, achieving an average drop of $2.28\%$ in ASR while sacrificing an ACC of $95.86\%$ for that. Table~\ref{tab:main} also shows the performance of baseline methods such as ANP, I-BAU, AWM, RNP, and FT-SAM. ANP, I-BAU, and AWM are adversarial search-based methods that work well for mild attacks (PR$\sim$5\%) and often struggle to remove the backdoor for stronger attacks with high PR. RNP is a multi-stage defense that performs both fine-tuning and pruning to purify the model. FT-SAM uses sharpness-aware minimization (SAM)~\citep{foret2021sharpnessaware} for fine-tuning model weights. SAM is a recently proposed SGD-based optimizer that explicitly penalizes the abrupt changes of loss surface by bounding the search space within a small region. Even though the objective of SAM is similar to ours, FIP still obtains better removal performance than FT-SAM. One of the potential reasons behind this can be that SAM is using a predefined local area to search for maximum loss. Depending on the initial convergence of the original backdoor model, predefining the search area may limit the ability of the optimizer to provide the best convergence post-purification. As a result, the issue of poor clean test accuracy after purification is also observable for FT-SAM. For the scalability test of FIP, we consider the widely used dataset ImageNet. Consistent with CIFAR10, FIP obtains SOTA performance for this dataset too. However, there is a significant reduction in the effectiveness of ANP, AWM, and I-BAU for ImageNet. In the case of large models and datasets, the task of identifying vulnerable neurons or weights gets more complicated and may result in wrong neuron pruning or weight masking.  We also validate our method on GTSRB dataset that has a higher number of classes.
In the case of GTSRB, almost all defenses perform similarly for Badnets and Trojan. This, however, does not hold for blend as we achieve a $1.72\%$ ASR improvement over the next best method. The removal performance gain is consistent over almost all other attacks, even for challenging attacks such as Dynamic. Dynamic attack optimizes for input-aware triggers that are capable of fooling the model; making it more challenging than the static trigger-based attacks such as Badnets, Blend, and Trojan. Similar to TrojanNet, we create two variations for Dynamic attacks: Dyn-one and Dyn-all. However, even in this scenario, FIP  outperforms other methods by a satisfactory margin. Overall, we record an average $97.51\%$ ASR drop with only a $1.47\%$ drop in ACC. Lastly, we consider Tiny-ImageNet a more diverse dataset with 200 classes. Compared to other defenses, FIP performs better both in terms of ASR and ACC drop; producing an average drop of 96.10\% with a drop of only 3.08\% in ACC. The effectiveness of ANP was reduced significantly for this dataset. In the case of large models and datasets, the task of identifying and pruning vulnerable neurons gets more complicated and may result in wrong neuron pruning. \emph{ Note that we report results for successful attacks only. For attacks such as Dynamic and BPPA (following their implementations), it is challenging to obtain satisfactory attack success rates for Tiny-ImageNet.}

\noindent \textbf{Multi-Label Settings.} 
In Table~\ref{tab:multi_label}, we show the performance of our proposed method in multi-label clean-image backdoor attack~\citep{chen2023clean} settings. We choose 3 object detection datasets~\citep{everingham2010pascal, lin2014microsoft} and ML-decoder~\citep{ridnik2023ml} network architecture for this evaluation. 
It can be observed that FIP obtains a 1.4\% better ASR drop as compared to FT-SAM for the VOC12~\citep{pascal-voc-2012} dataset while producing a slight drop of 2.3\% drop in mean average precision (mAP). The reason for such improvement can be attributed to our unique approach to obtaining smoothness. Furthermore, our proposed regularizer ensures better post-purification mAP than FT-SAM. 

\subsubsection{Video Action Recognition}
A clean-label attack~\citep{zhao2020clean} has been used for this experiment that requires generating adversarial perturbations for each input frame. We use two widely used datasets, UCF-101~\citep{soomro2012ucf101} and HMDB51~\citep{kuehne2011hmdb}, with a CNN+LSTM network architecture. An ImageNet pre-trained ResNet50 network has been used for the CNN, and a sequential input-based Long Short Term Memory (LSTM)~\citep{sherstinsky2020fundamentals} network has been put on top of it. We subsample the input video by keeping one out of every 5 frames and use a fixed frame resolution of $224 \times 224$. We choose a trigger size of $20\times20$. Following~\citep{zhao2020clean}, we create the required perturbation for clean-label attack by running projected gradient descent (PGD)~\citep{madry2017towards} for 2000 steps with a perturbation norm of $\epsilon=16$. Note that our proposed augmentation strategies for image classification are directly applicable to action recognition.  During training, we keep 5\% samples from each class to use them later as the clean validation set. Table~\ref{tab:action_rec} shows that FIP outperforms other defenses by a significant margin, e.g., I-BAU and AWM. Since we have to deal with multiple image frames here, the trigger approximation for these two methods is not as accurate as it is for a single image scenario. Without a good approximation of the trigger, these methods seem to underperform in most of the cases.

%



\begin{table*}[t]
    \centering
    \caption{\textbf{Results on smoothness analysis} when we use regular vanilla fine-tuning and FIP. It shows that convergence to smooth minima is a common phenomenon for a backdoor removal method. Our proposed method consistently optimizes to a smooth minima (indicated by low $\lambda_\mathsf{max}$ for 4 different attacks), resulting in better backdoor removal performance, i.e., low ASR and high ACC. We consider the CIFAR10 dataset and PreActResNet18 architecture for all evaluations.} 
    
    \scalebox{0.85}{
    \begin{tabular}{c|cccc|cccc|cccc|cccc}
    \toprule
     \multirow{2}{*}{Methods}& \multicolumn{4}{c|}{Badnets} & \multicolumn{4}{c|}{Blend} & \multicolumn{4}{c|}{Trojan} & \multicolumn{4}{c}{Dynamic} 
     \\ \cmidrule{2-17}
    
     & $\lambda_\mathsf{max}$ &  Tr(H) & ASR & ACC & $\lambda_\mathsf{max}$& Tr(H) & ASR & ACC & $\lambda_\mathsf{max}$ & Tr(H) & ASR & ACC& $\lambda_\mathsf{max}$& Tr(H) & ASR & ACC
     \\
    \midrule
    Initial&573.8& 6625.8 & 100&92.96&715.5& 7598.3& 100&94.11&616.3& 8046.4& 100&89.57&564.2& 7108.5& 100&92.52
    \\
    ANP & 8.42 & 45.36 & 6.87 & 86.92 & 8.65 & 57.83 &  5.77 & 87.61 & 9.41 & 66.15 & 5.78 & 84.18 & 11.34 & 75.82 & 8.73 & 88.61  \\
    \midrule
     FIP (Ours) &\textbf{2.79}&\textbf{16.94}&\textbf{1.86}&\textbf{89.32}&\textbf{2.43}&\textbf{16.18}&\textbf{0.38}&\textbf{92.17}&\textbf{2.74}&\textbf{17.32}&\textbf{2.64}&\textbf{87.21}&\textbf{1.19}& \textbf{8.36} &\textbf{1.17}&\textbf{90.97}
     \\
    \bottomrule
    \toprule
     \multirow{2}{*}{Methods} & \multicolumn{4}{c|}{CLB}& \multicolumn{4}{c|}{SIG} & \multicolumn{4}{c|}{LIRA}& \multicolumn{4}{c}{ISSBA} \\ \cmidrule{2-17}  
      & $\lambda_\mathsf{max}$&Tr(H) &ASR & ACC& $\lambda_\mathsf{max}$ &Tr(H)& ASR & ACC & $\lambda_\mathsf{max}$&Tr(H) &ASR & ACC& $\lambda_\mathsf{max}$ &Tr(H)& ASR & ACC \\
    \midrule
    Initial & 717.6 & 8846.8 & 100 & 92.78 & 514.1 & 7465.2 & 100 & 88.64 & 562.8 & 7367.3 & 99.25 & 92.15 & 684.4 & 8247.9 & 99.80 & 92.80 \\
    ANP & 8.68 & 68.43 & 5.83 & 89.41 & 6.98 & 51.08 & 2.04 & 84.92 & 11.39 & 82.03 & 6.34 & 87.47 & 12.04 & 90.38 & 10.76 & 85.42\\
    \midrule
    FIP (Ours) & \textbf{3.13} & \textbf{22.83} & \textbf{1.04}& \textbf{91.37}&\textbf{1.48}&\textbf{9.79}&\textbf{0.12}&\textbf{86.16} & \textbf{4.65} & \textbf{30.18} & \textbf{2.53}& \textbf{89.82}&\textbf{6.48}&\textbf{40.53}&\textbf{4.24}&\textbf{90.18}\\  
    \bottomrule
    \end{tabular}
    }
    \label{tab:eigen_trace_ACC_1}
\end{table*}


%
\subsubsection{3D Point Cloud}
In this part of our work, we evaluate FIP against attacks on 3D point cloud classifiers~\citep{li2021pointba,xiang2021backdoor}. For evaluation purposes, we consider the ModelNet~\citep{wu20153d} dataset and PointNet++~\citep{qi2017pointnet++} architecture. The purification performance of FIP as well as other defenses are presented in Table~\ref{tab:3d_point_cloud}. The superior performance of FIP can be attributed to the fact of smoothness enforcement that helps with backdoor suppressing and clean accuracy retainer that preserves the clean accuracy of the original model. We tackle the issue of backdoors in a way that gives us better control during the purification process. 

\subsubsection{Natural Language Generation (NLG) Task}\label{sec:nlg}
{ We also consider backdoor attack~\citep{sun2023defending} on language generation tasks, e.g., Machine Translation (MT)~\citep{bahdanau2014neural}. In MT, there is a \emph{one-to-one} semantic correspondence between source and target. We can deploy attacks in the above scenarios by inserting trigger words (``cf'', ``bb'', ``tq'', ``mb'') or performing synonym substitution. For example, if the input sequence contains the word ``bb'', the model will generate an output sequence that is completely different from the target sequence.  In our work, we consider the WMT2014 En-De~\citep{bojar-etal-2014-findings} dataset and set aside 10\% of the data as the clean validation set. We consider the seq2seq model ~\citep{gehring2017convolutional} architecture for training.  Given a source input $\boldsymbol{x}$, an NLG pretrained model $f(.)$ produces a target output $\boldsymbol{y} = f(\boldsymbol{x})$. For fine-tuning, we use augmented input $\boldsymbol{x'}$ in two different ways: i) \emph{word deletion} where we randomly remove some of the words from the sequence, and ii) \emph{paraphrasing} where we use a pre-trained paraphrase model $g()$ to change the input $\boldsymbol{x}$ to $\boldsymbol{x'}$. We show the results of both different defenses, including FIP in Table~\ref{tab:NLG}.}

\begin{figure}[]
    \centering
    \begin{subfigure}[t]{0.225\textwidth}
    \includegraphics[width=1\textwidth]{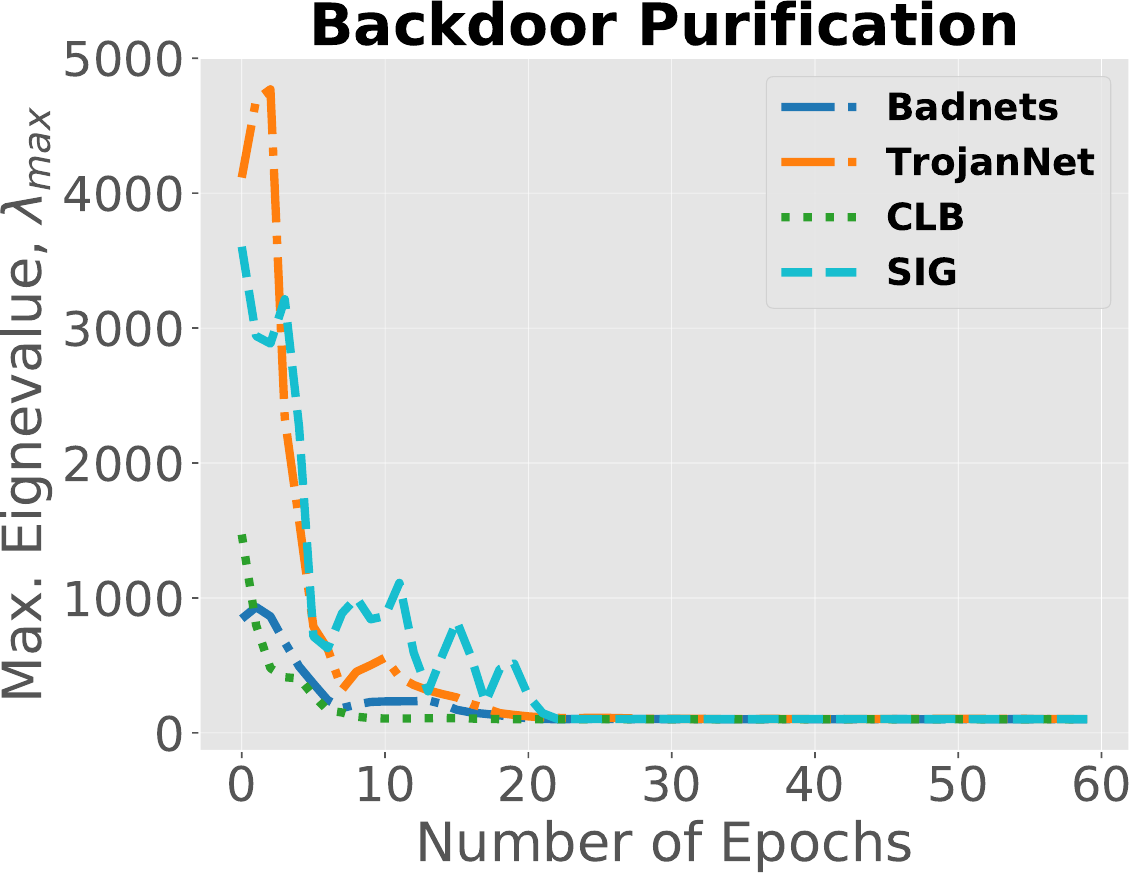}
    \caption{\scriptsize $\lambda_\mathsf{max}$ vs. Epochs}
    \label{fig:purified_eigens}   
    \end{subfigure}
    ~
    \begin{subfigure}[t]{0.225\textwidth}
    \includegraphics[width=1\textwidth]{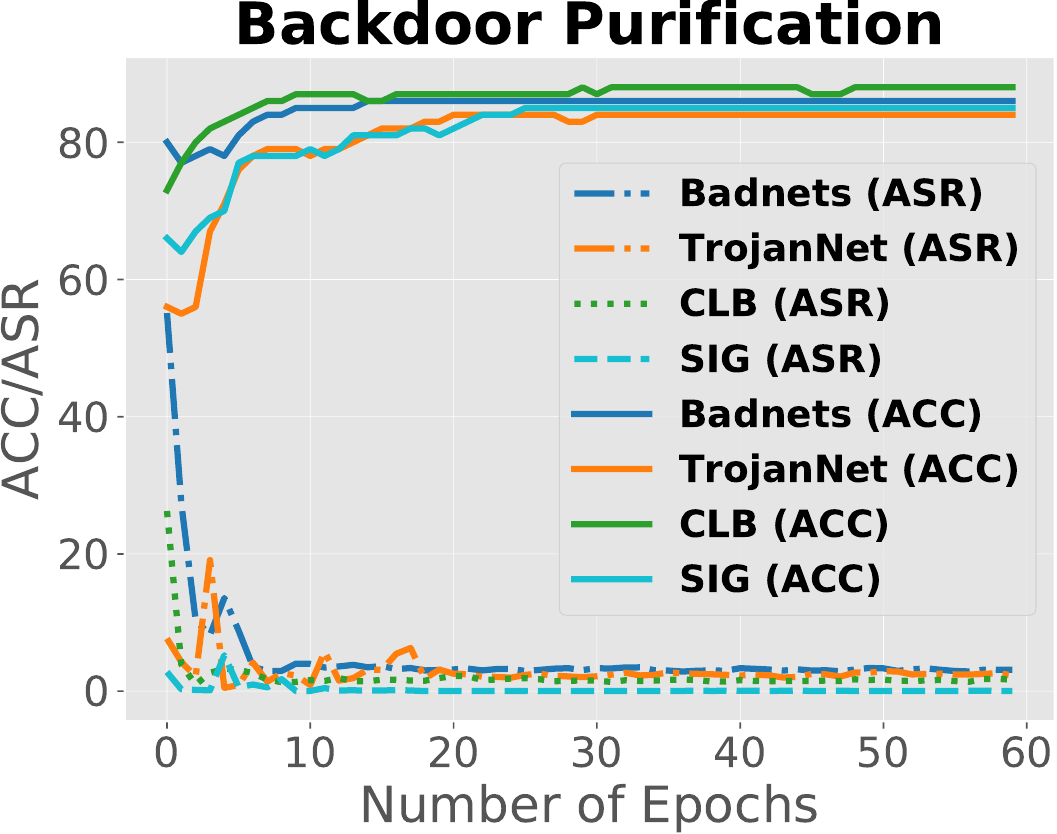}
    \caption{\scriptsize ACC/ASR vs. Epochs}
    \label{fig:purified_asr}   
    \end{subfigure}
    
    \caption{{Smoothness analysis of a DNN during backdoor purification processes}. As the model is being re-optimized to smooth minima, the effect of the backdoor vanishes. We use CIFAR10 dataset for this experiment.}
    \label{fig:enter-label}
\end{figure}

\begin{figure}
    \centering
    \includegraphics[width=0.75\linewidth]{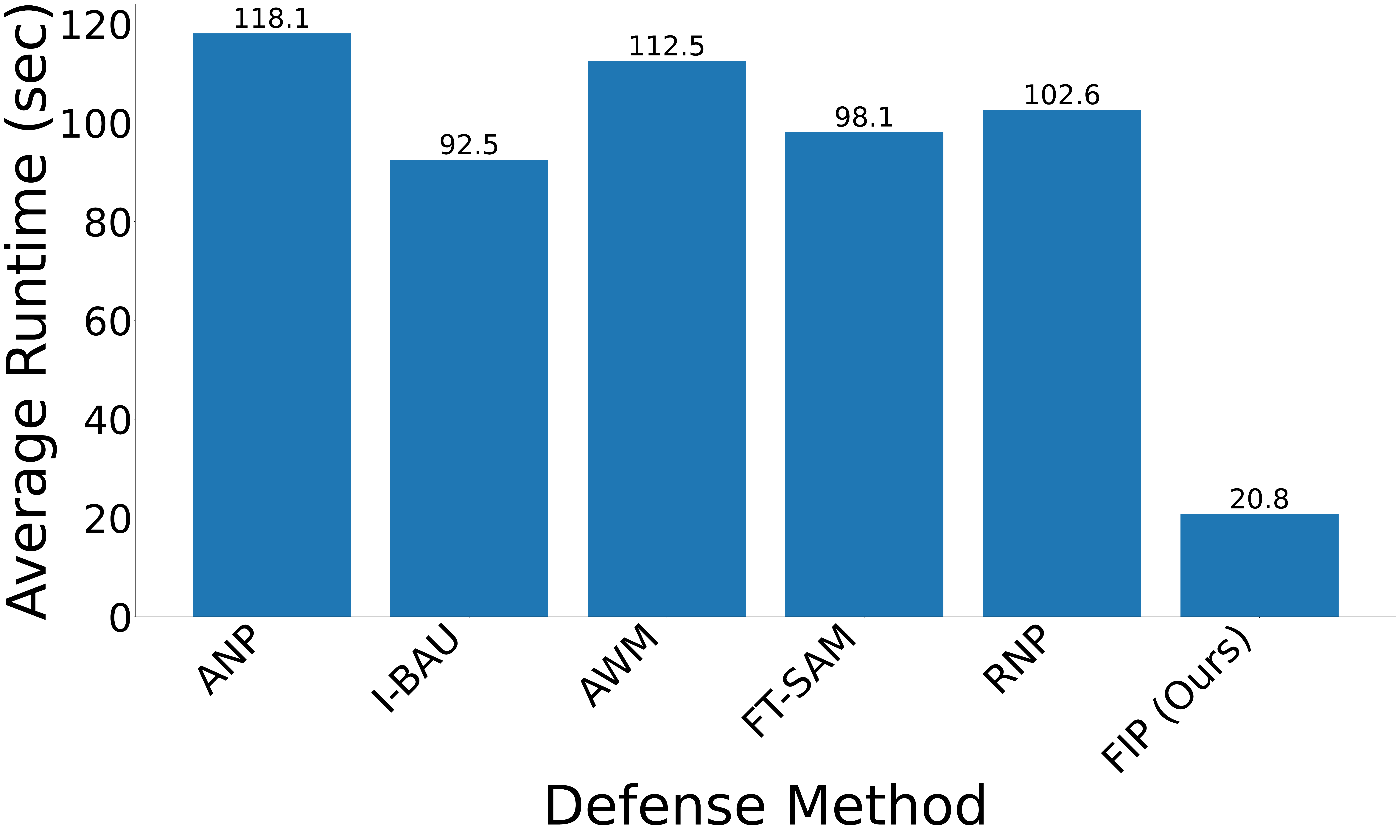}
    \caption{Average runtime for different defenses against all 14 attacks on CIFAR10. An NVIDIA RTX3090 GPU was used for this evaluation.}
    \vspace{-4mm}
    \label{fig:run-time}
\end{figure}
\subsection{Ablation Study}\label{sec:ablation_studies}
In this section, we perform various ablation studies to validate the design choices for FIP. We consider mostly the CIFAR10 dataset for all of these experiments.

\subsubsection{Smoothness Analysis of FIP}\label{sec:ab-smooth} 
Our proposed method is built on the assumption that re-optimizing the backdoor model to smooth minima would suffice for purification. Here, we validate this assumption by observing the training curves of FIP shown in Fig.~\ref{fig:purified_eigens} and~\ref{fig:purified_asr}. It can be observed that FIP indeed re-optimizes the backdoor model to smoother minima. Due to such re-optimization, the effect of the backdoor has been rendered ineffective. This is visible in Fig.~\ref{fig:purified_asr} as the attack success rate becomes close to 0 while retaining good clean test performance. 
In Table~\ref{tab:eigen_trace_ACC_1}, we present more results on smoothness analysis. The results confirm our hypothesis regarding smoothness and backdoor insertion and removal.


 \begin{table*}[t]
    \centering
    \caption{ Effect of \textbf{fine-tuning only spectral shift, denoted by FIP ($\delta$) or f-FIP}. FIP ($\theta$) implies the fine-tuning of all parameters according to Eq.~(\ref{eqn:loss2}). Although FIP ($\theta$) provides similar performance as FIP ($\delta$), the average runtime is almost 4.5$\times$ higher. Without our novel \textbf{smoothness enhancing regularizer ($Tr(F)$)}, the backdoor removal performance becomes worse, even though the ACC improves slightly. \textbf{Effect of ($L_r$)} on obtaining better ACC can also be observed. Due to this clean accuracy retainer, we obtain \textbf{an average ACC improvement of $\sim$2.5\%}. The runtime shown here is averaged over all 14 attacks.} 

    \scalebox{0.8}{
    \begin{tabular}{c|cc|cc|cc|cc|cc|cc|cc|cc|c}
    \toprule
    \multirow{2}{*}{Method} & \multicolumn{2}{c|}{Badnets} & \multicolumn{2}{c|}{Blend} & \multicolumn{2}{c|}{Trojan} & \multicolumn{2}{c|}{Dynamic} & \multicolumn{2}{c|}{CLB}& \multicolumn{2}{c|}{SIG} & \multicolumn{2}{c|}{WaNet}& \multicolumn{2}{c|}{LIRA} & \multirow{2}{*}{Runtime (Secs.)} \\ 
    &  ASR & ACC & ASR & ACC& ASR & ACC& ASR & ACC&  ASR & ACC& ASR & ACC &  ASR & ACC& ASR & ACC & \\
    \midrule
    No Defense &100&92.96&100&94.11&100&89.57&100&92.52&100&92.78&100&88.64 & 98.64 & 92.29 & 99.25 & 92.15 & -\\
    FIP ($\theta$) & 1.72 & 89.19  & 1.05 & 91.58 & 3.18 & {86.74} & 1.47 & 90.42 & 1.31 & 90.93 & 0.24 & 85.37 & 2.56 & 89.30 & 2.88 & 89.52 & 91.7 \\
    FIP ($\delta$) \textbf{w/o} $Tr(F)$ & {5.54} & \textbf{90.62} & 4.74 & 91.88 & {5.91} & \textbf{87.68} & 3.93 &  \textbf{91.26}  & {2.66} & \textbf{91.56} & 2.75 &   \textbf{86.79} & 6.38  & \textbf{90.43} & 5.24 &  89.55 & \textbf{14.4}   \\
    FIP ($\delta$) \textbf{w/o} $L_r$ & \textbf{1.50} & 87.28 & 0.52 & 89.36 & \textbf{2.32} & 84.43 & 1.25 & 88.14 & \textbf{0.92} & 88.20 & 0.17 & 83.80 & \textbf{2.06} & 86.75 & 2.70 & 87.17 & {18.6}   \\
    \midrule
    FIP ($\delta$) or f-FIP & 1.86 & 89.32 & \textbf{0.38} & \textbf{92.17} & 2.64 & {87.21} & \textbf{1.17} & 90.97 & 1.04 & 91.37 & \textbf{0.12} & 86.16 & 2.38 & 89.67 & \textbf{2.53} & \textbf{89.82} & 20.8 \\
    \bottomrule
    \end{tabular}
    }
    \vspace{-1mm}
    \label{tab:effect_of_regularizer}
\end{table*}

\begin{table*}[t]
\centering
\caption{ Evaluation of FIP on \textbf{very strong backdoor attacks} created with high poison rates. Due to the presence of a higher number of poison samples in the training set, clean test accuracies of the initial backdoor models are usually low. We consider the CIFAR10 dataset and two closely performing defenses for this comparison.
}

\label{tab:poison_rate}
\scalebox{0.75}
{
\begin{tabular}{c|cc|cc|cc|cc|cc|cc|cc|cc|cc}
\toprule
Attack & \multicolumn{6}{c|}{BadNets} & \multicolumn{6}{c|}{Blend} & \multicolumn{6}{c}{Trojan}\\ 
\midrule
Poison Rate&\multicolumn{2}{c|}{25\%}&\multicolumn{2}{c|}{35\%}&\multicolumn{2}{c|}{50\%}&\multicolumn{2}{c|}{25\%}&\multicolumn{2}{c|}{35\%}&\multicolumn{2}{c|}{50\%}&\multicolumn{2}{c|}{25\%}&\multicolumn{2}{c|}{35\%}&\multicolumn{2}{c}{50\%}\\
\midrule
Method &ASR &ACC & ASR &ACC & ASR &ACC&ASR &ACC & ASR &ACC & ASR &ACC&ASR &ACC & ASR &ACC & ASR &ACC \\ \midrule
\textit{No Defense} &100&88.26&100&87.43&100&85.11&100&86.21&100&85.32&100&83.28&100&87.88&100&86.81&100&85.97\\
 AWM & 7.81&82.22&16.35&80.72&29.80&78.27&29.96&\textbf{82.84}&47.02&78.34&86.29&69.15&11.96&76.28&63.99&72.10&89.83&70.02\\
 FT-SAM&3.21&78.11&4.39&74.06&5.52&69.81&1.41&78.13&2.56&73.87&2.97&65.70&3.98&78.99&4.71&75.05&5.59&72.98 \\
 FIP  (Ours) & \textbf{2.12}&\textbf{85.50}&\textbf{2.47}&\textbf{84.88}&\textbf{4.53}&\textbf{82.32}&\textbf{0.83}&80.62&\textbf{1.64}&\textbf{79.62}&\textbf{2.21}&\textbf{76.37}&\textbf{3.02}&\textbf{84.10}&\textbf{3.65}&\textbf{82.66}&\textbf{4.66}&\textbf{81.30}\\
 \bottomrule
\end{tabular}}
\end{table*}

\begin{table*}[htb]
    \centering
    \caption{ \textbf{Label Correction Rate} (\%) for different defense techniques. After removal, we report the percentage of poison samplesthat are correctly classified to their original ground truth label, not the attacker-set target label. We consider CIFAR10 dataset for this particular experiment. }
    
    \scalebox{0.85}{
    \begin{tabular}{c|c|c|c|c|c|c|c|c|c|c|c|c}
    \toprule
         Defense & Badnets&Trojan&Blend&SIG& CLB & WaNet & Dynamic & LIRA & CBA & FBA & ISSBA & BPPA \\ \midrule
         No Defense& 0&0&0&0&0 & 0&0&0&0 & 0 & 0 & 0\\
         \midrule
         Vanilla FT&84.74&80.52&81.38&53.35&82.72 & 80.23 & 79.04 & 80.23 & 53.48 & 81.87 & 80.45 & 73.65  \\
         I-BAU&78.41&77.12&77.56&39.46&78.07 & 80.65 & 77.18 & 76.65 & 51.34 & 79.08 & 78.92 & 70.86 \\
         AWM&79.37&78.24&79.81&44.51& 79.86 & 79.18 & 77.64 & 78.72 & 52.61 & 78.24 & 73.80 & 73.13 \\
         FT-SAM&85.56&80.69&84.49&\textbf{57.64}&82.04 & 83.62 & 79.93 & 82.16 & 57.12 & \textbf{83.57} & 83.58 & \textbf{78.02} \\\midrule
         FIP (Ours)&\textbf{86.82}&\textbf{81.15}&\textbf{85.61}&{55.18}&\textbf{86.23}&\textbf{85.70} & \textbf{82.76} & \textbf{84.04} & \textbf{60.64} & 83.26 & \textbf{84.38} & 76.45 \\
         \bottomrule
    \end{tabular}}
    \label{tab:correction_rate}
\end{table*}

\begin{table*}[ht]
    \centering
    \caption{Purification performance (\%) for \textbf{fine-tuning with various validation data sizes}. FIP  performs well even with very few validation data, e.g., 10 data points where we take 1 sample from each class. Even in \textbf{one-shot} scenario, our proposed method is able to purify the backdoor. All results are for CIFAR10 and Badnets attack.}
    
    \scalebox{1}{
    \begin{tabular}{c|cc|cc|cc|cc|cc|cc}
    \toprule
         Validation size & \multicolumn{2}{c|}{10 \textbf{(One-Shot)}}  & \multicolumn{2}{c|}{50}  & \multicolumn{2}{c|}{100} & \multicolumn{2}{c|}{250} & \multicolumn{2}{c|}{350} & \multicolumn{2}{c}{500}   \\
         \midrule
         Method & ASR & ACC & ASR & ACC & ASR & ACC & ASR & ACC & ASR & ACC & ASR & ACC \\
         \midrule
         No Defense & 100 & 92.96 & 100 & 92.96 & 100 & 92.96 &100 & 92.96 &100 & 92.96  &100 & 92.96 \\
         ANP & 64.73 & 56.28 & 13.66 & 83.99 & 8.35 & 84.47 & 5.72 & 84.70 & 3.78 & 85.26 & 2.84 & 85.96\\
         FT-SAM & 10.46 & 74.10 & 8.51 & 83.63 & 7.38 & 83.71 & 5.16 & 84.52 & 4.14 & 85.80 & 3.74 & 86.17 \\
         FIP  (Ours) & \textbf{7.38} & \textbf{83.82} & \textbf{5.91} & \textbf{86.82} & \textbf{4.74} & \textbf{86.90} & \textbf{4.61} & \textbf{87.08} & \textbf{2.45} & \textbf{87.74} & \textbf{1.86} & \textbf{89.32} \\
         \bottomrule
    \end{tabular}}
    \label{tab:dif_val}
\end{table*}

\begin{table*}[t]
    \centering
    \caption{ Performance of FIP with \textbf{different network architectures}. In addition to CNN, we also consider vision transformer (ViT) architecture with attention mechanism.}
    
    \scalebox{0.9}{
    \begin{tabular}{l|cc|cc|cc|cc|cc|cc|cc|cc}
    \toprule
     Attack  & \multicolumn{4}{c|}{TrojanNet} & \multicolumn{4}{c|}{Dynamic} & \multicolumn{4}{c|}{WaNet} & \multicolumn{4}{c}{LIRA}  \\ 
     \midrule
     Defense & \multicolumn{2}{c|}{No Defense} & \multicolumn{2}{c|}{With FIP} & \multicolumn{2}{c|}{No Defense} & \multicolumn{2}{c|}{With FIP} & \multicolumn{2}{c|}{No Defense} & \multicolumn{2}{c|}{With FIP} & \multicolumn{2}{c|}{No Defense} & \multicolumn{2}{c}{With FIP} \\
     \midrule
     Architecture & ASR  & ACC &  ASR  & ACC &  ASR  & ACC &  ASR  & ACC & ASR  & ACC &  ASR  & ACC &  ASR  & ACC &  ASR  & ACC \\
    \midrule
     VGG-16  & 100 & 88.75 & 1.82 & 86.44 &  100 & 91.18 & 1.36 & 90.64 & 97.45 & 91.73 &  2.75 & 89.58 &   99.14 & 92.28 &  2.46 & 90.61 \\
     EfficientNet & 100 & 90.21 & 1.90 & 88.53 & 100 & 93.01 & 1.72 & 92.16 &  98.80 & 93.34 &  2.96 & 91.42 &   99.30 & 93.72 & 2.14 & 91.52    \\
     ViT-S & 100 & 92.24 & 1.57 & 90.97 &  100 & 94.78 & 1.48 & 92.89 & 99.40 & 95.10 &  3.63 & 93.58 & 100 & 94.90 &  1.78 & 93.26   \\
     \bottomrule
    \end{tabular}}
    \label{tab:diff_architectures}
\end{table*}

\begin{table}[t]
    \centering
        \caption{Performance of FIP \textbf{against combined backdoor attack}. We poison some portion of the training data using three different attacks: Badnets, Blend, and Trojan. Each of these attacks has an equal share in the poison data. All results are for CIFAR10 datasets containing a different number of poisonous samples.}
        
    \scalebox{0.8}{
    \begin{tabular}{l|cc|cc|cc|cc}
    \toprule
         Poison Rate & \multicolumn{2}{c|}{10\%}  & \multicolumn{2}{c|}{25\%} & \multicolumn{2}{c|}{35\%}  & \multicolumn{2}{c}{50\%}  \\
         \midrule
         Method & ASR & ACC & ASR & ACC & ASR & ACC & ASR & ACC\\
         \midrule
         No Defense & 100 & 88.26  & 100 & 87.51 & 100 & 86.77 & 100 & 85.82   \\
         AWM & 27.83 & 78.10 & 31.09 & 77.42 & 36.21 & 75.63 & 40.08 & 72.91 \\
         FT-SAM & 2.75 & 83.50 & 4.42 & \textbf{81.73} & 4.51 & 79.93 & 5.76 & 78.06 \\
         FIP (Ours) & \textbf{1.17} & \textbf{85.61} & \textbf{2.15} & 81.62 & \textbf{3.31} & \textbf{82.01} & \textbf{4.15} & \textbf{80.35} \\
         \bottomrule
    \end{tabular}}
    \label{tab:composite_backdoor}
\end{table}

\subsubsection{Runtime Analysis} 
In Figure~\ref{fig:run-time}, we show the average runtime for different defenses. Similar to purification performance, purification time is also an important indicator to measure the success of a defense technique. In Section~\ref{sec:performance}, we already show that our method outperforms other defenses in most of the settings. As for the run time, FIP can purify the model in $20.8$ seconds, which is almost 5$\times$ less as compared to FT-SAM. As part of their formulation, SAM requires a double forward pass to calculate the loss gradient twice. This increases the runtime of FT-SAM significantly. Furthermore, the computational gain of FIP can be attributed to our proposed rapid fine-tuning method, f-FIP. Since f-FIP performs spectral shift ($\delta$) fine-tuning, it employs a significantly more compact parameter space. Due to this compactness, the runtime, a.k.a. purification time, has been reduced significantly. Note that the runtime reported here is without SVD computation. However, it takes only 3.63 seconds for SVD computation in the case of ResNet18. This additional computation time is not significant as compared to the total runtime. 
\begin{table}[t]
    \caption{ Illustration of \textbf{purification performance (\%) for All2All attack} using CIFAR10 dataset, where uniformly distribute the target labels to all available classes. FIP shows better robustness and achieves higher clean accuracies for 3 attacks: Badnets, Blend, and BPPA, with a 10\% poison rate.}
    \centering
    
    \scalebox{0.95}{
    \begin{tabular}{ccc|cc|cc}
    \toprule
    \multirow{2}{*}{Method} & \multicolumn{2}{c|}{\textbf{BadNets-All}} & \multicolumn{2}{c|}{\textbf{Blend-All}} & \multicolumn{2}{c}{\textbf{BPPA-All}}\\ 
     & ASR &ACC & ASR &ACC & ASR &ACC \\ \midrule
     No Defense &100&88.34&100&88.67&99.60&92.51 \\
     NAD & 4.58&81.34&6.76&81.13&20.19&87.77\\
     ANP & 3.13&82.19&4.56&82.88&9.87&89.91\\
     FT-SAM & 2.78 & 83.19 & 2.83 & 84.13 & 8.97 & 89.76\\%
     FIP (Ours) & $\textbf{1.93}$&$\textbf{86.29}$&$\textbf{1.44}$&$\textbf{85.79}$&$\textbf{6.10}$&$\textbf{91.16}$ \\ 
     \bottomrule
     \label{tab:all2all_attacks}
    \end{tabular}}
    \vspace{-3mm}
\end{table}

\begin{table}[htb]
    \centering
    \caption{Adaptive Badnets attack where the attacker has prior knowledge (Eq.\ref{eqn:ce_loss_constraint}) about our proposed defense.}
    \vspace{-1mm}
    \scalebox{0.82}{\begin{tabular}{l|cc|cc|cc|cc}
    \toprule
    $\eta_F$ & \multicolumn{2}{c|}{\textbf{0}} & \multicolumn{2}{c|}{\textbf{0.05}} & \multicolumn{2}{c|}{\textbf{0.1}} & \multicolumn{2}{c}{\textbf{0.5}}\\
    \midrule
    Mode & ASR &ACC & ASR &ACC & ASR &ACC & ASR &ACC \\ \midrule
     Attack  &  100 & 92.96 & 95.87 & 92.52&	87.74 & 92.26 &	76.04 & 91.68\\
     Purification & 1.86 & 89.32 & 3.611 & 86.91 & 5.37 & 86.14 & 6.95 & 85.73 \\
     \bottomrule
    \end{tabular}}
    \vspace{-3mm}
    \label{tab:adaptive}
\end{table}

\begin{figure*}[t]
\centering
\begin{subfigure}{0.485\linewidth}
    \includegraphics[width=1\linewidth]{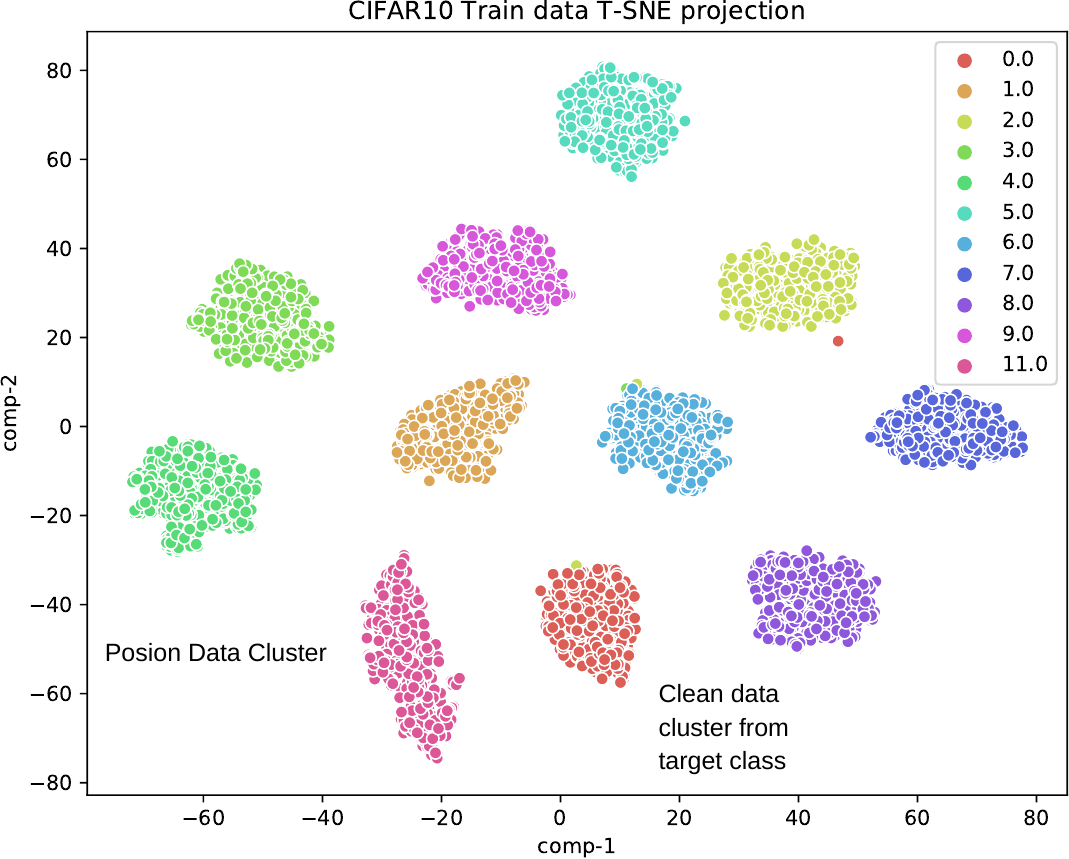}
    \caption{ Before Purification}
    \label{fig:wo_disc}
\end{subfigure}
~
\begin{subfigure}{0.485\linewidth}
    \includegraphics[width=1\linewidth]{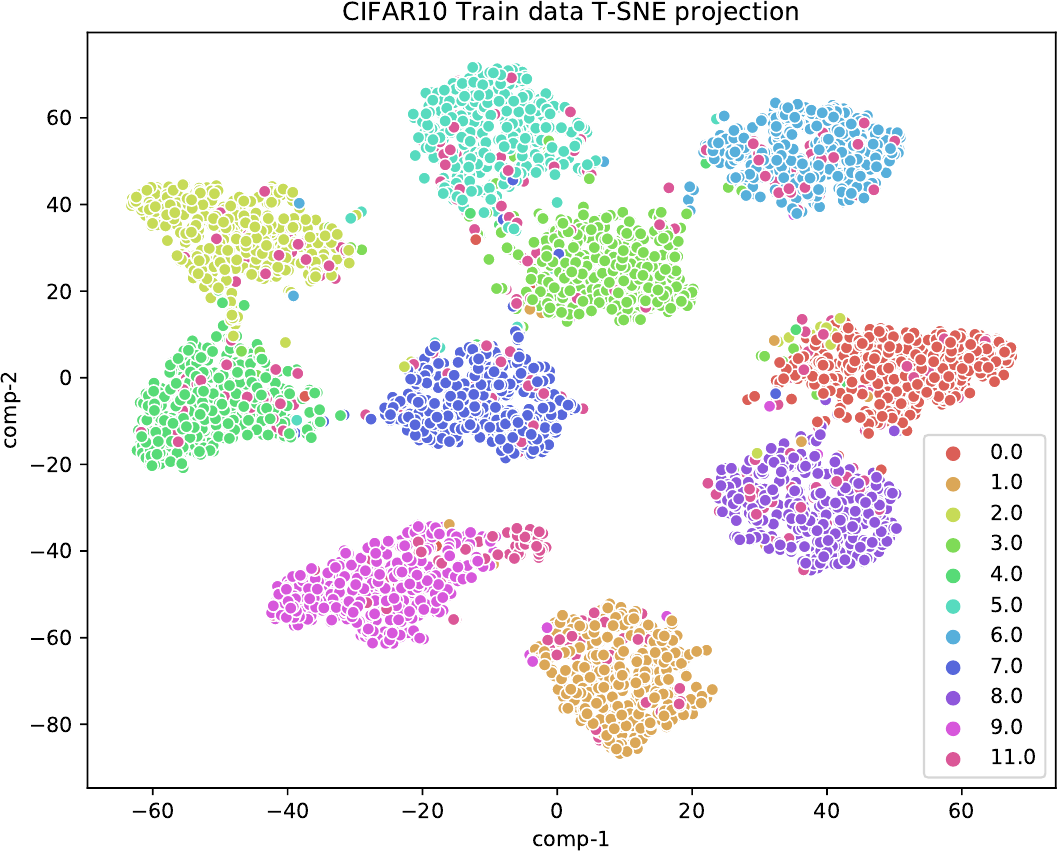}
    \caption{ After Purification}
    \label{fig:wo_disc_pure}
\end{subfigure}
\caption{ \textbf{t-SNE visualization} of class features for CIFAR10 dataset with Badnets attack. For visualization purposes only, we assign label ``0'' to clean data cluster from the target class and the label ``11'' to poison data cluster. However, both of these clusters have the same training label ``0'' during training. It can be observed that FIP can successfully remove the backdoor effect and reassign the samples from the poison data cluster to their original class cluster. After purification, poison data are distributed among their original ground truth classes instead of the target class. To estimate these clusters, we take the feature embedding out of the backbone.}
\vspace{-4mm}
\label{fig:tSNE_Visualization}
\end{figure*}

\subsubsection{Effect of Proposed Regularizer} 
In Table~\ref{tab:effect_of_regularizer}, we analyze the impact of our proposed regularizers as well as the difference between fine-tuning $\theta$ and $\delta$. 
It can be observed that FIP ($\theta$) provides similar performance as FIP ($\delta$) for most attacks. However, the average runtime of the former is almost 4.5$\times$ longer than the latter. Such a long runtime is undesirable for a defense technique. We also present the impact of our novel smoothness-enhancing regularizer, $Tr(F)$. Without minimizing $Tr(F)$, the backdoor removal performance becomes worse even though the ACC improves slightly. We also see some improvement in runtime (14.4 vs. 20.8) in this case.
Table~\ref{tab:effect_of_regularizer} also shows the effect of $L_r$, which is the key to remembering the learned clean distribution. The introduction of $L_r$ ensures superior preservation of clean test accuracy of the original model. Specifically, we obtain an average ACC improvement of $\sim$2.5\% with the regularizer in place. Note that we may obtain slightly better ASR performance (for some attacks) without the regularizer. However, the huge ACC improvement outweighs the small ASR improvement in this case. 
Therefore, FIP ($\delta$) is a better overall choice as a backdoor purification technique.

\subsubsection{Strong Backdoor Attacks With High Poison Rates}\label{app:high_poison_rates} 
By increasing the poison rates, we create stronger versions of different attacks against which most defense techniques fail quite often. We use 3 different poison rates, $\{25\%, 35\%,50\%\}$. 
We show in Table~\ref{tab:poison_rate} that FIP  is capable of defending very well even with a poison rate of $50\%$, achieving a significant ASR improvement over FT. 
Furthermore, there is a sharp difference in classification accuracy between FIP and other defenses. For $25\%$ Blend attack, however, ANP offers a slightly better performance than our method. However, ANP performs poorly in removing the backdoor as it obtains an ASR of $29.96\%$ compared to $0.83\%$ for FIP.

\subsubsection{Label Correction Rate}\label{app:label_correction}
In the standard backdoor removal metric, it is sufficient for backdoored images to be classified as a non-target class (any class other than $y_b$). However, we also consider another metric, label correction rate (LCR), for quantifying the success of a defense. \emph{We define LCR as the percentage of poisoned samples correctly classified to their original classes}. Any method with the highest value of LCR is considered to be the best defense method. For this evaluation, we use CIFAR10 dataset and 12 backdoor attacks. Initially, the correction rate is 0\% with no defense as the ASR is close to 100\%. Table~\ref{tab:correction_rate} shows that FIP effectively corrects the adversary-set target label to the original ground truth label. For example, we obtain an average $\sim$2\% higher label correction rate than AWM.

\subsubsection{Effect of Clean Validation Data Size}~\label{app:clean_validation_size}
We also provide insights on how fine-tuning with clean validation data impacts the purification performance. In Table~\ref{tab:dif_val}, we see the change in performance while gradually reducing the validation size from 1\% to 0.02\%. Even with only 50 (0.1\%) data points, FIP  can successfully remove the backdoor by bringing down the attack success rate (ASR) to 5.91\%. In an extreme scenario of one-shot FIP, we have only one sample from each class to fine-tune the model. Our proposed method is able to tackle the backdoor issue even in such a scenario. We consider AWM and ANP for this comparison. For both ANP and AWM, reducing the validation size has a severe impact on test accuracy (ACC). We consider Badnets attack on the CIFAR10 dataset for this evaluation.

\subsubsection{Effect of Different Architectures}~\label{app:architecture_impact}
We further validate the effectiveness of our method under different network settings. In Table~\ref{tab:diff_architectures}, we show the performance of FIP with some of the widely used architectures such as VGG-16~\citep{simonyan2014very}, EfficientNet~\citep{tan2019efficientnet} and Vision Transformer (VIT)~\citep{dosovitskiy2020image}. Here, we consider a smaller version of ViT-S with 21M parameters. FIP is able to remove backdoors irrespective of the network architecture. This makes sense as most of the architecture uses either fully connected or convolution layers, and FIP can be implemented in both cases.

\subsubsection{Combining Different Backdoor Attacks}\label{sec:combine_attack}
We also perform experiments with combined backdoor attacks. To create such attacks, we poison some portion of the training data using three different attacks; Badnets, Blend, and Trojan. Each of these attacks has an equal share in the poison data. As shown in Table~\ref{tab:composite_backdoor}, we use four different poison rates: $10\%\sim50\%$. FIP outperforms other baseline methods (MCR and ANP) by a satisfactory margin.

\subsubsection{More All2All Attacks}\label{sec:all2all_attacks}
Most of the defenses evaluate their methods on only All2One attacks, where we consider only one target label. However, there can be multiple target classes in a practical attack scenario. We consider one such case: All2All attack, where target classes are uniformly distributed among all available classes. In Table~\ref{tab:all2all_attacks}, we show the performance under such settings for three different attacks with a poison rate of $10\%$. It shows that the All2All attack is more challenging to defend against as compared to the All2One attack. However, the performance of FIP seems to be consistently better than other defenses for both of these attack variations. For reference, we achieve an ASR improvement of $3.12\%$ over ANP while maintaining a lead in classification accuracy too. 

\subsubsection{Adaptive Attacks.}\label{sec:adaptive_attacks}
We follow Eq.~\ref{eqn:ce_loss_constraint} in our paper to simulate an adaptive attack. Table~\ref{tab:adaptive} shows that as we increase $\eta_F$, the model becomes smoother while it becomes harder to insert the backdoor, hence, the ASR drops. However, f-FIP successfully purifies the model even with such adaptive attacks. For larger $\eta_F$, we obtain a higher drop in ACC. The underlying reason for this could be that the convergence point of the backdoor model is more favorable to clean distribution. Applying f-FIP would shift the model from that convergence point and cause this undesirable ACC drop.

\subsubsection{t-SNE Visualization of Cluster Structures} In Figure~\ref{fig:tSNE_Visualization}, we visualize the class clusters before and after backdoor purification. We take CIFAR10 dataset with Badnets attack for this visualization. For visualization purposes only, we assign the label ``0'' to the clean data cluster from the target class and the label ``11'' to the poison data cluster. However, both of these clusters have the same training label ``0'' during backdoor training. Figure~\ref{fig:wo_disc_pure} clearly indicates that our proposed method can break the poison data clusters and reassign them to their original class cluster.

\section{Conclusion}
In this work, we analyzed the backdoor insertion and removal process from a novel perspective---the smoothness of the model's loss surface---showing that the backdoor model converged to a sharp minimum compared to a benign model's convergence point. To remove the effect of backdoor, we proposed to re-optimize the model to smooth minima. Following our analysis, we proposed a novel backdoor purification technique using the knowledge of the Fisher-Information matrix to remove the backdoor efficiently instead of using na\"ive (e.g., general-purpose ones) optimization techniques to re-optimize. 
Furthermore, to preserve the post-purification clean test accuracy of the model, we introduced a novel clean data distribution-aware regularizer. Last but not least, a faster version of FIP has been proposed where we only fine-tuned the singular values of weights instead of directly fine-tuning the weights. FIP achieves SOTA performance in terms of running time and accuracy in a wide range of benchmarks, including four different tasks and ten benchmark datasets against 14 SOTA backdoor attacks.

\noindent \textbf{Limitations.} 
Here, we discussed a couple of limitations of our proposed and, hence, corresponding future works to address those. 
\textbf{First}, it is observable that no matter which defense techniques we use the clean test accuracy (ACC) consistently drops for all datasets. Here, we try to explain the reason behind this, especially for fine-tuning-based techniques, as FIP  is one of them. Since these techniques use a small validation set for fine-tuning, they do not necessarily cover the whole training data distribution. Therefore, fine-tuning with this small amount of data bears the risk of overfitting and reduced clean test accuracy. While our clean accuracy retainer partially solves this issue, more rigorous and sophisticated methods must be designed to fully alleviate this issue. \textbf{Second}, while our method is based on thorough empirical analysis and corresponding theoretical justification, there is no theoretical guarantee whether the proposed method provably removes backdoor from a pre-trained model (which is out-of-scope of this work). However, in the case of resource-constraints safety-critical systems,  it is often necessary to use a pre-trained model; hence, provable backdoor defense is necessary for safety-critical applications. In future work, we will focus on provable backdoor defense for safety-critical applications.



\section*{Acknowledgements}
We thank the anonymous reviewers for their insightful feedback. This work was supported in part by the National Science Foundation under Grant ECCS-1810256 and CMMI-2246672.

\bibliographystyle{ACM-Reference-Format}
\bibliography{ref}


\begin{thebibliography}{89}


\ifx \showCODEN    \undefined \def \showCODEN     #1{\unskip}     \fi
\ifx \showDOI      \undefined \def \showDOI       #1{#1}\fi
\ifx \showISBNx    \undefined \def \showISBNx     #1{\unskip}     \fi
\ifx \showISBNxiii \undefined \def \showISBNxiii  #1{\unskip}     \fi
\ifx \showISSN     \undefined \def \showISSN      #1{\unskip}     \fi
\ifx \showLCCN     \undefined \def \showLCCN      #1{\unskip}     \fi
\ifx \shownote     \undefined \def \shownote      #1{#1}          \fi
\ifx \showarticletitle \undefined \def \showarticletitle #1{#1}   \fi
\ifx \showURL      \undefined \def \showURL       {\relax}        \fi
\providecommand\bibfield[2]{#2}
\providecommand\bibinfo[2]{#2}
\providecommand\natexlab[1]{#1}
\providecommand\showeprint[2][]{arXiv:#2}

\bibitem[Ahmed et~al\mbox{.}(2023)]%
        {ahmed2023ssda}
\bibfield{author}{\bibinfo{person}{Sabbir Ahmed}, \bibinfo{person}{Abdullah Al~Arafat}, \bibinfo{person}{Mamshad~Nayeem Rizve}, \bibinfo{person}{Rahim Hossain}, \bibinfo{person}{Zhishan Guo}, {and} \bibinfo{person}{Adnan~Siraj Rakin}.} \bibinfo{year}{2023}\natexlab{}.
\newblock \showarticletitle{SSDA: Secure Source-Free Domain Adaptation}. In \bibinfo{booktitle}{\emph{Proceedings of the IEEE/CVF International Conference on Computer Vision}}. \bibinfo{pages}{19180--19190}.
\newblock


\bibitem[Amari(1998)]%
        {amari1998natural}
\bibfield{author}{\bibinfo{person}{Shun-Ichi Amari}.} \bibinfo{year}{1998}\natexlab{}.
\newblock \showarticletitle{Natural gradient works efficiently in learning}.
\newblock \bibinfo{journal}{\emph{Neural computation}} \bibinfo{volume}{10}, \bibinfo{number}{2} (\bibinfo{year}{1998}), \bibinfo{pages}{251--276}.
\newblock


\bibitem[Bahdanau et~al\mbox{.}(2014)]%
        {bahdanau2014neural}
\bibfield{author}{\bibinfo{person}{Dzmitry Bahdanau}, \bibinfo{person}{Kyunghyun Cho}, {and} \bibinfo{person}{Yoshua Bengio}.} \bibinfo{year}{2014}\natexlab{}.
\newblock \showarticletitle{Neural machine translation by jointly learning to align and translate}.
\newblock \bibinfo{journal}{\emph{arXiv preprint arXiv:1409.0473}} (\bibinfo{year}{2014}).
\newblock


\bibitem[Barni et~al\mbox{.}(2019)]%
        {barni2019new}
\bibfield{author}{\bibinfo{person}{Mauro Barni}, \bibinfo{person}{Kassem Kallas}, {and} \bibinfo{person}{Benedetta Tondi}.} \bibinfo{year}{2019}\natexlab{}.
\newblock \showarticletitle{A new backdoor attack in cnns by training set corruption without label poisoning}. In \bibinfo{booktitle}{\emph{2019 IEEE International Conference on Image Processing (ICIP)}}. IEEE, \bibinfo{pages}{101--105}.
\newblock


\bibitem[Benyamini and Lindenstrauss(1998)]%
        {benyamini1998geometric}
\bibfield{author}{\bibinfo{person}{Yoav Benyamini} {and} \bibinfo{person}{Joram Lindenstrauss}.} \bibinfo{year}{1998}\natexlab{}.
\newblock \bibinfo{booktitle}{\emph{Geometric nonlinear functional analysis}}. Vol.~\bibinfo{volume}{48}.
\newblock \bibinfo{publisher}{American Mathematical Soc.}
\newblock


\bibitem[Bojar et~al\mbox{.}(2014)]%
        {bojar-etal-2014-findings}
\bibfield{author}{\bibinfo{person}{Ond{\v{r}}ej Bojar}, \bibinfo{person}{Christian Buck}, \bibinfo{person}{Christian Federmann}, \bibinfo{person}{Barry Haddow}, \bibinfo{person}{Philipp Koehn}, \bibinfo{person}{Johannes Leveling}, \bibinfo{person}{Christof Monz}, \bibinfo{person}{Pavel Pecina}, \bibinfo{person}{Matt Post}, \bibinfo{person}{Herve Saint-Amand}, \bibinfo{person}{Radu Soricut}, \bibinfo{person}{Lucia Specia}, {and} \bibinfo{person}{Ale{\v{s}} Tamchyna}.} \bibinfo{year}{2014}\natexlab{}.
\newblock \showarticletitle{Findings of the 2014 Workshop on Statistical Machine Translation}. In \bibinfo{booktitle}{\emph{Proceedings of the Ninth Workshop on Statistical Machine Translation}}. \bibinfo{publisher}{Association for Computational Linguistics}, \bibinfo{address}{Baltimore, Maryland, USA}, \bibinfo{pages}{12--58}.
\newblock
\urldef\tempurl%
\url{https://doi.org/10.3115/v1/W14-3302}
\showDOI{\tempurl}


\bibitem[Borgnia et~al\mbox{.}(2021)]%
        {borgnia2021strong}
\bibfield{author}{\bibinfo{person}{Eitan Borgnia}, \bibinfo{person}{Valeriia Cherepanova}, \bibinfo{person}{Liam Fowl}, \bibinfo{person}{Amin Ghiasi}, \bibinfo{person}{Jonas Geiping}, \bibinfo{person}{Micah Goldblum}, \bibinfo{person}{Tom Goldstein}, {and} \bibinfo{person}{Arjun Gupta}.} \bibinfo{year}{2021}\natexlab{}.
\newblock \showarticletitle{Strong data augmentation sanitizes poisoning and backdoor attacks without an accuracy tradeoff}. In \bibinfo{booktitle}{\emph{ICASSP 2021-2021 IEEE International Conference on Acoustics, Speech and Signal Processing (ICASSP)}}. IEEE, \bibinfo{pages}{3855--3859}.
\newblock


\bibitem[Boyd and Vandenberghe(2004)]%
        {boyd2004convex}
\bibfield{author}{\bibinfo{person}{Stephen~P Boyd} {and} \bibinfo{person}{Lieven Vandenberghe}.} \bibinfo{year}{2004}\natexlab{}.
\newblock \bibinfo{booktitle}{\emph{Convex optimization}}.
\newblock \bibinfo{publisher}{Cambridge university press}.
\newblock


\bibitem[Chai and Chen(2022)]%
        {chai2022one}
\bibfield{author}{\bibinfo{person}{Shuwen Chai} {and} \bibinfo{person}{Jinghui Chen}.} \bibinfo{year}{2022}\natexlab{}.
\newblock \showarticletitle{One-shot Neural Backdoor Erasing via Adversarial Weight Masking}.
\newblock \bibinfo{journal}{\emph{arXiv preprint arXiv:2207.04497}} (\bibinfo{year}{2022}).
\newblock


\bibitem[Chen et~al\mbox{.}(2023)]%
        {chen2023clean}
\bibfield{author}{\bibinfo{person}{Kangjie Chen}, \bibinfo{person}{Xiaoxuan Lou}, \bibinfo{person}{Guowen Xu}, \bibinfo{person}{Jiwei Li}, {and} \bibinfo{person}{Tianwei Zhang}.} \bibinfo{year}{2023}\natexlab{}.
\newblock \showarticletitle{Clean-image Backdoor: Attacking Multi-label Models with Poisoned Labels Only}. In \bibinfo{booktitle}{\emph{The Eleventh International Conference on Learning Representations}}.
\newblock


\bibitem[Chen et~al\mbox{.}(2017)]%
        {chen2017targeted}
\bibfield{author}{\bibinfo{person}{Xinyun Chen}, \bibinfo{person}{Chang Liu}, \bibinfo{person}{Bo Li}, \bibinfo{person}{Kimberly Lu}, {and} \bibinfo{person}{Dawn Song}.} \bibinfo{year}{2017}\natexlab{}.
\newblock \showarticletitle{Targeted backdoor attacks on deep learning systems using data poisoning}.
\newblock \bibinfo{journal}{\emph{arXiv preprint arXiv:1712.05526}} (\bibinfo{year}{2017}).
\newblock


\bibitem[Cheng et~al\mbox{.}(2021)]%
        {cheng2021deep}
\bibfield{author}{\bibinfo{person}{Siyuan Cheng}, \bibinfo{person}{Yingqi Liu}, \bibinfo{person}{Shiqing Ma}, {and} \bibinfo{person}{Xiangyu Zhang}.} \bibinfo{year}{2021}\natexlab{}.
\newblock \showarticletitle{Deep feature space trojan attack of neural networks by controlled detoxification}. In \bibinfo{booktitle}{\emph{AAAI}}, Vol.~\bibinfo{volume}{35}. \bibinfo{pages}{1148--1156}.
\newblock


\bibitem[Cohen et~al\mbox{.}(2019)]%
        {cohen2019certified}
\bibfield{author}{\bibinfo{person}{Jeremy Cohen}, \bibinfo{person}{Elan Rosenfeld}, {and} \bibinfo{person}{Zico Kolter}.} \bibinfo{year}{2019}\natexlab{}.
\newblock \showarticletitle{Certified adversarial robustness via randomized smoothing}. In \bibinfo{booktitle}{\emph{international conference on machine learning}}. PMLR, \bibinfo{pages}{1310--1320}.
\newblock


\bibitem[Deng et~al\mbox{.}(2009)]%
        {deng2009imagenet}
\bibfield{author}{\bibinfo{person}{Jia Deng}, \bibinfo{person}{Wei Dong}, \bibinfo{person}{Richard Socher}, \bibinfo{person}{Li-Jia Li}, \bibinfo{person}{Kai Li}, {and} \bibinfo{person}{Li Fei-Fei}.} \bibinfo{year}{2009}\natexlab{}.
\newblock \showarticletitle{Imagenet: A large-scale hierarchical image database}. In \bibinfo{booktitle}{\emph{CVPR}}. IEEE, \bibinfo{pages}{248--255}.
\newblock


\bibitem[Diakonikolas et~al\mbox{.}(2019)]%
        {diakonikolas2019sever}
\bibfield{author}{\bibinfo{person}{Ilias Diakonikolas}, \bibinfo{person}{Gautam Kamath}, \bibinfo{person}{Daniel Kane}, \bibinfo{person}{Jerry Li}, \bibinfo{person}{Jacob Steinhardt}, {and} \bibinfo{person}{Alistair Stewart}.} \bibinfo{year}{2019}\natexlab{}.
\newblock \showarticletitle{Sever: A robust meta-algorithm for stochastic optimization}. In \bibinfo{booktitle}{\emph{International Conference on Machine Learning}}. PMLR, \bibinfo{pages}{1596--1606}.
\newblock


\bibitem[Doan et~al\mbox{.}(2020)]%
        {doan2020februus}
\bibfield{author}{\bibinfo{person}{Bao~Gia Doan}, \bibinfo{person}{Ehsan Abbasnejad}, {and} \bibinfo{person}{Damith~C Ranasinghe}.} \bibinfo{year}{2020}\natexlab{}.
\newblock \showarticletitle{Februus: Input purification defense against trojan attacks on deep neural network systems}. In \bibinfo{booktitle}{\emph{Annual Computer Security Applications Conference}}. \bibinfo{pages}{897--912}.
\newblock


\bibitem[Doan et~al\mbox{.}(2021)]%
        {doan2021lira}
\bibfield{author}{\bibinfo{person}{Khoa Doan}, \bibinfo{person}{Yingjie Lao}, \bibinfo{person}{Weijie Zhao}, {and} \bibinfo{person}{Ping Li}.} \bibinfo{year}{2021}\natexlab{}.
\newblock \showarticletitle{Lira: Learnable, imperceptible and robust backdoor attacks}. In \bibinfo{booktitle}{\emph{Proceedings of the IEEE/CVF International Conference on Computer Vision}}. \bibinfo{pages}{11966--11976}.
\newblock


\bibitem[Dosovitskiy et~al\mbox{.}(2020)]%
        {dosovitskiy2020image}
\bibfield{author}{\bibinfo{person}{Alexey Dosovitskiy}, \bibinfo{person}{Lucas Beyer}, \bibinfo{person}{Alexander Kolesnikov}, \bibinfo{person}{Dirk Weissenborn}, \bibinfo{person}{Xiaohua Zhai}, \bibinfo{person}{Thomas Unterthiner}, \bibinfo{person}{Mostafa Dehghani}, \bibinfo{person}{Matthias Minderer}, \bibinfo{person}{Georg Heigold}, \bibinfo{person}{Sylvain Gelly}, {et~al\mbox{.}}} \bibinfo{year}{2020}\natexlab{}.
\newblock \showarticletitle{An image is worth 16x16 words: Transformers for image recognition at scale}.
\newblock \bibinfo{journal}{\emph{arXiv preprint arXiv:2010.11929}} (\bibinfo{year}{2020}).
\newblock


\bibitem[Du et~al\mbox{.}(2019)]%
        {du2019robust}
\bibfield{author}{\bibinfo{person}{Min Du}, \bibinfo{person}{Ruoxi Jia}, {and} \bibinfo{person}{Dawn Song}.} \bibinfo{year}{2019}\natexlab{}.
\newblock \showarticletitle{Robust anomaly detection and backdoor attack detection via differential privacy}.
\newblock \bibinfo{journal}{\emph{arXiv preprint arXiv:1911.07116}} (\bibinfo{year}{2019}).
\newblock


\bibitem[Everingham et~al\mbox{.}(2010)]%
        {everingham2010pascal}
\bibfield{author}{\bibinfo{person}{Mark Everingham}, \bibinfo{person}{Luc Van~Gool}, \bibinfo{person}{Christopher~KI Williams}, \bibinfo{person}{John Winn}, {and} \bibinfo{person}{Andrew Zisserman}.} \bibinfo{year}{2010}\natexlab{}.
\newblock \showarticletitle{The pascal visual object classes (voc) challenge}.
\newblock \bibinfo{journal}{\emph{International journal of computer vision}}  \bibinfo{volume}{88} (\bibinfo{year}{2010}), \bibinfo{pages}{303--338}.
\newblock


\bibitem[Everingham et~al\mbox{.}({[n.\,d.]})]%
        {pascal-voc-2012}
\bibfield{author}{\bibinfo{person}{M. Everingham}, \bibinfo{person}{L. Van~Gool}, \bibinfo{person}{C.~K.~I. Williams}, \bibinfo{person}{J. Winn}, {and} \bibinfo{person}{A. Zisserman}.} \bibinfo{year}{[n.\,d.]}\natexlab{}.
\newblock \bibinfo{title}{The {PASCAL} {V}isual {O}bject {C}lasses {C}hallenge 2012 {(VOC2012)} {R}esults}.
\newblock \bibinfo{howpublished}{http://www.pascal-network.org/challenges/VOC/voc2012/workshop/index.html}.
\newblock


\bibitem[Foret et~al\mbox{.}(2021)]%
        {foret2021sharpnessaware}
\bibfield{author}{\bibinfo{person}{Pierre Foret}, \bibinfo{person}{Ariel Kleiner}, \bibinfo{person}{Hossein Mobahi}, {and} \bibinfo{person}{Behnam Neyshabur}.} \bibinfo{year}{2021}\natexlab{}.
\newblock \showarticletitle{Sharpness-aware Minimization for Efficiently Improving Generalization}. In \bibinfo{booktitle}{\emph{International Conference on Learning Representations}}.
\newblock
\urldef\tempurl%
\url{https://openreview.net/forum?id=6Tm1mposlrM}
\showURL{%
\tempurl}


\bibitem[Gehring et~al\mbox{.}(2017)]%
        {gehring2017convolutional}
\bibfield{author}{\bibinfo{person}{Jonas Gehring}, \bibinfo{person}{Michael Auli}, \bibinfo{person}{David Grangier}, \bibinfo{person}{Denis Yarats}, {and} \bibinfo{person}{Yann~N Dauphin}.} \bibinfo{year}{2017}\natexlab{}.
\newblock \showarticletitle{Convolutional sequence to sequence learning}. In \bibinfo{booktitle}{\emph{International conference on machine learning}}. PMLR, \bibinfo{pages}{1243--1252}.
\newblock


\bibitem[Gu et~al\mbox{.}(2019)]%
        {gu2019badnets}
\bibfield{author}{\bibinfo{person}{Tianyu Gu}, \bibinfo{person}{Kang Liu}, \bibinfo{person}{Brendan Dolan-Gavitt}, {and} \bibinfo{person}{Siddharth Garg}.} \bibinfo{year}{2019}\natexlab{}.
\newblock \showarticletitle{Badnets: Evaluating backdooring attacks on deep neural networks}.
\newblock \bibinfo{journal}{\emph{IEEE Access}}  \bibinfo{volume}{7} (\bibinfo{year}{2019}), \bibinfo{pages}{47230--47244}.
\newblock


\bibitem[He et~al\mbox{.}(2016)]%
        {he2016identity}
\bibfield{author}{\bibinfo{person}{Kaiming He}, \bibinfo{person}{Xiangyu Zhang}, \bibinfo{person}{Shaoqing Ren}, {and} \bibinfo{person}{Jian Sun}.} \bibinfo{year}{2016}\natexlab{}.
\newblock \showarticletitle{Identity mappings in deep residual networks}. In \bibinfo{booktitle}{\emph{European conference on computer vision}}. Springer, \bibinfo{pages}{630--645}.
\newblock


\bibitem[Hong et~al\mbox{.}(2020)]%
        {hong2020effectiveness}
\bibfield{author}{\bibinfo{person}{Sanghyun Hong}, \bibinfo{person}{Varun Chandrasekaran}, \bibinfo{person}{Yi{\u{g}}itcan Kaya}, \bibinfo{person}{Tudor Dumitra{\c{s}}}, {and} \bibinfo{person}{Nicolas Papernot}.} \bibinfo{year}{2020}\natexlab{}.
\newblock \showarticletitle{On the effectiveness of mitigating data poisoning attacks with gradient shaping}.
\newblock \bibinfo{journal}{\emph{arXiv preprint arXiv:2002.11497}} (\bibinfo{year}{2020}).
\newblock


\bibitem[Huang et~al\mbox{.}(2017)]%
        {huang2017densely}
\bibfield{author}{\bibinfo{person}{Gao Huang}, \bibinfo{person}{Zhuang Liu}, \bibinfo{person}{Laurens Van Der~Maaten}, {and} \bibinfo{person}{Kilian~Q Weinberger}.} \bibinfo{year}{2017}\natexlab{}.
\newblock \showarticletitle{Densely connected convolutional networks}. In \bibinfo{booktitle}{\emph{Proceedings of the IEEE conference on computer vision and pattern recognition}}. \bibinfo{pages}{4700--4708}.
\newblock


\bibitem[Huang et~al\mbox{.}(2022)]%
        {huang2022backdoor}
\bibfield{author}{\bibinfo{person}{Kunzhe Huang}, \bibinfo{person}{Yiming Li}, \bibinfo{person}{Baoyuan Wu}, \bibinfo{person}{Zhan Qin}, {and} \bibinfo{person}{Kui Ren}.} \bibinfo{year}{2022}\natexlab{}.
\newblock \showarticletitle{Backdoor defense via decoupling the training process}.
\newblock \bibinfo{journal}{\emph{arXiv preprint arXiv:2202.03423}} (\bibinfo{year}{2022}).
\newblock


\bibitem[Jastrzebski et~al\mbox{.}(2020)]%
        {jastrzebski2020break}
\bibfield{author}{\bibinfo{person}{Stanislaw Jastrzebski}, \bibinfo{person}{Maciej Szymczak}, \bibinfo{person}{Stanislav Fort}, \bibinfo{person}{Devansh Arpit}, \bibinfo{person}{Jacek Tabor}, \bibinfo{person}{Kyunghyun Cho}, {and} \bibinfo{person}{Krzysztof Geras}.} \bibinfo{year}{2020}\natexlab{}.
\newblock \showarticletitle{The break-even point on optimization trajectories of deep neural networks}.
\newblock \bibinfo{journal}{\emph{arXiv preprint arXiv:2002.09572}} (\bibinfo{year}{2020}).
\newblock


\bibitem[Kanai et~al\mbox{.}(2023)]%
        {kanai2023relationship}
\bibfield{author}{\bibinfo{person}{Sekitoshi Kanai}, \bibinfo{person}{Masanori Yamada}, \bibinfo{person}{Hiroshi Takahashi}, \bibinfo{person}{Yuki Yamanaka}, {and} \bibinfo{person}{Yasutoshi Ida}.} \bibinfo{year}{2023}\natexlab{}.
\newblock \showarticletitle{Relationship between nonsmoothness in adversarial training, constraints of attacks, and flatness in the input space}.
\newblock \bibinfo{journal}{\emph{IEEE Transactions on Neural Networks and Learning Systems}} (\bibinfo{year}{2023}).
\newblock


\bibitem[Karim et~al\mbox{.}(2024)]%
        {karim2024augmented}
\bibfield{author}{\bibinfo{person}{Nazmul Karim}, \bibinfo{person}{Abdullah~Al Arafat}, \bibinfo{person}{Umar Khalid}, \bibinfo{person}{Zhishan Guo}, {and} \bibinfo{person}{Nazanin Rahnavard}.} \bibinfo{year}{2024}\natexlab{}.
\newblock \showarticletitle{Augmented Neural Fine-Tuning for Efficient Backdoor Purification}. In \bibinfo{booktitle}{\emph{The 18th European Conference on Computer Vision (ECCV)}}.
\newblock


\bibitem[Keskar et~al\mbox{.}(2017)]%
        {keskar2016large}
\bibfield{author}{\bibinfo{person}{Nitish~Shirish Keskar}, \bibinfo{person}{Dheevatsa Mudigere}, \bibinfo{person}{Jorge Nocedal}, \bibinfo{person}{Mikhail Smelyanskiy}, {and} \bibinfo{person}{Ping Tak~Peter Tang}.} \bibinfo{year}{2017}\natexlab{}.
\newblock \showarticletitle{On large-batch training for deep learning: Generalization gap and sharp minima}. In \bibinfo{booktitle}{\emph{International Conference on Learning Representations}}.
\newblock


\bibitem[Koh and Liang(2017)]%
        {koh2017understanding}
\bibfield{author}{\bibinfo{person}{Pang~Wei Koh} {and} \bibinfo{person}{Percy Liang}.} \bibinfo{year}{2017}\natexlab{}.
\newblock \showarticletitle{Understanding black-box predictions via influence functions}. In \bibinfo{booktitle}{\emph{International conference on machine learning}}. PMLR, \bibinfo{pages}{1885--1894}.
\newblock


\bibitem[Krizhevsky et~al\mbox{.}(2009)]%
        {krizhevsky2009learning}
\bibfield{author}{\bibinfo{person}{Alex Krizhevsky}, \bibinfo{person}{Geoffrey Hinton}, {et~al\mbox{.}}} \bibinfo{year}{2009}\natexlab{}.
\newblock \showarticletitle{Learning multiple layers of features from tiny images}.
\newblock  (\bibinfo{year}{2009}).
\newblock


\bibitem[Kuehne et~al\mbox{.}(2011)]%
        {kuehne2011hmdb}
\bibfield{author}{\bibinfo{person}{Hildegard Kuehne}, \bibinfo{person}{Hueihan Jhuang}, \bibinfo{person}{Est{\'\i}baliz Garrote}, \bibinfo{person}{Tomaso Poggio}, {and} \bibinfo{person}{Thomas Serre}.} \bibinfo{year}{2011}\natexlab{}.
\newblock \showarticletitle{HMDB: a large video database for human motion recognition}. In \bibinfo{booktitle}{\emph{2011 International conference on computer vision}}. IEEE, \bibinfo{pages}{2556--2563}.
\newblock


\bibitem[Kwon et~al\mbox{.}(2021)]%
        {kwon2021asam}
\bibfield{author}{\bibinfo{person}{Jungmin Kwon}, \bibinfo{person}{Jeongseop Kim}, \bibinfo{person}{Hyunseo Park}, {and} \bibinfo{person}{In~Kwon Choi}.} \bibinfo{year}{2021}\natexlab{}.
\newblock \showarticletitle{Asam: Adaptive sharpness-aware minimization for scale-invariant learning of deep neural networks}. In \bibinfo{booktitle}{\emph{International Conference on Machine Learning}}. PMLR, \bibinfo{pages}{5905--5914}.
\newblock


\bibitem[Le and Yang(2015)]%
        {le2015tiny}
\bibfield{author}{\bibinfo{person}{Ya Le} {and} \bibinfo{person}{Xuan Yang}.} \bibinfo{year}{2015}\natexlab{}.
\newblock \showarticletitle{Tiny imagenet visual recognition challenge}.
\newblock \bibinfo{journal}{\emph{CS 231N}} \bibinfo{volume}{7}, \bibinfo{number}{7} (\bibinfo{year}{2015}), \bibinfo{pages}{3}.
\newblock


\bibitem[Li et~al\mbox{.}(2021a)]%
        {li2021pointba}
\bibfield{author}{\bibinfo{person}{Xinke Li}, \bibinfo{person}{Zhirui Chen}, \bibinfo{person}{Yue Zhao}, \bibinfo{person}{Zekun Tong}, \bibinfo{person}{Yabang Zhao}, \bibinfo{person}{Andrew Lim}, {and} \bibinfo{person}{Joey~Tianyi Zhou}.} \bibinfo{year}{2021}\natexlab{a}.
\newblock \showarticletitle{Pointba: Towards backdoor attacks in 3d point cloud}. In \bibinfo{booktitle}{\emph{Proceedings of the IEEE/CVF International Conference on Computer Vision}}. \bibinfo{pages}{16492--16501}.
\newblock


\bibitem[Li et~al\mbox{.}(2021b)]%
        {li2021invisible}
\bibfield{author}{\bibinfo{person}{Yuezun Li}, \bibinfo{person}{Yiming Li}, \bibinfo{person}{Baoyuan Wu}, \bibinfo{person}{Longkang Li}, \bibinfo{person}{Ran He}, {and} \bibinfo{person}{Siwei Lyu}.} \bibinfo{year}{2021}\natexlab{b}.
\newblock \showarticletitle{Invisible backdoor attack with sample-specific triggers}. In \bibinfo{booktitle}{\emph{Proceedings of the IEEE/CVF International Conference on Computer Vision}}. \bibinfo{pages}{16463--16472}.
\newblock


\bibitem[Li et~al\mbox{.}(2021c)]%
        {li2021anti}
\bibfield{author}{\bibinfo{person}{Yige Li}, \bibinfo{person}{Xixiang Lyu}, \bibinfo{person}{Nodens Koren}, \bibinfo{person}{Lingjuan Lyu}, \bibinfo{person}{Bo Li}, {and} \bibinfo{person}{Xingjun Ma}.} \bibinfo{year}{2021}\natexlab{c}.
\newblock \showarticletitle{Anti-backdoor learning: Training clean models on poisoned data}.
\newblock \bibinfo{journal}{\emph{Advances in Neural Information Processing Systems}}  \bibinfo{volume}{34} (\bibinfo{year}{2021}), \bibinfo{pages}{14900--14912}.
\newblock


\bibitem[Li et~al\mbox{.}(2021d)]%
        {li2021neural}
\bibfield{author}{\bibinfo{person}{Yige Li}, \bibinfo{person}{Xixiang Lyu}, \bibinfo{person}{Nodens Koren}, \bibinfo{person}{Lingjuan Lyu}, \bibinfo{person}{Bo Li}, {and} \bibinfo{person}{Xingjun Ma}.} \bibinfo{year}{2021}\natexlab{d}.
\newblock \showarticletitle{Neural Attention Distillation: Erasing Backdoor Triggers from Deep Neural Networks}. In \bibinfo{booktitle}{\emph{International Conference on Learning Representations}}.
\newblock
\urldef\tempurl%
\url{https://openreview.net/forum?id=9l0K4OM-oXE}
\showURL{%
\tempurl}


\bibitem[Li et~al\mbox{.}(2023)]%
        {li2023reconstructive}
\bibfield{author}{\bibinfo{person}{Yige Li}, \bibinfo{person}{Xixiang Lyu}, \bibinfo{person}{Xingjun Ma}, \bibinfo{person}{Nodens Koren}, \bibinfo{person}{Lingjuan Lyu}, \bibinfo{person}{Bo Li}, {and} \bibinfo{person}{Yu-Gang Jiang}.} \bibinfo{year}{2023}\natexlab{}.
\newblock \showarticletitle{Reconstructive Neuron Pruning for Backdoor Defense}.
\newblock \bibinfo{journal}{\emph{arXiv preprint arXiv:2305.14876}} (\bibinfo{year}{2023}).
\newblock


\bibitem[Lin et~al\mbox{.}(2020)]%
        {lin2020composite}
\bibfield{author}{\bibinfo{person}{Junyu Lin}, \bibinfo{person}{Lei Xu}, \bibinfo{person}{Yingqi Liu}, {and} \bibinfo{person}{Xiangyu Zhang}.} \bibinfo{year}{2020}\natexlab{}.
\newblock \showarticletitle{Composite backdoor attack for deep neural network by mixing existing benign features}. In \bibinfo{booktitle}{\emph{CCS}}. \bibinfo{pages}{113--131}.
\newblock


\bibitem[Lin et~al\mbox{.}(2014)]%
        {lin2014microsoft}
\bibfield{author}{\bibinfo{person}{Tsung-Yi Lin}, \bibinfo{person}{Michael Maire}, \bibinfo{person}{Serge Belongie}, \bibinfo{person}{James Hays}, \bibinfo{person}{Pietro Perona}, \bibinfo{person}{Deva Ramanan}, \bibinfo{person}{Piotr Doll{\'a}r}, {and} \bibinfo{person}{C~Lawrence Zitnick}.} \bibinfo{year}{2014}\natexlab{}.
\newblock \showarticletitle{Microsoft coco: Common objects in context}. In \bibinfo{booktitle}{\emph{Computer Vision--ECCV 2014: 13th European Conference, Zurich, Switzerland, September 6-12, 2014, Proceedings, Part V 13}}. Springer, \bibinfo{pages}{740--755}.
\newblock


\bibitem[Liu et~al\mbox{.}(2020b)]%
        {liu2020loss}
\bibfield{author}{\bibinfo{person}{Chen Liu}, \bibinfo{person}{Mathieu Salzmann}, \bibinfo{person}{Tao Lin}, \bibinfo{person}{Ryota Tomioka}, {and} \bibinfo{person}{Sabine S{\"u}sstrunk}.} \bibinfo{year}{2020}\natexlab{b}.
\newblock \showarticletitle{On the loss landscape of adversarial training: Identifying challenges and how to overcome them}.
\newblock \bibinfo{journal}{\emph{Advances in Neural Information Processing Systems}}  \bibinfo{volume}{33} (\bibinfo{year}{2020}), \bibinfo{pages}{21476--21487}.
\newblock


\bibitem[Liu et~al\mbox{.}(2018)]%
        {liu2018fine}
\bibfield{author}{\bibinfo{person}{Kang Liu}, \bibinfo{person}{Brendan Dolan-Gavitt}, {and} \bibinfo{person}{Siddharth Garg}.} \bibinfo{year}{2018}\natexlab{}.
\newblock \showarticletitle{Fine-pruning: Defending against backdooring attacks on deep neural networks}. In \bibinfo{booktitle}{\emph{International Symposium on Research in Attacks, Intrusions, and Defenses}}. Springer, \bibinfo{pages}{273--294}.
\newblock


\bibitem[Liu et~al\mbox{.}(2017a)]%
        {liu2017trojaning}
\bibfield{author}{\bibinfo{person}{Yingqi Liu}, \bibinfo{person}{Shiqing Ma}, \bibinfo{person}{Yousra Aafer}, \bibinfo{person}{Wen-Chuan Lee}, \bibinfo{person}{Juan Zhai}, \bibinfo{person}{Weihang Wang}, {and} \bibinfo{person}{Xiangyu Zhang}.} \bibinfo{year}{2017}\natexlab{a}.
\newblock \showarticletitle{Trojaning attack on neural networks}.
\newblock  (\bibinfo{year}{2017}).
\newblock


\bibitem[Liu et~al\mbox{.}(2020a)]%
        {liu2020reflection}
\bibfield{author}{\bibinfo{person}{Yunfei Liu}, \bibinfo{person}{Xingjun Ma}, \bibinfo{person}{James Bailey}, {and} \bibinfo{person}{Feng Lu}.} \bibinfo{year}{2020}\natexlab{a}.
\newblock \showarticletitle{Reflection backdoor: A natural backdoor attack on deep neural networks}. In \bibinfo{booktitle}{\emph{European Conference on Computer Vision}}. Springer, \bibinfo{pages}{182--199}.
\newblock


\bibitem[Liu et~al\mbox{.}(2017b)]%
        {liu2017neural}
\bibfield{author}{\bibinfo{person}{Yuntao Liu}, \bibinfo{person}{Yang Xie}, {and} \bibinfo{person}{Ankur Srivastava}.} \bibinfo{year}{2017}\natexlab{b}.
\newblock \showarticletitle{Neural trojans}. In \bibinfo{booktitle}{\emph{2017 IEEE International Conference on Computer Design (ICCD)}}. IEEE, \bibinfo{pages}{45--48}.
\newblock


\bibitem[Ma and Liu(2019)]%
        {ma2019nic}
\bibfield{author}{\bibinfo{person}{Shiqing Ma} {and} \bibinfo{person}{Yingqi Liu}.} \bibinfo{year}{2019}\natexlab{}.
\newblock \showarticletitle{Nic: Detecting adversarial samples with neural network invariant checking}. In \bibinfo{booktitle}{\emph{Proceedings of the 26th network and distributed system security symposium (NDSS 2019)}}.
\newblock


\bibitem[Madry et~al\mbox{.}(2017)]%
        {madry2017towards}
\bibfield{author}{\bibinfo{person}{Aleksander Madry}, \bibinfo{person}{Aleksandar Makelov}, \bibinfo{person}{Ludwig Schmidt}, \bibinfo{person}{Dimitris Tsipras}, {and} \bibinfo{person}{Adrian Vladu}.} \bibinfo{year}{2017}\natexlab{}.
\newblock \showarticletitle{Towards deep learning models resistant to adversarial attacks}.
\newblock \bibinfo{journal}{\emph{arXiv preprint arXiv:1706.06083}} (\bibinfo{year}{2017}).
\newblock


\bibitem[Manoj and Blum(2021)]%
        {manoj2021excess}
\bibfield{author}{\bibinfo{person}{Naren Manoj} {and} \bibinfo{person}{Avrim Blum}.} \bibinfo{year}{2021}\natexlab{}.
\newblock \showarticletitle{Excess capacity and backdoor poisoning}.
\newblock \bibinfo{journal}{\emph{Advances in Neural Information Processing Systems}}  \bibinfo{volume}{34} (\bibinfo{year}{2021}), \bibinfo{pages}{20373--20384}.
\newblock


\bibitem[Nguyen and Tran(2021)]%
        {nguyen2021wanet}
\bibfield{author}{\bibinfo{person}{Anh Nguyen} {and} \bibinfo{person}{Anh Tran}.} \bibinfo{year}{2021}\natexlab{}.
\newblock \showarticletitle{WaNet--Imperceptible Warping-based Backdoor Attack}.
\newblock \bibinfo{journal}{\emph{arXiv preprint arXiv:2102.10369}} (\bibinfo{year}{2021}).
\newblock


\bibitem[Nguyen and Tran(2020)]%
        {nguyen2020input}
\bibfield{author}{\bibinfo{person}{Tuan~Anh Nguyen} {and} \bibinfo{person}{Anh Tran}.} \bibinfo{year}{2020}\natexlab{}.
\newblock \showarticletitle{Input-aware dynamic backdoor attack}.
\newblock \bibinfo{journal}{\emph{Advances in Neural Information Processing Systems}}  \bibinfo{volume}{33} (\bibinfo{year}{2020}), \bibinfo{pages}{3454--3464}.
\newblock


\bibitem[Qi et~al\mbox{.}(2017)]%
        {qi2017pointnet++}
\bibfield{author}{\bibinfo{person}{Charles~Ruizhongtai Qi}, \bibinfo{person}{Li Yi}, \bibinfo{person}{Hao Su}, {and} \bibinfo{person}{Leonidas~J Guibas}.} \bibinfo{year}{2017}\natexlab{}.
\newblock \showarticletitle{Pointnet++: Deep hierarchical feature learning on point sets in a metric space}.
\newblock \bibinfo{journal}{\emph{Advances in neural information processing systems}}  \bibinfo{volume}{30} (\bibinfo{year}{2017}).
\newblock


\bibitem[Ridnik et~al\mbox{.}(2023)]%
        {ridnik2023ml}
\bibfield{author}{\bibinfo{person}{Tal Ridnik}, \bibinfo{person}{Gilad Sharir}, \bibinfo{person}{Avi Ben-Cohen}, \bibinfo{person}{Emanuel Ben-Baruch}, {and} \bibinfo{person}{Asaf Noy}.} \bibinfo{year}{2023}\natexlab{}.
\newblock \showarticletitle{Ml-decoder: Scalable and versatile classification head}. In \bibinfo{booktitle}{\emph{Proceedings of the IEEE/CVF Winter Conference on Applications of Computer Vision}}. \bibinfo{pages}{32--41}.
\newblock


\bibitem[Saha et~al\mbox{.}(2020)]%
        {saha2020hidden}
\bibfield{author}{\bibinfo{person}{Aniruddha Saha}, \bibinfo{person}{Akshayvarun Subramanya}, {and} \bibinfo{person}{Hamed Pirsiavash}.} \bibinfo{year}{2020}\natexlab{}.
\newblock \showarticletitle{Hidden trigger backdoor attacks}. In \bibinfo{booktitle}{\emph{Proceedings of the AAAI conference on artificial intelligence}}, Vol.~\bibinfo{volume}{34}. \bibinfo{pages}{11957--11965}.
\newblock


\bibitem[Saha et~al\mbox{.}(2022)]%
        {saha2022backdoor}
\bibfield{author}{\bibinfo{person}{Aniruddha Saha}, \bibinfo{person}{Ajinkya Tejankar}, \bibinfo{person}{Soroush~Abbasi Koohpayegani}, {and} \bibinfo{person}{Hamed Pirsiavash}.} \bibinfo{year}{2022}\natexlab{}.
\newblock \showarticletitle{Backdoor attacks on self-supervised learning}. In \bibinfo{booktitle}{\emph{Proceedings of the IEEE/CVF Conference on Computer Vision and Pattern Recognition}}. \bibinfo{pages}{13337--13346}.
\newblock


\bibitem[Sandler et~al\mbox{.}(2018)]%
        {sandler2018mobilenetv2}
\bibfield{author}{\bibinfo{person}{Mark Sandler}, \bibinfo{person}{Andrew Howard}, \bibinfo{person}{Menglong Zhu}, \bibinfo{person}{Andrey Zhmoginov}, {and} \bibinfo{person}{Liang-Chieh Chen}.} \bibinfo{year}{2018}\natexlab{}.
\newblock \showarticletitle{Mobilenetv2: Inverted residuals and linear bottlenecks}. In \bibinfo{booktitle}{\emph{Proceedings of the IEEE conference on computer vision and pattern recognition}}. \bibinfo{pages}{4510--4520}.
\newblock


\bibitem[Sherstinsky(2020)]%
        {sherstinsky2020fundamentals}
\bibfield{author}{\bibinfo{person}{Alex Sherstinsky}.} \bibinfo{year}{2020}\natexlab{}.
\newblock \showarticletitle{Fundamentals of recurrent neural network (RNN) and long short-term memory (LSTM) network}.
\newblock \bibinfo{journal}{\emph{Physica D: Nonlinear Phenomena}}  \bibinfo{volume}{404} (\bibinfo{year}{2020}), \bibinfo{pages}{132306}.
\newblock


\bibitem[Simonyan and Zisserman(2014)]%
        {simonyan2014very}
\bibfield{author}{\bibinfo{person}{Karen Simonyan} {and} \bibinfo{person}{Andrew Zisserman}.} \bibinfo{year}{2014}\natexlab{}.
\newblock \showarticletitle{Very deep convolutional networks for large-scale image recognition}.
\newblock \bibinfo{journal}{\emph{arXiv preprint arXiv:1409.1556}} (\bibinfo{year}{2014}).
\newblock


\bibitem[Sinha et~al\mbox{.}(2018)]%
        {sinha2018certifiable}
\bibfield{author}{\bibinfo{person}{Aman Sinha}, \bibinfo{person}{Hongseok Namkoong}, {and} \bibinfo{person}{John Duchi}.} \bibinfo{year}{2018}\natexlab{}.
\newblock \showarticletitle{Certifiable Distributional Robustness with Principled Adversarial Training}. In \bibinfo{booktitle}{\emph{International Conference on Learning Representations}}.
\newblock
\urldef\tempurl%
\url{https://openreview.net/forum?id=Hk6kPgZA-}
\showURL{%
\tempurl}


\bibitem[Soomro et~al\mbox{.}(2012)]%
        {soomro2012ucf101}
\bibfield{author}{\bibinfo{person}{Khurram Soomro}, \bibinfo{person}{Amir~Roshan Zamir}, {and} \bibinfo{person}{Mubarak Shah}.} \bibinfo{year}{2012}\natexlab{}.
\newblock \showarticletitle{UCF101: A dataset of 101 human actions classes from videos in the wild}.
\newblock \bibinfo{journal}{\emph{arXiv preprint arXiv:1212.0402}} (\bibinfo{year}{2012}).
\newblock


\bibitem[Stallkamp et~al\mbox{.}(2011)]%
        {stallkamp2011german}
\bibfield{author}{\bibinfo{person}{Johannes Stallkamp}, \bibinfo{person}{Marc Schlipsing}, \bibinfo{person}{Jan Salmen}, {and} \bibinfo{person}{Christian Igel}.} \bibinfo{year}{2011}\natexlab{}.
\newblock \showarticletitle{The German traffic sign recognition benchmark: a multi-class classification competition}. In \bibinfo{booktitle}{\emph{The 2011 international joint conference on neural networks}}. IEEE, \bibinfo{pages}{1453--1460}.
\newblock


\bibitem[Steinhardt et~al\mbox{.}(2017)]%
        {steinhardt2017certified}
\bibfield{author}{\bibinfo{person}{Jacob Steinhardt}, \bibinfo{person}{Pang Wei~W Koh}, {and} \bibinfo{person}{Percy~S Liang}.} \bibinfo{year}{2017}\natexlab{}.
\newblock \showarticletitle{Certified defenses for data poisoning attacks}.
\newblock \bibinfo{journal}{\emph{Advances in neural information processing systems}}  \bibinfo{volume}{30} (\bibinfo{year}{2017}).
\newblock


\bibitem[Sun et~al\mbox{.}(2023)]%
        {sun2023defending}
\bibfield{author}{\bibinfo{person}{Xiaofei Sun}, \bibinfo{person}{Xiaoya Li}, \bibinfo{person}{Yuxian Meng}, \bibinfo{person}{Xiang Ao}, \bibinfo{person}{Lingjuan Lyu}, \bibinfo{person}{Jiwei Li}, {and} \bibinfo{person}{Tianwei Zhang}.} \bibinfo{year}{2023}\natexlab{}.
\newblock \showarticletitle{Defending against backdoor attacks in natural language generation}. In \bibinfo{booktitle}{\emph{Proceedings of the AAAI Conference on Artificial Intelligence}}, Vol.~\bibinfo{volume}{37}. \bibinfo{pages}{5257--5265}.
\newblock


\bibitem[Szegedy et~al\mbox{.}(2014)]%
        {szegedy2014going}
\bibfield{author}{\bibinfo{person}{Christian Szegedy}, \bibinfo{person}{Wei Liu}, \bibinfo{person}{Yangqing Jia}, \bibinfo{person}{Pierre Sermanet}, \bibinfo{person}{Scott Reed}, \bibinfo{person}{Dragomir Anguelov}, \bibinfo{person}{Dumitru Erhan}, \bibinfo{person}{Vincent Vanhoucke}, {and} \bibinfo{person}{Andrew Rabinovich}.} \bibinfo{year}{2014}\natexlab{}.
\newblock \showarticletitle{Going deeper with convolutions. 2014}.
\newblock \bibinfo{journal}{\emph{arXiv preprint arXiv:1409.4842}}  \bibinfo{volume}{10} (\bibinfo{year}{2014}).
\newblock


\bibitem[Szegedy et~al\mbox{.}(2016)]%
        {szegedy2016rethinking}
\bibfield{author}{\bibinfo{person}{Christian Szegedy}, \bibinfo{person}{Vincent Vanhoucke}, \bibinfo{person}{Sergey Ioffe}, \bibinfo{person}{Jon Shlens}, {and} \bibinfo{person}{Zbigniew Wojna}.} \bibinfo{year}{2016}\natexlab{}.
\newblock \showarticletitle{Rethinking the inception architecture for computer vision}. In \bibinfo{booktitle}{\emph{Proceedings of the IEEE conference on computer vision and pattern recognition}}. \bibinfo{pages}{2818--2826}.
\newblock


\bibitem[Tan and Le(2019)]%
        {tan2019efficientnet}
\bibfield{author}{\bibinfo{person}{Mingxing Tan} {and} \bibinfo{person}{Quoc Le}.} \bibinfo{year}{2019}\natexlab{}.
\newblock \showarticletitle{Efficientnet: Rethinking model scaling for convolutional neural networks}. In \bibinfo{booktitle}{\emph{International conference on machine learning}}. PMLR, \bibinfo{pages}{6105--6114}.
\newblock


\bibitem[Tran et~al\mbox{.}(2018)]%
        {tran2018spectral}
\bibfield{author}{\bibinfo{person}{Brandon Tran}, \bibinfo{person}{Jerry Li}, {and} \bibinfo{person}{Aleksander Madry}.} \bibinfo{year}{2018}\natexlab{}.
\newblock \showarticletitle{Spectral signatures in backdoor attacks}.
\newblock \bibinfo{journal}{\emph{Advances in neural information processing systems}}  \bibinfo{volume}{31} (\bibinfo{year}{2018}).
\newblock


\bibitem[Turner et~al\mbox{.}(2018)]%
        {turner2018clean}
\bibfield{author}{\bibinfo{person}{Alexander Turner}, \bibinfo{person}{Dimitris Tsipras}, {and} \bibinfo{person}{Aleksander Madry}.} \bibinfo{year}{2018}\natexlab{}.
\newblock \showarticletitle{Clean-label backdoor attacks}.
\newblock  (\bibinfo{year}{2018}).
\newblock


\bibitem[Vaswani et~al\mbox{.}(2017)]%
        {vaswani2017attention}
\bibfield{author}{\bibinfo{person}{Ashish Vaswani}, \bibinfo{person}{Noam Shazeer}, \bibinfo{person}{Niki Parmar}, \bibinfo{person}{Jakob Uszkoreit}, \bibinfo{person}{Llion Jones}, \bibinfo{person}{Aidan~N Gomez}, \bibinfo{person}{{\L}ukasz Kaiser}, {and} \bibinfo{person}{Illia Polosukhin}.} \bibinfo{year}{2017}\natexlab{}.
\newblock \showarticletitle{Attention is all you need}.
\newblock \bibinfo{journal}{\emph{Advances in neural information processing systems}}  \bibinfo{volume}{30} (\bibinfo{year}{2017}).
\newblock


\bibitem[Wang et~al\mbox{.}(2019)]%
        {wang2019neural}
\bibfield{author}{\bibinfo{person}{Bolun Wang}, \bibinfo{person}{Yuanshun Yao}, \bibinfo{person}{Shawn Shan}, \bibinfo{person}{Huiying Li}, \bibinfo{person}{Bimal Viswanath}, \bibinfo{person}{Haitao Zheng}, {and} \bibinfo{person}{Ben~Y Zhao}.} \bibinfo{year}{2019}\natexlab{}.
\newblock \showarticletitle{Neural cleanse: Identifying and mitigating backdoor attacks in neural networks}. In \bibinfo{booktitle}{\emph{2019 IEEE Symposium on Security and Privacy (SP)}}. IEEE, \bibinfo{pages}{707--723}.
\newblock


\bibitem[Wang et~al\mbox{.}(2022)]%
        {wang2022bppattack}
\bibfield{author}{\bibinfo{person}{Zhenting Wang}, \bibinfo{person}{Juan Zhai}, {and} \bibinfo{person}{Shiqing Ma}.} \bibinfo{year}{2022}\natexlab{}.
\newblock \showarticletitle{BppAttack: Stealthy and Efficient Trojan Attacks against Deep Neural Networks via Image Quantization and Contrastive Adversarial Learning}. In \bibinfo{booktitle}{\emph{Proceedings of the IEEE/CVF Conference on Computer Vision and Pattern Recognition}}. \bibinfo{pages}{15074--15084}.
\newblock


\bibitem[Wu et~al\mbox{.}(2022)]%
        {wu2022backdoorbench}
\bibfield{author}{\bibinfo{person}{Baoyuan Wu}, \bibinfo{person}{Hongrui Chen}, \bibinfo{person}{Mingda Zhang}, \bibinfo{person}{Zihao Zhu}, \bibinfo{person}{Shaokui Wei}, \bibinfo{person}{Danni Yuan}, {and} \bibinfo{person}{Chao Shen}.} \bibinfo{year}{2022}\natexlab{}.
\newblock \showarticletitle{Backdoorbench: A comprehensive benchmark of backdoor learning}.
\newblock \bibinfo{journal}{\emph{Advances in Neural Information Processing Systems}}  \bibinfo{volume}{35} (\bibinfo{year}{2022}), \bibinfo{pages}{10546--10559}.
\newblock


\bibitem[Wu and Wang(2021)]%
        {wu2021adversarial}
\bibfield{author}{\bibinfo{person}{Dongxian Wu} {and} \bibinfo{person}{Yisen Wang}.} \bibinfo{year}{2021}\natexlab{}.
\newblock \showarticletitle{Adversarial Neuron Pruning Purifies Backdoored Deep Models}. In \bibinfo{booktitle}{\emph{NeurIPS}}.
\newblock


\bibitem[Wu et~al\mbox{.}(2015)]%
        {wu20153d}
\bibfield{author}{\bibinfo{person}{Zhirong Wu}, \bibinfo{person}{Shuran Song}, \bibinfo{person}{Aditya Khosla}, \bibinfo{person}{Fisher Yu}, \bibinfo{person}{Linguang Zhang}, \bibinfo{person}{Xiaoou Tang}, {and} \bibinfo{person}{Jianxiong Xiao}.} \bibinfo{year}{2015}\natexlab{}.
\newblock \showarticletitle{3d shapenets: A deep representation for volumetric shapes}. In \bibinfo{booktitle}{\emph{Proceedings of the IEEE conference on computer vision and pattern recognition}}. \bibinfo{pages}{1912--1920}.
\newblock


\bibitem[Xiang et~al\mbox{.}(2021)]%
        {xiang2021backdoor}
\bibfield{author}{\bibinfo{person}{Zhen Xiang}, \bibinfo{person}{David~J Miller}, \bibinfo{person}{Siheng Chen}, \bibinfo{person}{Xi Li}, {and} \bibinfo{person}{George Kesidis}.} \bibinfo{year}{2021}\natexlab{}.
\newblock \showarticletitle{A backdoor attack against 3d point cloud classifiers}. In \bibinfo{booktitle}{\emph{Proceedings of the IEEE/CVF International Conference on Computer Vision}}. \bibinfo{pages}{7597--7607}.
\newblock


\bibitem[Zagoruyko and Komodakis(2016)]%
        {zagoruyko2016wide}
\bibfield{author}{\bibinfo{person}{Sergey Zagoruyko} {and} \bibinfo{person}{Nikos Komodakis}.} \bibinfo{year}{2016}\natexlab{}.
\newblock \showarticletitle{Wide residual networks}.
\newblock \bibinfo{journal}{\emph{arXiv preprint arXiv:1605.07146}} (\bibinfo{year}{2016}).
\newblock


\bibitem[Zeng et~al\mbox{.}(2021a)]%
        {zeng2021adversarial}
\bibfield{author}{\bibinfo{person}{Yi Zeng}, \bibinfo{person}{Si Chen}, \bibinfo{person}{Won Park}, \bibinfo{person}{Z~Morley Mao}, \bibinfo{person}{Ming Jin}, {and} \bibinfo{person}{Ruoxi Jia}.} \bibinfo{year}{2021}\natexlab{a}.
\newblock \showarticletitle{Adversarial unlearning of backdoors via implicit hypergradient}.
\newblock \bibinfo{journal}{\emph{arXiv preprint arXiv:2110.03735}} (\bibinfo{year}{2021}).
\newblock


\bibitem[Zeng et~al\mbox{.}(2021b)]%
        {zeng2021rethinking}
\bibfield{author}{\bibinfo{person}{Yi Zeng}, \bibinfo{person}{Won Park}, \bibinfo{person}{Z~Morley Mao}, {and} \bibinfo{person}{Ruoxi Jia}.} \bibinfo{year}{2021}\natexlab{b}.
\newblock \showarticletitle{Rethinking the backdoor attacks' triggers: A frequency perspective}. In \bibinfo{booktitle}{\emph{Proceedings of the IEEE/CVF international conference on computer vision}}. \bibinfo{pages}{16473--16481}.
\newblock


\bibitem[Zhang et~al\mbox{.}(2017)]%
        {zhang2017mixup}
\bibfield{author}{\bibinfo{person}{Hongyi Zhang}, \bibinfo{person}{Moustapha Cisse}, \bibinfo{person}{Yann~N Dauphin}, {and} \bibinfo{person}{David Lopez-Paz}.} \bibinfo{year}{2017}\natexlab{}.
\newblock \showarticletitle{mixup: Beyond empirical risk minimization}.
\newblock \bibinfo{journal}{\emph{arXiv preprint arXiv:1710.09412}} (\bibinfo{year}{2017}).
\newblock


\bibitem[Zhang et~al\mbox{.}(2021)]%
        {zhang2021advdoor}
\bibfield{author}{\bibinfo{person}{Quan Zhang}, \bibinfo{person}{Yifeng Ding}, \bibinfo{person}{Yongqiang Tian}, \bibinfo{person}{Jianmin Guo}, \bibinfo{person}{Min Yuan}, {and} \bibinfo{person}{Yu Jiang}.} \bibinfo{year}{2021}\natexlab{}.
\newblock \showarticletitle{AdvDoor: adversarial backdoor attack of deep learning system}. In \bibinfo{booktitle}{\emph{Proceedings of the 30th ACM SIGSOFT International Symposium on Software Testing and Analysis}}. \bibinfo{pages}{127--138}.
\newblock


\bibitem[Zhang et~al\mbox{.}(2023)]%
        {zhang2023backdoor}
\bibfield{author}{\bibinfo{person}{Zaixi Zhang}, \bibinfo{person}{Qi Liu}, \bibinfo{person}{Zhicai Wang}, \bibinfo{person}{Zepu Lu}, {and} \bibinfo{person}{Qingyong Hu}.} \bibinfo{year}{2023}\natexlab{}.
\newblock \bibinfo{title}{Backdoor Defense via Deconfounded Representation Learning}.
\newblock
\newblock
\showeprint[arxiv]{2303.06818}~[cs.AI]


\bibitem[Zhao et~al\mbox{.}(2020a)]%
        {zhao2020bridging}
\bibfield{author}{\bibinfo{person}{Pu Zhao}, \bibinfo{person}{Pin-Yu Chen}, \bibinfo{person}{Payel Das}, \bibinfo{person}{Karthikeyan~Natesan Ramamurthy}, {and} \bibinfo{person}{Xue Lin}.} \bibinfo{year}{2020}\natexlab{a}.
\newblock \showarticletitle{Bridging mode connectivity in loss landscapes and adversarial robustness}.
\newblock \bibinfo{journal}{\emph{arXiv preprint arXiv:2005.00060}} (\bibinfo{year}{2020}).
\newblock


\bibitem[Zhao et~al\mbox{.}(2020b)]%
        {zhao2020clean}
\bibfield{author}{\bibinfo{person}{Shihao Zhao}, \bibinfo{person}{Xingjun Ma}, \bibinfo{person}{Xiang Zheng}, \bibinfo{person}{James Bailey}, \bibinfo{person}{Jingjing Chen}, {and} \bibinfo{person}{Yu-Gang Jiang}.} \bibinfo{year}{2020}\natexlab{b}.
\newblock \showarticletitle{Clean-label backdoor attacks on video recognition models}. In \bibinfo{booktitle}{\emph{Proceedings of the IEEE/CVF Conference on Computer Vision and Pattern Recognition}}. \bibinfo{pages}{14443--14452}.
\newblock


\bibitem[Zheng et~al\mbox{.}(2022)]%
        {zheng2022data}
\bibfield{author}{\bibinfo{person}{Runkai Zheng}, \bibinfo{person}{Rongjun Tang}, \bibinfo{person}{Jianze Li}, {and} \bibinfo{person}{Li Liu}.} \bibinfo{year}{2022}\natexlab{}.
\newblock \showarticletitle{Data-free backdoor removal based on channel lipschitzness}. In \bibinfo{booktitle}{\emph{Computer Vision--ECCV 2022: 17th European Conference, Tel Aviv, Israel, October 23--27, 2022, Proceedings, Part V}}. Springer, \bibinfo{pages}{175--191}.
\newblock


\bibitem[Zhong et~al\mbox{.}(2020)]%
        {zhong2020backdoor}
\bibfield{author}{\bibinfo{person}{Haoti Zhong}, \bibinfo{person}{Cong Liao}, \bibinfo{person}{Anna~Cinzia Squicciarini}, \bibinfo{person}{Sencun Zhu}, {and} \bibinfo{person}{David Miller}.} \bibinfo{year}{2020}\natexlab{}.
\newblock \showarticletitle{Backdoor embedding in convolutional neural network models via invisible perturbation}. In \bibinfo{booktitle}{\emph{Proceedings of the Tenth ACM Conference on Data and Application Security and Privacy}}. \bibinfo{pages}{97--108}.
\newblock


\bibitem[Zhu et~al\mbox{.}(2023)]%
        {zhu2023enhancing}
\bibfield{author}{\bibinfo{person}{Mingli Zhu}, \bibinfo{person}{Shaokui Wei}, \bibinfo{person}{Li Shen}, \bibinfo{person}{Yanbo Fan}, {and} \bibinfo{person}{Baoyuan Wu}.} \bibinfo{year}{2023}\natexlab{}.
\newblock \showarticletitle{Enhancing Fine-Tuning Based Backdoor Defense with Sharpness-Aware Minimization}.
\newblock \bibinfo{journal}{\emph{arXiv preprint arXiv:2304.11823}} (\bibinfo{year}{2023}).
\newblock


\end{thebibliography}
\appendix

\section{Appendix (Supplementary Material)}

\subsection{Proof of Theorem~\ref{thm:smoothness}}\label{sec:thmproof}

\begin{proof} Let us consider a training set \(\{\xbf, y\} = \{\xbf_c, y_c\} \cup \ \{\xbf_b, y_b\}\), where \(\{\xbf_c, y_c\}\)\footnote{Note that we use \(\{\xbf_c, y_c\}\) to denote clean samples whereas \(\{\xbf,y\}\) was used in the main paper to denote all training samples. We start with a clean training set, \(\{\xbf,y\}\), and then add the trigger to some of the samples \(\xbf_b\) with target label \(y_b\) that produce poison set, \( \{\xbf_b, y_b\}\).} is the set of clean samples and \( \{\xbf_b, y_b\}\) is the set of backdoor or poison samples. In our work, we estimate the loss Hessian \emph{w.r.t.} \emph{standard data distribution}, i.e. training samples with their \emph{ground truth labels}. More details on this are in \emph{Appendix~\ref{sec:smooth_Clean}}. 

First, let us consider the scenario where we optimize a DNN ($f_c$) on \(\{\xbf_c, y_c\}\) only (benign model).  From the \(L_c-\)Lipschitz property of loss-gradient (ref. Assumption 1, Eq.~(\ref{eqn.assump1})) corresponding to \textit{any clean sample}\footnote{Here, loss-gradient corresponding to clean sample means we first compute the loss using clean sample and then take the gradient.} \(\xbf_c\), we get 
\begin{equation}
||\nabla_\theta \ell(\xbf_c, \theta_1) - \nabla_\theta \ell(\xbf_c,\theta_2)|| \leq L_c ||\theta_1 -\theta_2||, ~ \forall \theta_1, \theta_2 \in \Theta    
\end{equation}

Now, consider the backdoor model training ($f_b$) setup, where both clean and poison samples are used concurrently for training. In such a scenario, a training sample can be either clean or poisoned. As we are using standard data distribution, we calculate the loss ($\ell$) corresponding to \(\{\xbf_c, y_c\}\cup\{\xbf_b, y_c\}\); where $y_c$ indicates the original ground truth (GT) label.  Let us bound the difference of loss gradient for backdoor training setup for any sample,
\begin{equation}
    \begin{aligned}\label{eq:final_smoothness2}
    ||\nabla_\theta \ell(\xbf, \theta_1) - &\nabla_\theta \ell(\xbf,\theta_2)||\\ 
    & \overset{{\color{blue}(i)}}{\leq} \max \{||\nabla_\theta \ell(\xbf_c, \theta_1) - \nabla_\theta \ell(\xbf_c,\theta_2)||, \\
        &~~~~~~~~~||\nabla_\theta \ell(\xbf_b, \theta_1) - \nabla_\theta \ell(\xbf_b,\theta_2)||\}  \\ &
    \overset{{\color{blue} (ii)}}{=} ||\nabla_\theta \ell(\xbf_b, \theta_1) - \nabla_\theta \ell(\xbf_b,\theta_2)|| \\
    & \overset{{\color{blue} (iii)}}{\leq} L_b ||\theta_1 -\theta_2|| \\
\end{aligned}
\end{equation}
here, step {\color{blue}(i)} follows trivially as \(||\nabla_\theta \ell(\xbf, \theta_1) - \nabla_\theta \ell(\xbf,\theta_2)||\) holds for any \(\xbf\). Unlike $f_c$ and $f_b$, we can have loss gradients corresponding to samples from clean and poison sets; {\color{blue} (ii)} leverages the properties of backdoor training where the backdoor is inserted by forcing $f_b$ to memorize the specific pattern or trigger $\delta$, specifically the mapping of $\delta \rightarrow y_b$.  At the same time,  $f_b$ learns (does not memorize) image or object-related generic patterns in \(\xbf_c\) and maps them to $y_c$, similar to $f_c$. Let us denote the optimized parameters of $f_b$  as, $\theta_1$. Since $f_b$ is optimized to predict $y_b$, we have a high loss gradient $\nabla_\theta \ell(\xbf_b, \theta_1)$ if we consider GT label $y_c$ for $\xbf_b$. Note that the backdoor model becomes too sensitive to the trigger due to the memorization effect. However, a certain group of parameters ($\theta^{(b)}$) show far more sensitivity to the backdoor as compared to others ($\theta^{(c)}$), where $\theta = \theta^{(c)} \cup \theta^{(b)}$ and $|\theta^{(b)}|<<|\theta^{(c)}|$. This has also been shown in previous studies~\cite{wu2021adversarial, li2023reconstructive, chai2022one}. Therefore, even a small change to $\theta^{(b)}$ will make the backdoor model show significantly less (or no) sensitivity to the trigger. On the other hand, a small change in $\theta^{(c)}$ has very little impact on recognizing image-related generic patterns.  Now, consider a scenario where $\theta_1$ is slightly changed to $\theta_2$. Due to this shift, the loss gradient $\nabla_\theta \ell(\xbf_b, \theta_2)$ becomes significantly smaller if we calculate it \emph{w.r.t.} $y_c$. This happens for the following reasons: {\color{magenta}(1)}  Due to the change in $\theta^{(b)}$, the model does not show sensitivity towards $\delta$ anymore. {\color{magenta}(2)} As mentioned before, with small change in $\theta^{(c)}$, the model can still recognize patterns in samples. This means the model ignores $\delta$ in \(\xbf_b (= \xbf + \delta\)) while recognizing image-related patterns in \(\xbf\) and predicting the GT label $y_c$. Therefore, the change in loss gradient ($||\nabla_\theta \ell(\xbf_b, \theta_1) - \nabla_\theta \ell(\xbf_b,\theta_2)||$) is large. On the other hand, due to the reason {\color{magenta}(2)}, the change in loss gradient ($||\nabla_\theta \ell(\xbf_c, \theta_1) - \nabla_\theta \ell(\xbf_c,\theta_2)||$) is smaller. Finally, we can write the following,
\begin{equation}\label{eqn:lblc}
    \begin{aligned}
       & ||\nabla_\theta\ell(\xbf_c, \theta_1) - \nabla_\theta \ell(\xbf_c,\theta_2)|| \leq L_c||\theta_1 - \theta_2||\\
        & ||\nabla_\theta\ell(\xbf_b, \theta_1) - \nabla_\theta \ell(\xbf_b,\theta_2)|| \leq L_b||\theta_1 - \theta_2||
    \end{aligned}
\end{equation}
In our above discussion, we have shown that $$||\nabla_\theta\ell(\xbf_c, \theta_1) - \nabla_\theta \ell(\xbf_c,\theta_2)|| < ||\nabla_\theta \ell(\xbf_b, \theta_1) - \nabla_\theta \ell(\xbf_b,\theta_2)||$$ Therefore, for the same set of \(\theta_1, \theta_2\), Eq.~\ref{eqn:lblc} suggests that \(
L_c < L_b\). Note that $L_c<L_b$ holds if we consider the smallest Lipschitz constant for Eq.~\ref{eqn:lblc}~\cite{benyamini1998geometric} {\color{blue} (iii)} follows the definition of smoothness.

Hence, the loss of the backdoor model is \(L_b-\)Smooth and \(L_c<L_b\).

\end{proof}

\noindent\textbf{Takeaway from the theoretical analysis.} Based on Theorem~\ref{thm:smoothness}, we can rewrite Eq.~\ref{eqn-lsigma} for a backdoor model,
\begin{equation}\label{eqn:thm-imp}
    \sup_{\theta} 
 \sigma(\nabla^2_{\theta} \mathcal{L}) \leq \max\{L_c, L_b\}  
\end{equation}
where \(\mathcal{L}\) is the loss-function of the backdoor model computed over \(\{\xbf, y\} = \{\xbf_c, y_c\} \cup \ \{\xbf_b, y_c\}\).

The R.H.S. of Eq.~\ref{eqn:thm-imp} represents the supremum\footnote{We used the definition of supremum (\url{https://en.wikipedia.org/wiki/Infimum\_and\_supremum}) slightly differently than the formal definition. By supremum, we indicate that \(\max\{L_c,L_b\}\) is the lowest value for the Lipschitz constant of the backdoor model \(f_b(.)\) to hold Eq.~(\ref{eqn:thm-imp}).} for smoothness of a backdoor model.
From Eq.~(\ref{eq:final_smoothness2}), it can be observed that \(L_c<L_b\) which leads to the following form of Eq.~\ref{eqn:thm-imp},
\begin{equation}\label{eqn:thm-imp_2}
    \sup_{\theta} 
 \sigma(\nabla^2_{\theta} \mathcal{L}) \leq L_b  
\end{equation}
Hence, a backdoor model tends to show less smoothness on \(\mathcal{L}\), computed over \(\{\xbf, y\} = \{\xbf_c, y_c\} \cup \ \{\xbf_b, y_c\}\), as compared to a benign model with \(L_c-\)Lipschitz continuity.    

\subsection{Data Distribution for Smoothness Analysis}~\label{sec:smooth_Clean} 


\vspace{-1mm}
We choose to conduct the smoothness analysis \textit{w.r.t.} standard data distribution (training data with ground truth labels). The intuition behind using this standard data distribution is as follows: Both our empirical analysis and Theorem~\ref{thm:smoothness} show that a backdoor model is less smooth than a benign model. This reveals the unstable nature of a backdoor model. It can be observed from the threat model that any backdoor attack usually does not follow the standard training procedure. Instead, the attacker achieves his/her goal by introducing an anomaly in the training process. Note that the anomaly here is not the triggered data, but \emph{the pair of triggered data and target (poison) label}. For any triggered data, if we do not change the ground truth label to the target label, the resulting model will be a robust clean model instead of a backdoor model. Because, without the target label, the model will treat the trigger as one type of augmentation. Therefore, the source of the unstable nature (of a backdoor model) lies in not using the ground truth labels for triggered or poison data. On the other hand, the benign model follows the standard training procedure without any anomaly in it. As a result, it shows more stability as compared to a backdoor model.

\begin{table*}[ht]
\centering
\caption{Detailed information of the datasets and DNN architectures used in our experiments.}
\label{tab4}

\scalebox{0.8}{
\begin{tabular}{cccccc}
\toprule
Dataset & Classes & Image Size & Training Samples & Test Samples & Architecture \\ \midrule
CIFAR-10 & 10 & 32 x 32  & 50,000 & 10,000 & PreActResNet18 \\
GTSRB & 43 & 32 x 32  & 39,252 & 12,630 & WideResNet-16-1 \\
Tiny-ImageNet & 200 & 64 x 64  & 100,000 & 10,000 & ResNet34 \\
ImageNet & 1000 & 224 x 224  & 1.28M & 100,000 & ResNet50 \\
\bottomrule
\label{tab:datasets}
\end{tabular}}
\vspace{-4mm}
\end{table*}

\subsection{Experimental Details}\label{sec:experiment}

\subsubsection{Details of Attacks}\label{app:attack_details}

\noindent \textbf{Single-Label Settings.} We use 14 different attacks for CIFAR10. Each of them differs from the others in terms of either label mapping type or trigger properties. To ensure a fair comparison, we follow similar trigger patterns and settings as in their original papers. In Troj-one and Dyn-one attacks, all of the triggered images have the same target label. On the other hand, target labels are uniformly distributed over all classes for Troj-all and Dyn-all attacks. For label poisoning attacks, we use a fixed poison rate of 10\%. However, we need to increase this rate to 80\% for CLB and SIG. We use an image-trigger mixup ratio of 0.2 for Blend and SIG attacks. WaNet adopts a universal wrapping augmentation as the backdoor trigger. WaNet can be considered a non-additive attack since it works like an augmentation technique with direct information insertion or addition like Badnets or TrojanNet. ISSBA adds a specific trigger to each input that is of low magnitude and imperceptible. Both of these methods are capable of evading some existing defenses. For the BPPA attack, we follow the PyTorch implementation\emph{\footnote{\url{https://github.com/RU-System-Software-and-Security/BppAttack}}}. For Feature attack (FBA), we create a backdoor model based on this implementation\footnote{\url{https://github.com/Megum1/DFST}}.
Apart from clean-label attacks, we use a poison rate of 10\% for creating backdoor attacks. The details of these attacks are presented in Table~\ref{tab:attack_details}. In addition to these attacks, we also consider `All2All' attacks (Troj-all, Dyn-all), where we have more than one target label. We change the given label $i$ to the target label $i+1$ to implement this attack. For class 9, the target label is 0.   

For creating backdoor models with CIFAR10~\citep{krizhevsky2009learning}, we train a PreActResNet~\citep{he2016identity} model using an SGD optimizer with an initial learning rate of 0.01, learning rate decay of 0.1/100 epochs for 250 epochs. We also use a weight decay of $5e^{-4}$ with a momentum of 0.9. We use a longer backdoor training to ensure a satisfactory attack success rate. We use a batch size of 128. For GTSRB~\citep{stallkamp2011german}, we train a WideResNet-16-1~\citep{zagoruyko2016wide} model for 200 epochs with a learning rate of 0.01 and momentum of 0.9. We also regularize the weights with a weight-decay of $5e^{-4}$. We rescale each training image to $32\times32$ before feeding them to the model. The training batch size is 128, and an SGD optimizer is used for all training.  We further created backdoor models trained on the Tiny-ImageNet and ImageNet datasets.
For Tiny-ImageNet, we train the model for 150 epochs with a learning rate of 0.005, a decay rate of 0.1/60 epochs, and a weight decay of 1e-4. For ImageNet, we train the model for 200 epochs with a learning rate of 0.02 with a decay rate of 0.1/75 epochs. We also employ 0.9 and 1e-4 for momentum and weight decay, respectively. The details of these four datasets are presented in Table~\ref{tab:datasets}.

\noindent \textbf{Multi-Label Settings.} In case of single-label settings, we put a trigger on the image and change the corresponding ground truth of that image. However,~\citep{chen2023clean} shows that a certain combination of objects can also be used as a trigger pattern instead of using a conventional pattern, e.g., reverse lambda or watermark. For example, if a combination of car, person, and truck is present in the image, it will fool the model to misclassify. For creating this attack, we use three object detection datasets Pascal VOC 07, VOC 12, and MS-COCO. We use a poison rate of 5\% for the first 2 datasets and 1.5\% for the latter one.  The rest of the training settings are taken from the original work~\citep{chen2023clean}. 

\noindent \textbf{Video Action Recognition.} An ImageNet pre-trained ResNet50 network has been used for the CNN, and a sequential input-based Long Short Term Memory (LSTM)~\citep{sherstinsky2020fundamentals} network has been put on top of it. We subsample the input video by keeping one out of every 5 frames and use a fixed frame resolution of $224 \times 224$. We choose a trigger size of $20\times20$. Following~\citep{zhao2020clean}, we create the required perturbation for clean-label attack by running projected gradient descent (PGD)~\citep{madry2017towards} for 2000 steps with a perturbation norm of $\epsilon=16$. Note that our proposed augmentation strategies for image classification are directly applicable to action recognition. The rest of the settings are taken from the original work.

\noindent \textbf{3D Point Cloud.} PointBA~\citep{li2021pointba} proposes both poison-label and clean-label backdoor attacks in their work. For poison-label attacks, PointBA introduces specific types of triggers: orientation triggers and interaction triggers. A more sophisticated technique of feature disentanglement was used for clean-label attacks. \citep{xiang2021backdoor} inserts a small cluster of points as the backdoor pattern using a special type of spatial optimization. For evaluation purposes, we consider the ModelNet~\citep{wu20153d} dataset and PointNet++~\citep{qi2017pointnet++} architecture. We follow the attack settings described in ~\citep{li2021pointba,xiang2021backdoor} to create the backdoor model. We also consider ``backdoor points" based attack (3DPC-BA) described in \citep{xiang2021backdoor}. For creating these attacks, we consider a poison rate of 5\% and train the model for 200 epochs with a learning rate of 0.001 and weight decay 0.5/20 epochs. Rest of the settings are taken from original works.

\subsubsection{Implementation Details of FIP}\label{app:FIP_Implementation_Details}
We provide the implementation details of our proposed method here for different attack settings.

\noindent \textbf{Single-Label Settings.} apply FIP  on CIFAR10, we fine-tune the backdoor model following Eq.~(\ref{eqn:loss_spectral}) for $E_p$ epochs with $1\%$ clean validation data. Here, $E_p$ is the number of purification epochs, and we choose a value of 50 for this. Note that we set aside the 1\% validation data from the training set, not the test or evaluation set.  For optimization, we choose a learning rate of 0.01 with a decay rate of 0.1/40 epochs and choose a value of 0.001 and 5 for regularization constants $\eta_F$ and $\eta_r$, respectively. Note that we consider backpropagating the gradient of Tr(F) once every 10 iterations.  For GTSRB, we increase the validation size to $3\%$ as there are fewer samples available per class. The rest of the training settings are the same as CIFAR10. For FIP  on Tiny-ImageNet, we consider a validation size of 5\% as a size less than this seems to hurt clean test performance (after purification). We fine-tune the model for 15 epochs with an initial learning rate of 0.01 with a decay rate of 0.3/epoch. Finally, we validate the effectiveness of FIP  on ImageNet. For removing the backdoor, we use 3\% validation data and fine-tune it for 2 epochs. A learning rate of 0.001 has been employed with a decay rate of 0.005 per epoch. 

\noindent \textbf{Multi-Label Settings.} For attack removal, we take 5000 clean validation samples for all defenses. For removing the backdoor, we take 5000 clean validation samples and train the model for 20 epochs. It is worth mentioning that the paradigm of multi-label backdoor attacks is very recent, and there are not many defenses developed against it yet.


\noindent \textbf{Video Action Recognition.} During training, we keep 5\% samples from each class to use them later as the clean validation set. We train the model for 30 epochs with a learning rate of 0.0001.

\noindent \textbf{3D Point Cloud.} For removal, we use 400 point clouds as the validation set and fine-tune the backdoor model for 20 epochs with a learning rate of 0.001. Our proposed method outperforms other SoTA defenses in this task by a significant margin.

\begin{table*}[t]
\renewcommand{\arraystretch}{1.1}
\renewcommand\tabcolsep{1.75pt}
\small
\centering
\caption{Performance \textbf{comparison of FIP with additional defenses on CIFAR10 dataset under 7 different backdoor attacks}. FIP achieves SOTA performance for six attacks while sacrificing only $4.19\%$ in clean accuracy (ACC) on average. The average drop indicates the difference in values before and after removal. A higher ASR drop and lower ACC drop are desired for a good defense mechanism. Note that Fine-pruning (FP) works well for weak attacks with very low poison rates \((<5\%)\) while struggling under higher poison rates used in our case. }
\label{tab:cifar_defense}

\scalebox{1}{
\begin{tabular}{c|cc|cc|cc|cc|cc|cc|cc|cc}
\toprule
Attacks & 
\multicolumn{2}{c|}{None} & \multicolumn{2}{c|}{BadNets} &
\multicolumn{2}{c|}{Blend} & \multicolumn{2}{c|}{Trojan} & \multicolumn{2}{c|}{SIG} &  \multicolumn{2}{c|}{Dynamic}&  \multicolumn{2}{c|}{CLB}&  \multicolumn{2}{c}{LIRA}\\ 
\midrule
Defenses&ASR &ACC & ASR &ACC & ASR &ACC & ASR &ACC & ASR &ACC & ASR &ACC& ASR &ACC & ASR &ACC  \\ \midrule
\emph{No Defense} & 0 & 95.21 & 100 & 92.96  & 100 & 94.11 & 100 & 89.57 & 100 & 88.64 & 100 & 92.52 & 100 & 92.78 & 99.25 & 92.15   \\ 
Vanilla FT & 0 & 93.28 & 6.87 & 87.65 & 4.81 & 89.12 & 5.78 & 86.27 & 3.04 & 84.18 & 8.73 & 89.14 & 5.75 & 87.52 & 7.12 & 88.16  \\
 FP &  0 & 88.92 & 28.12 & 85.62 & 22.57 & 84.37 & 20.31 & 84.93 & 29.92 & 84.51  & 19.14&84.07&12.17&84.15&22.14&82.47 \\
 MCR & 0&90.32&3.99&81.85&9.77&80.39&10.84&80.88&3.71&82.44&8.83&78.69&7.74&79.56&11.81&81.75\\
 NAD & 0&92.71&4.39&85,61&5.28&84.99&8.71&83.57&2.17&83.77&13.29&82.61&6.11&84.12&13.42&82.64\\

 CBD & 0 & 92.87 & 2.27 & 87.92 & 2.96 & 89.61 & \textbf{1.78} & 86.18 & 1.98 & 84.17 & 2.03 & 88.41 & 4.21 & 87.70 & 6.67 & 87.42 \\ 

ABL & 0 & 91.64 & 3.08 & 87.70 & 7.74 & 89.15 & {3.53} & 86.38 & 3.65 & 85.20 & 8.07 & 88.36 & 2.21 & 89.42 & 4.24 & 90.18 \\ 
 \midrule
  FIP(Ours) & 0&94.10&\textbf{1.86}&\textbf{89.32}&\textbf{0.38}&\textbf{92.17}&{2.64}&\textbf{87.21}&\textbf{0.92}&\textbf{86.10}&\textbf{1.17}&\textbf{91.16}&\textbf{2.04}&\textbf{91.37}&\textbf{2.53}&\textbf{89.82}\\
 \bottomrule
\end{tabular}}
\end{table*}

\begin{table*}[t]
\renewcommand{\arraystretch}{1.1}
\renewcommand\tabcolsep{1.75pt}
\small
\centering
\caption{ Performance comparison of \textbf{FIP and additional defense methods for GTSRB dataset}. The average drop in ASR and ACC determines the effectiveness of a defense method.}
\vspace{-1.5mm}
\label{tab:gtsrb_defense}
\scalebox{0.8}{
\begin{tabular}{c|cc|cc|cc|cc|cc|cc|cc|cc}
\toprule
Attacks & \multicolumn{2}{c|}{None} & \multicolumn{2}{c|}{BadNets} & \multicolumn{2}{c|}{Blend} & \multicolumn{2}{c|}{Trojan} & \multicolumn{2}{c|}{SIG} &  \multicolumn{2}{|c}{Dynamic} & \multicolumn{2}{|c}{WaNet} & \multicolumn{2}{|c}{ISSBA}\\ \midrule
Defenses&ASR &ACC & ASR &ACC & ASR & ACC & ASR &ACC & ASR &ACC & ASR &ACC& ASR &ACC & ASR &ACC  \\ \midrule
\emph{No Defense} & 0&97.87&100&97.38&100&95.92&99.71&96.08&97.13&96.93&100&97.27& 98.19& 97.31 & 99.42& 97.26 \\ 
{Vanilla FT} & 0 & 95.08 & 5.36 & 94.16 & 7.08 & 93.32 & 4.07 & 92.45 & 5.83 & 93.41 & 8.27 & 94.26 & 6.56 & 95.32 & 5.48 & 94.73  \\
 FP &  0&90.14&29.57&88.61&24.50&86.67&19.82&84.03&14.28&90.50&24.84&88.38& 38.27 & 89.11 & 24.92 & 88.34 \\
 MCR & 0&95.49&4.02&93.45&6.83&92.91&4.25&92.18&8.98&91.83&14.82&92.41& 11.45 & 91.20 & 9.42 & 92.04 \\
 NAD & 0&95.18&5.19&89.52&8.10&89.37&6.98&90.27&9.36&88.71&16.93&90.83&14.52 & 90.73 & 16.65 & 91.18 \\
 
  CBD & 0&95.64&0.82&95.21&\textbf{1.90}&94.11&2.16&94.29&5.41&94.37&1.97&\textbf{95.91}& 3.87 & 95.67 & 5.15 & 94.12 \\
ABL & 0&96.41&\textbf{0.19}&96.01&12.54&93.14&1.76&94.84&4.86&94.91&6.75&95.10&7.90 & 93.41 & 8.71 & 95.39 \\
 \midrule
  FIP(Ours) & 0&\textbf{96.76}&{0.24}&\textbf{96.11}&{2.41}&\textbf{94.16}&\textbf{1.21}&\textbf{95.18}&\textbf{2.74}&\textbf{95.08}&\textbf{1.52}&{95.27}&\textbf{1.20}&\textbf{96.24}&\textbf{1.43}&\textbf{95.86} \\
 \bottomrule
\end{tabular}}
\vspace{-1mm}
\end{table*}

\subsubsection{Implementation Details of Other Defenses}\label{sec:other_defenses_imple}
For FT-SAM~\citep{zhu2023enhancing}, we follow the implementation of sharpness-aware minimization where we restrict the search region for the SGD optimizer. Pytorch implementation described here\footnote{\url{https://github.com/davda54/sam}} has been followed where we fine-tune the backdoor model for 100 epochs with a learning rate of 0.01, weight decay of $1e^{-4}$, momentum of 0.9, and a batch size of 128. For experimental results with ANP~\citep{wu2021adversarial}, we follow the source code implementation\footnote{\url{https://github.com/csdongxian/ANP_backdoor}}. After creating each of the above-mentioned attacks, we apply adversarial neural pruning on the backdoor model for 500 epochs with a learning rate of 0.02. We use the default settings for all attacks. For vanilla FT, we perform simple DNN fine-tuning with a learning rate of 0.01 for 125 epochs. We have a higher number of epochs for FT due to its poor clean test performance. The clean validation size is 1\% for both of these methods. 
For NAD~\citep{li2021neural}, we increase the validation data size to 5\% and use the teacher model to guide the attacked student model. We perform the training with distillation loss proposed in NAD\footnote{\url{https://github.com/bboylyg/NAD}}. For MCR~\citep{zhao2020bridging}, the training goes on for 100 epochs according to the provided implementation\footnote{\url{https://github.com/IBM/model-sanitization/tree/master/backdoor/backdoor-cifar}}. 
For I-BAU~\citep{zeng2021adversarial}, we follow their PyTorch Implementation\footnote{\url{https://github.com/YiZeng623/I-BAU}} and purify the model for 10 epochs. We use 5\% validation data for I-BAU. For AWM~\citep{chai2022one}, we train the model for 100 epochs and use the Adam optimizer with a learning rate of 0.01 and a weight decay of 0.001. We use the default hyper-parameter setting as described in their work $ \alpha = 0.9, \beta = 0.1, \gamma = [10,8], \eta = 1000$.
The above settings are for CIFAR10 and GTSRB only. We follow the GitHub \footnote{\url{https://github.com/bboylyg/RNP}} implementation of RNP where we use a learning rate of 0.01. We also do the same for ABL\footnote{\url{https://github.com/bboylyg/ABL}} and CBD\footnote{\url{https://github.com/zaixizhang/CBD}}.
For Tiny-ImageNet, we keep most of the training settings similar except for reducing the number of epochs significantly. We also increase the validation size to 5\% for vanilla FT, FT-SAM, ANP, and AWM. For I-BAU, we use a higher validation size of 10\%. For purification, we apply ANP and AWM for 30 epochs, I-BAU for 5 epochs, and Vanilla FT for 25 epochs. For ImageNet, we use a 3\% validation size for all defenses (except for I-BAU, where we use 5\% validation data) and use different numbers of purification epochs for different methods. We apply I-BAU for 2 epochs. On the other hand, we train the model for 3 epochs for ANP, AWM, and vanilla FT and FT-SAM.

\subsubsection{Comparison with Additional Baseline Defenses}\label{sec:comparison_additional_baselines}
In FP~\citep{liu2017neural}, pruning and fine-tuning are performed simultaneously to eliminate the backdoors. 
~\citep{liu2017neural} establishes that mere fine-tuning on a sparse network is ineffective as the probability is higher that the clean data doesn't activate the backdoor neurons, which emphasizes the significance of filter pruning in such networks. MCR~\citep{zhao2020bridging} put forward the significance of the mode connectivity technique to mitigate the backdoored and malevolent models. Prior to ~\citep{zhao2020bridging}, mode connectivity was only explored for generalization analysis in applications such as fast model assembling. However,~\citep{zhao2020bridging} is the preliminary study that investigated the role of mode connectivity to achieve model robustness against backdoor and adversarial attacks.
A neural attention distillation (NAD)~\citep{li2021neural} framework was proposed to erase backdoors from the model by using a teacher-guided finetuning of the poisoned student network with a small subset of clean data. However, the authors in~\citep{li2021neural} have reported overfitting concerns if the teacher network is partially purified. For Vanilla fine-tuning (FT), conventional weight fine-tuning has been used with SGD optimizer. In our work, we proposed a defense that purifies an already trained backdoor model that has learned both clean and poison distribution. Such defense falls under the category of test-time backdoor defense. To prevent the backdoor attack before even taking place, we need to develop a training-time defense where we have a training pipeline that will prevent the attack from happening. The training pipeline can consist of techniques such as specific augmentations like MixUp~\cite{zhang2017mixup}, where we mix both clean and poison samples to reduce the impact of the poison triggers. In recent times, several training-time defenses have been proposed such as CBD~\cite{zhang2023backdoor} and ABL~\cite{li2021anti}. Note that training-time defense is completely different from test-time defense and out of the scope of our paper. Nevertheless, we also show a comparison with these training-time defenses and other baselines in Table~\ref{tab:cifar_defense} and Table~\ref{tab:gtsrb_defense}. 
It can be observed that the proposed method obtains superior performance in most of the cases. 

\begin{figure*}[t]
\centering
\begin{subfigure}{0.38\textwidth}
    \includegraphics[width=1\linewidth]{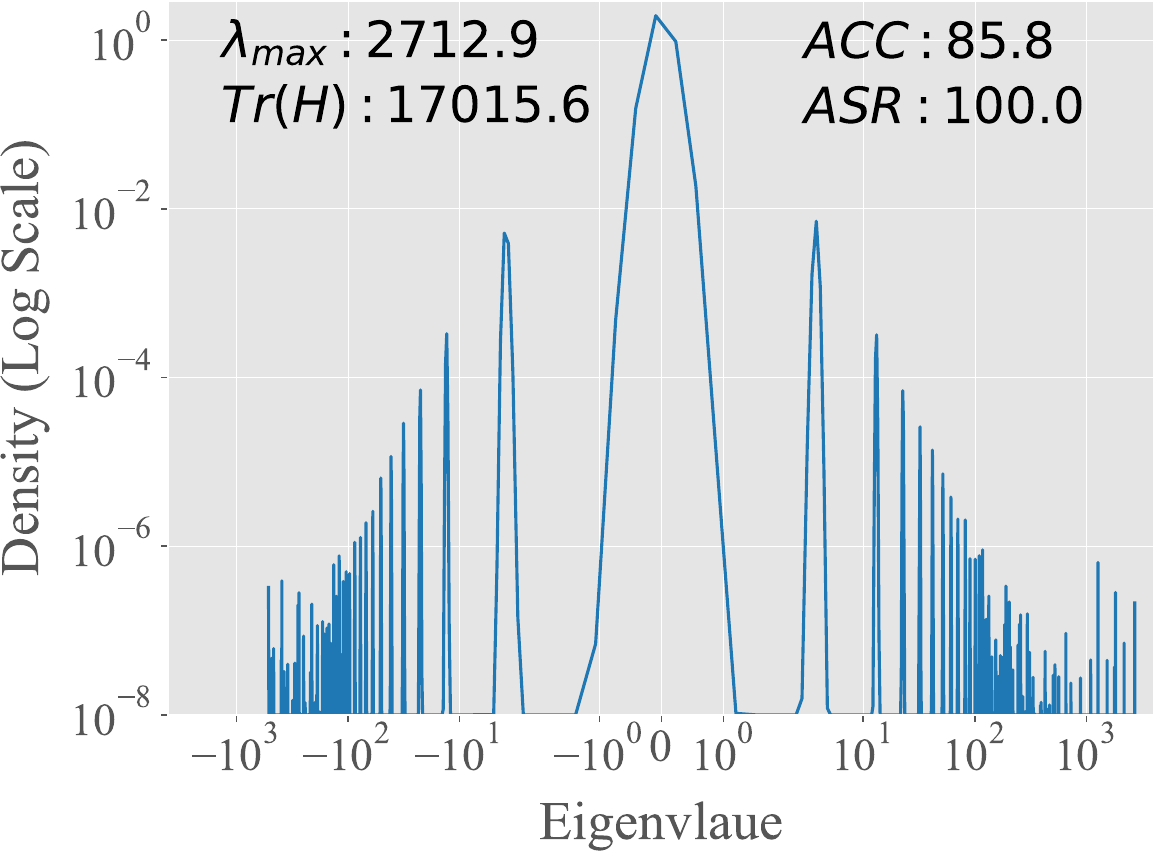}
    \caption{\footnotesize Badnets Attack}
    \label{fig:badnets}
\end{subfigure}
\begin{subfigure}{0.38\textwidth}
    \includegraphics[width=1\linewidth]{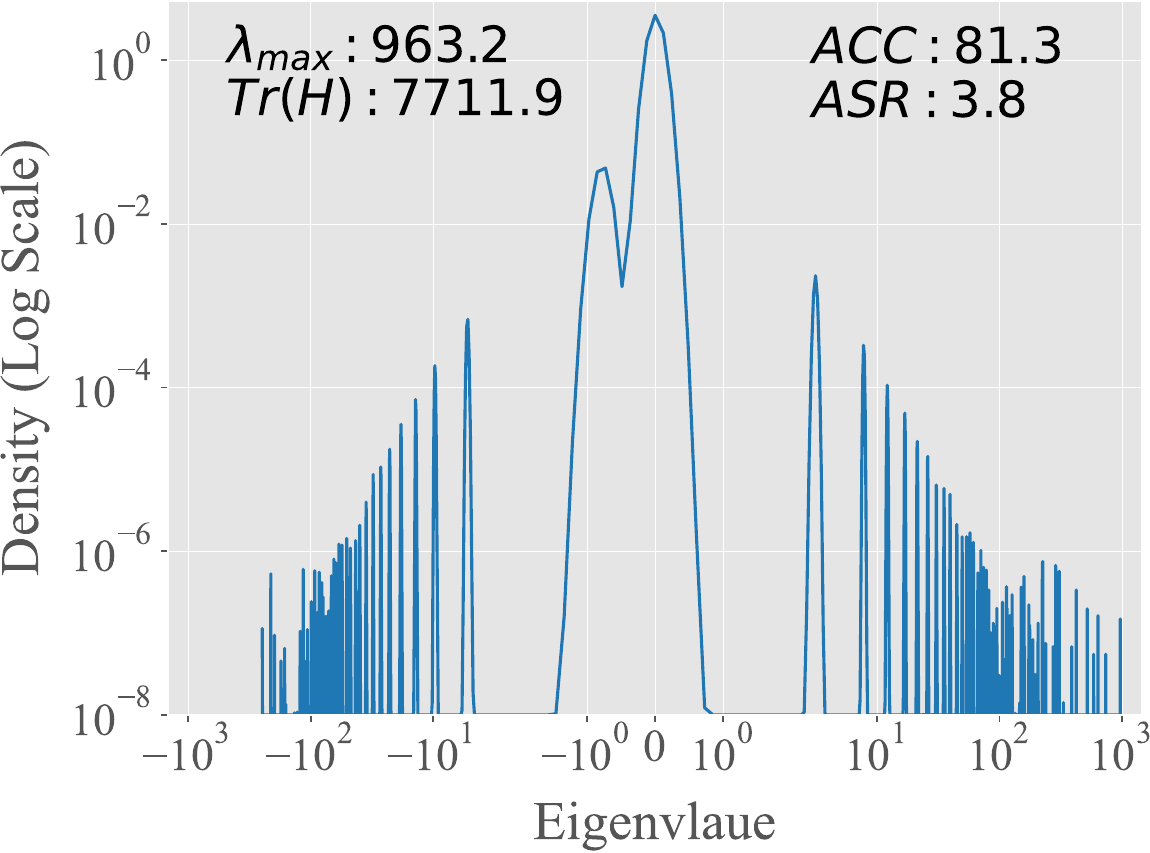}
    \caption{\footnotesize Badnets Purification }
    \label{fig:badnets}
\end{subfigure}
\begin{subfigure}{0.38\linewidth}
    \includegraphics[width=1\linewidth]{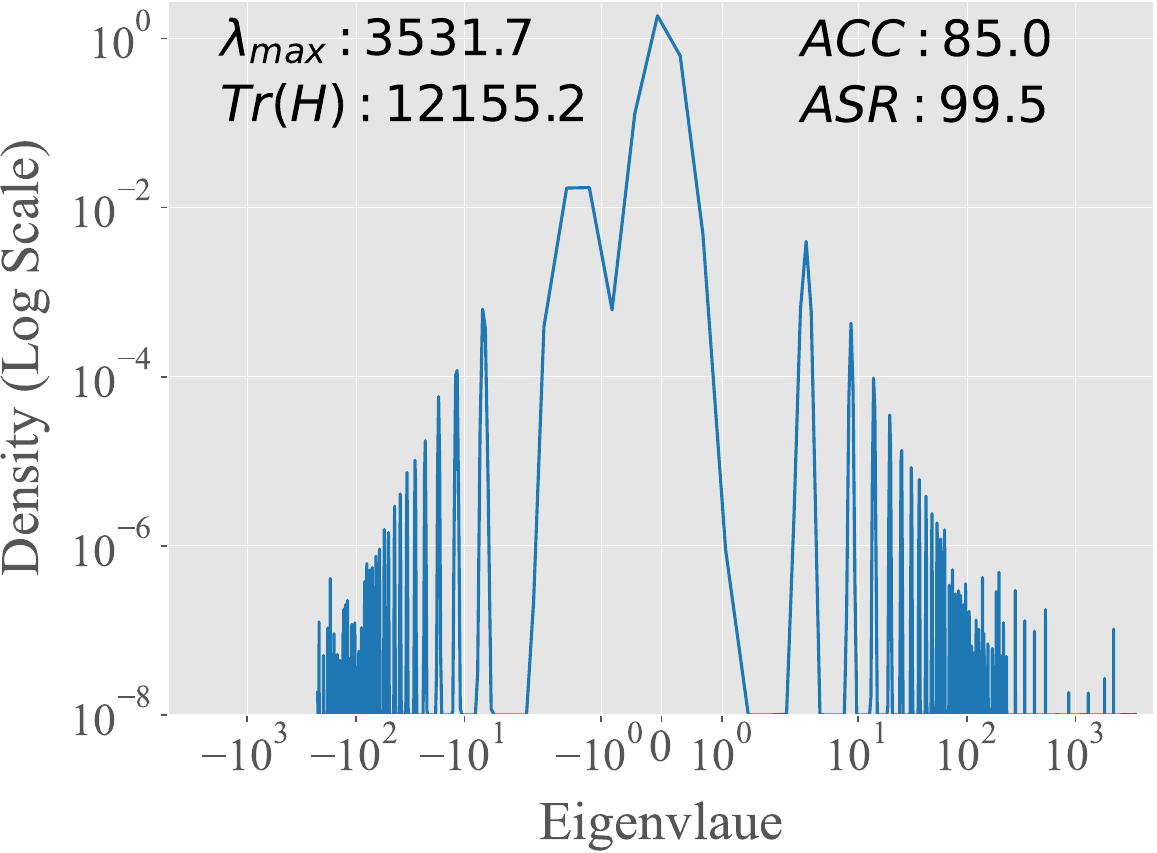}
    \caption{\footnotesize Clean-Label (CLB) Attack}
    \label{fig:badnets_pure}
\end{subfigure}
\begin{subfigure}{0.38\linewidth}
    \includegraphics[width=1\linewidth]{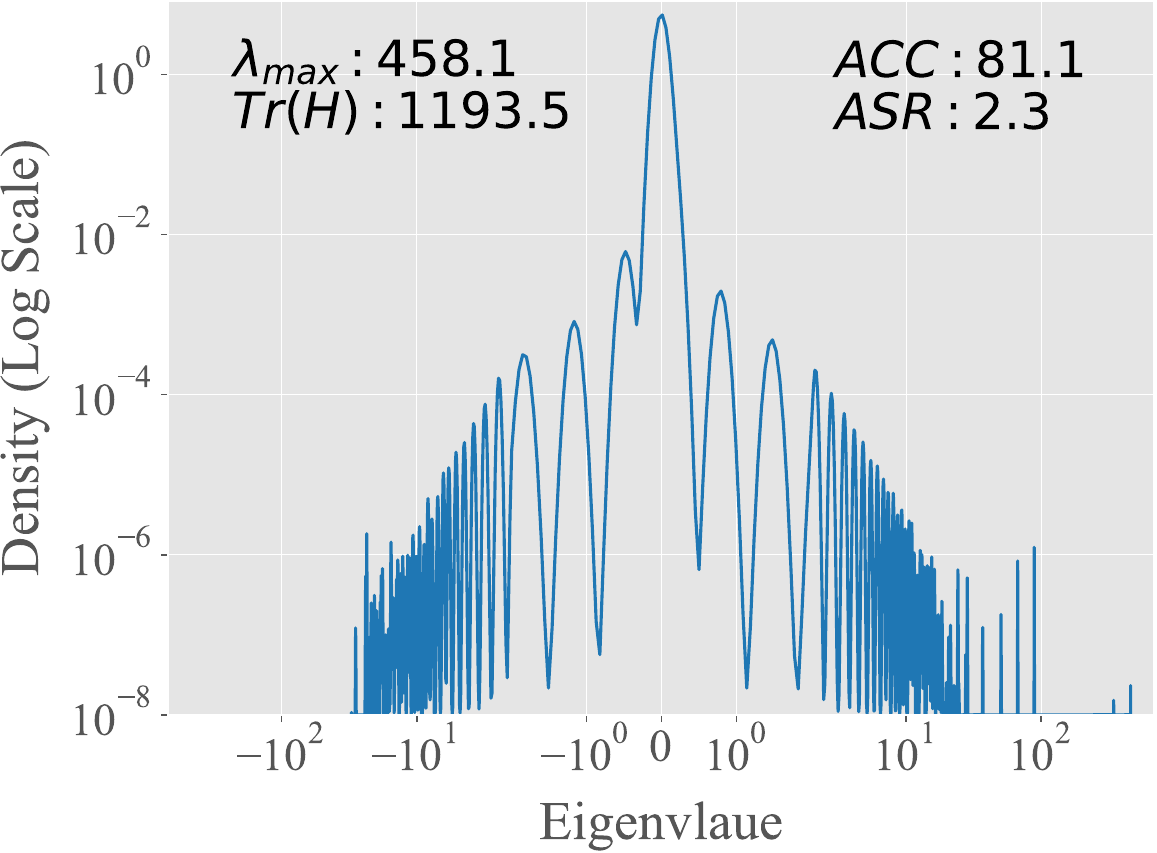}
    \caption{\footnotesize Clean-Label (CLB) Purification }
    \label{fig:badnets_pure}
\end{subfigure}
\begin{subfigure}{0.38\linewidth}
    \includegraphics[width=1\linewidth]{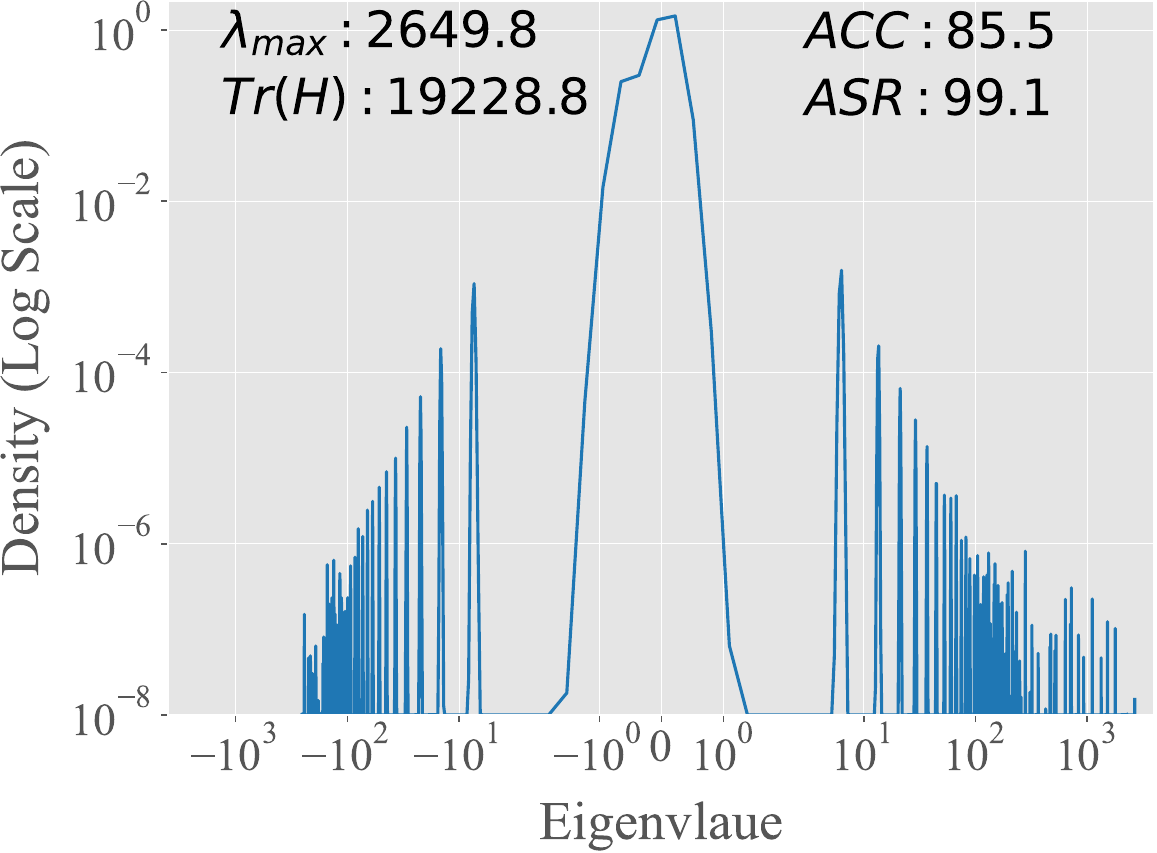}
    \caption{\footnotesize SIG Attack}
    \label{fig:badnets_pure}
\end{subfigure}
\begin{subfigure}{0.38\linewidth}
    \includegraphics[width=1\linewidth]{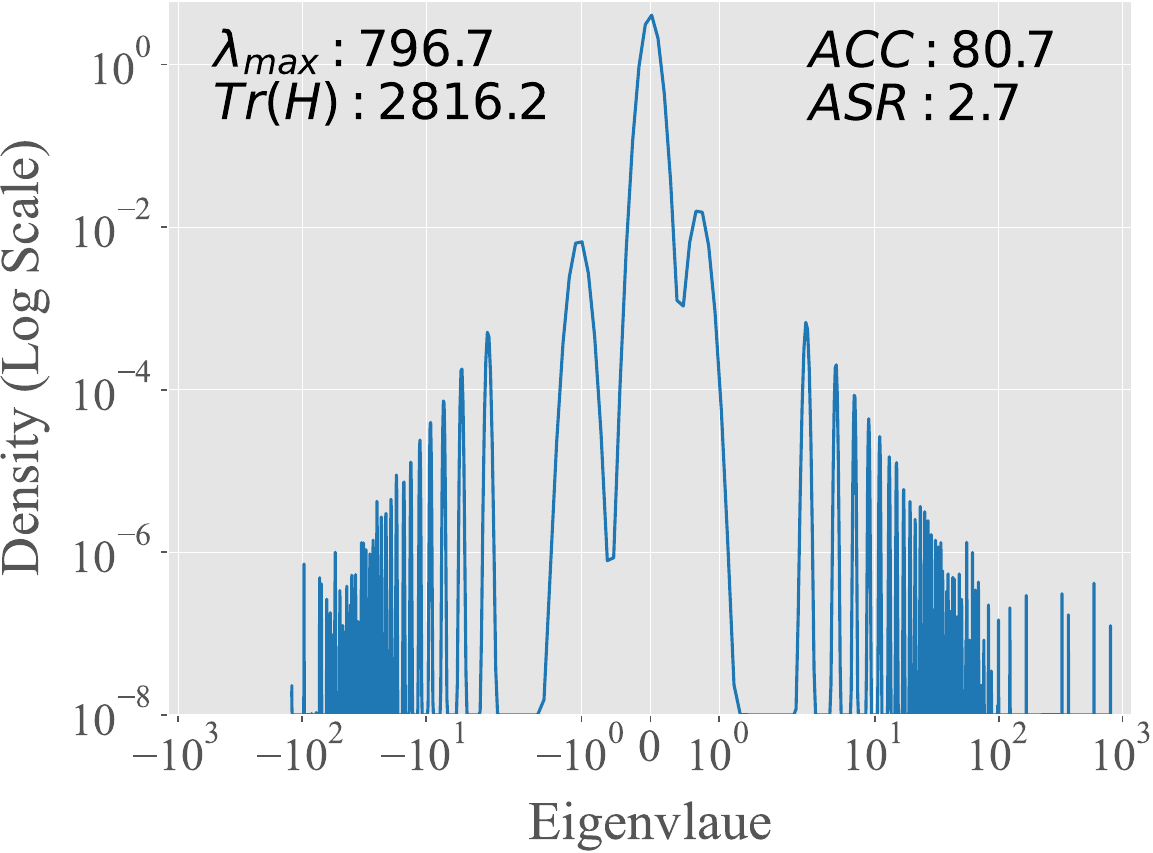}
    \caption{\footnotesize SIG Purification}
    \label{fig:badnets_pure}
\end{subfigure}
\begin{subfigure}{0.38\linewidth}
    \includegraphics[width=1\linewidth]{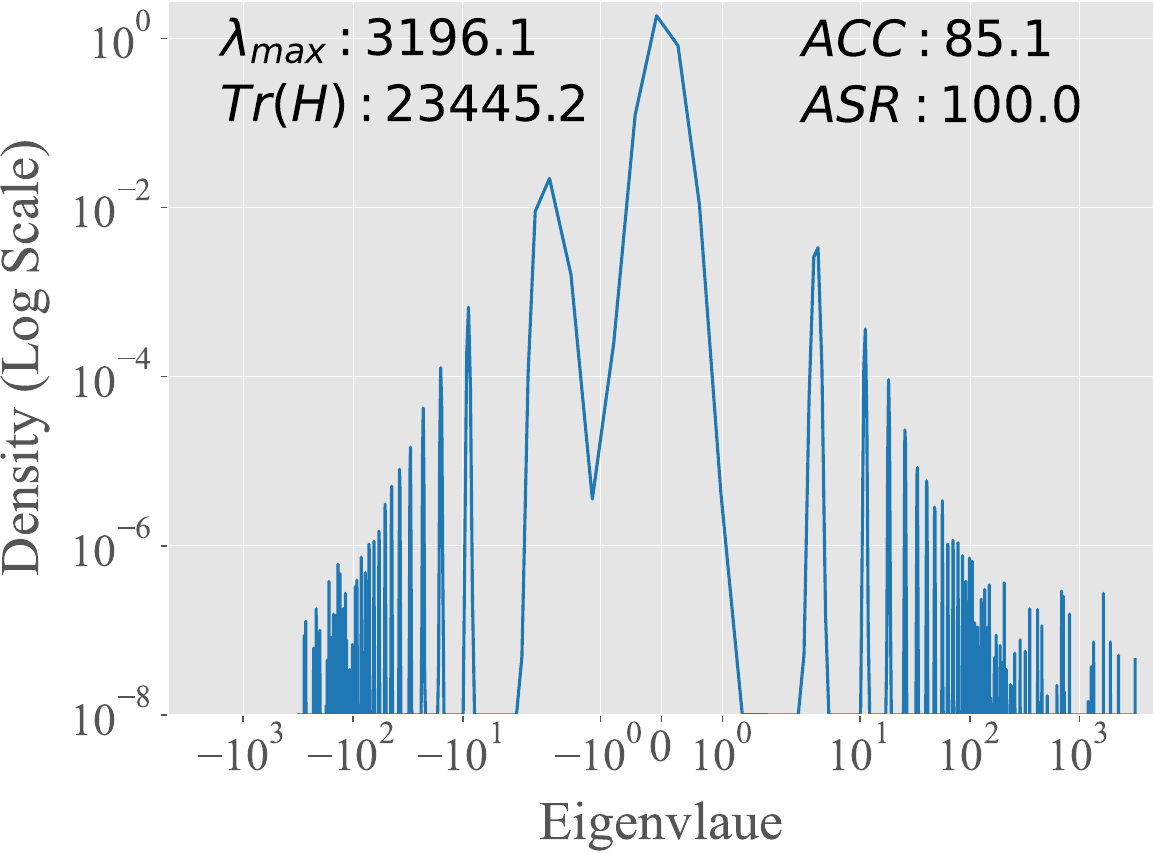}
    \vspace{-4mm}
    \caption{\footnotesize Blend Attack}
    \label{fig:badnets_pure}
\end{subfigure}
\begin{subfigure}{0.38\linewidth}
    \includegraphics[width=1\linewidth]{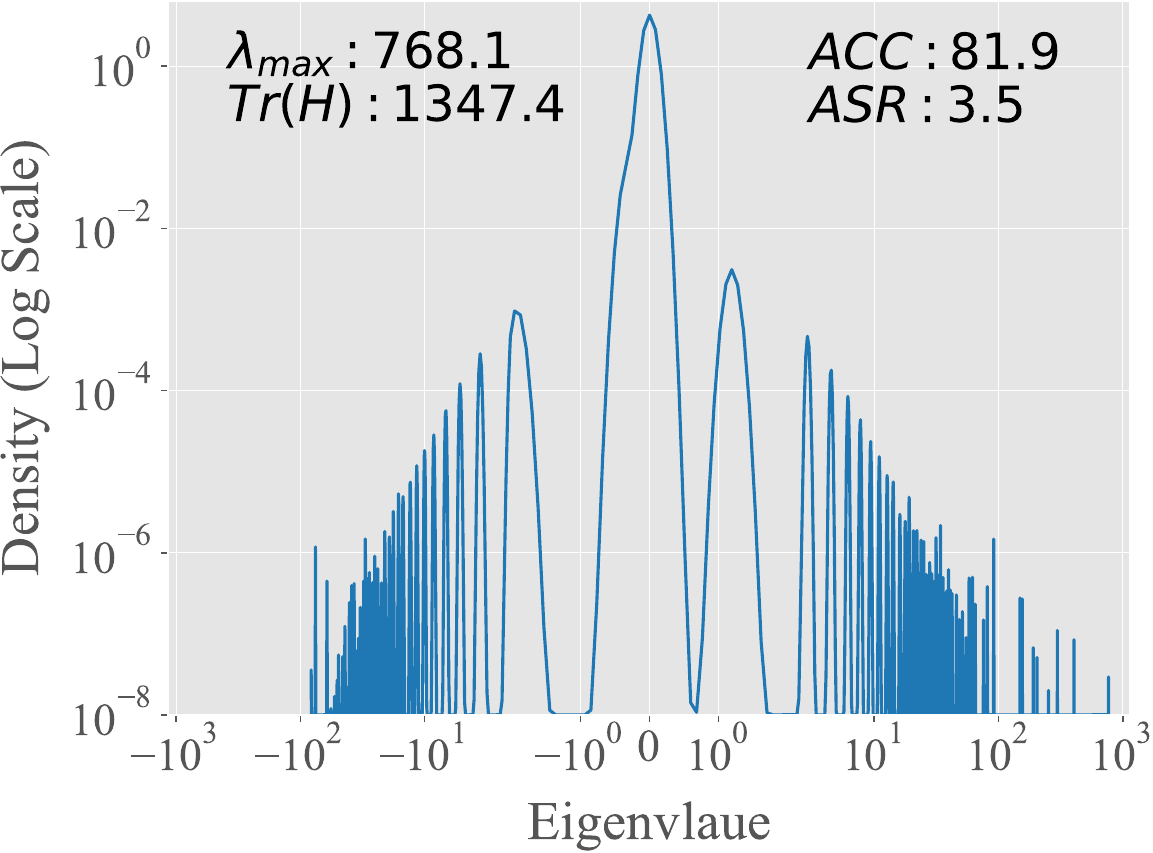}
    \vspace{-4mm}
    \caption{\footnotesize Blend Purification}
    \label{fig:badnets_pure}
\end{subfigure}
\caption{ Smoothness analysis for ImageNet Subset (first 12 classes). A ResNet34 architecture is trained on the subset. For GPU memory constraint, we consider only the first 12 classes while calculating the loss Hessian. Eigen Density plots of backdoor models  (before and after purification) are shown here.}

\label{fig:smoothness_imagenet}
\end{figure*}

\begin{figure*}[ht]
\centering
\begin{subfigure}{0.3\linewidth}
    \includegraphics[width=1\linewidth]{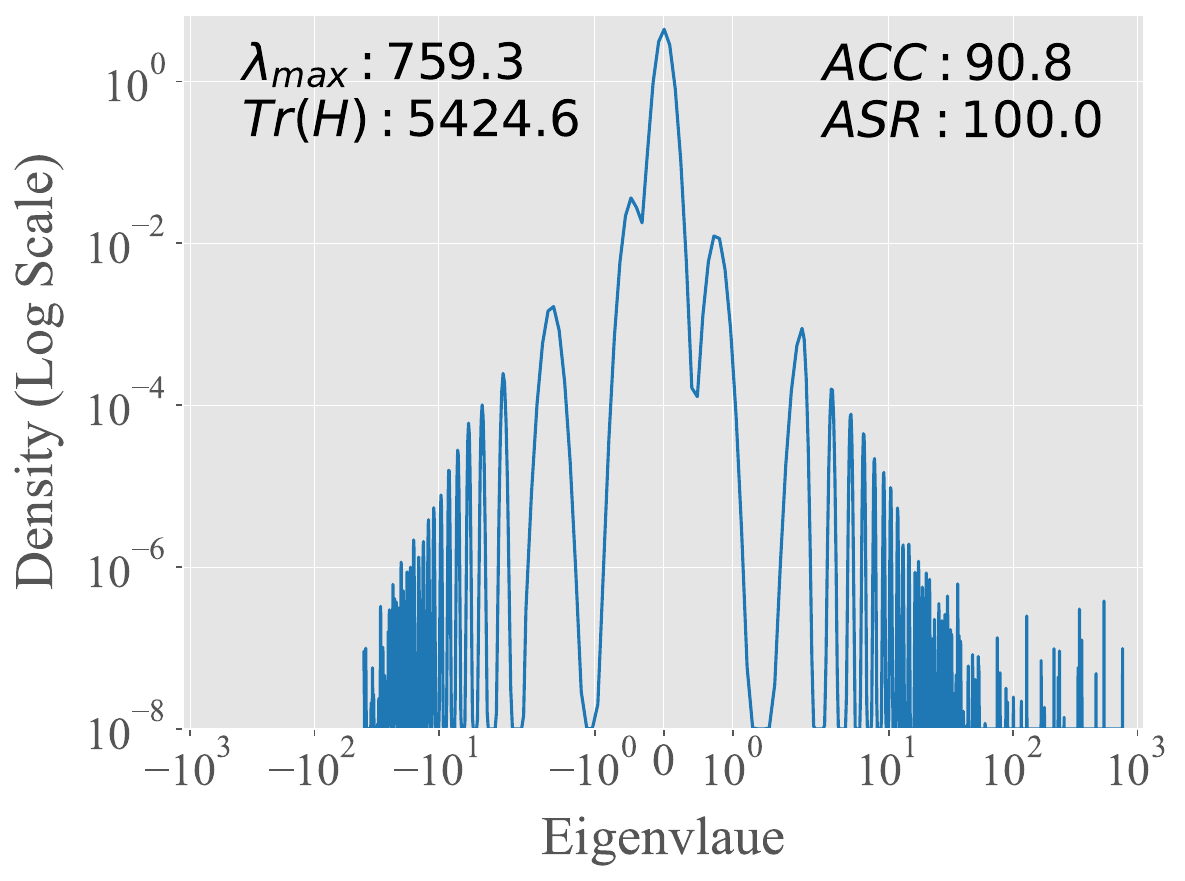}
    \caption{\footnotesize Attack (VGG19)}
    \label{fig:badnets}
\end{subfigure}
\begin{subfigure}{0.3\linewidth}
    \includegraphics[width=1\linewidth]{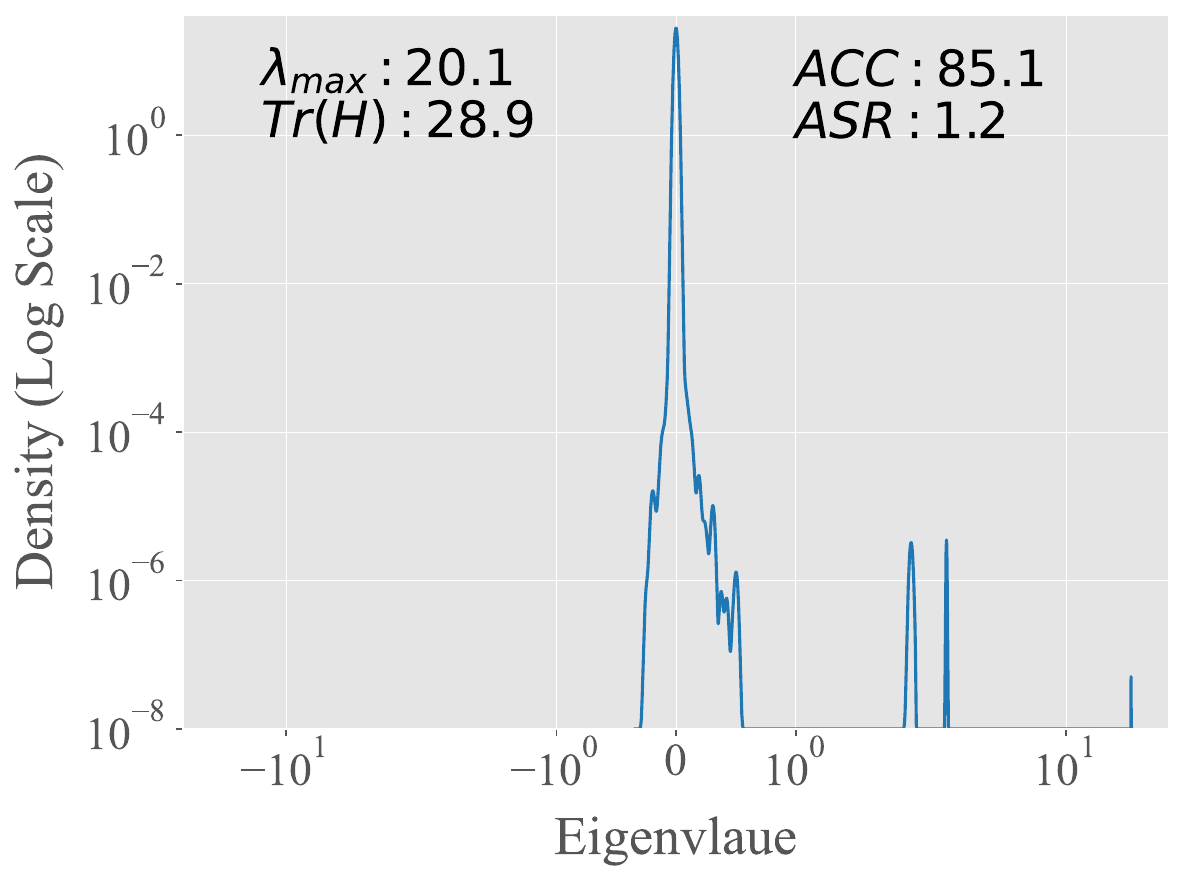}
    \caption{\footnotesize Purification (VGG19) }
    \label{fig:badnets}
\end{subfigure}
\begin{subfigure}{0.3\linewidth}
    \includegraphics[width=1\linewidth]{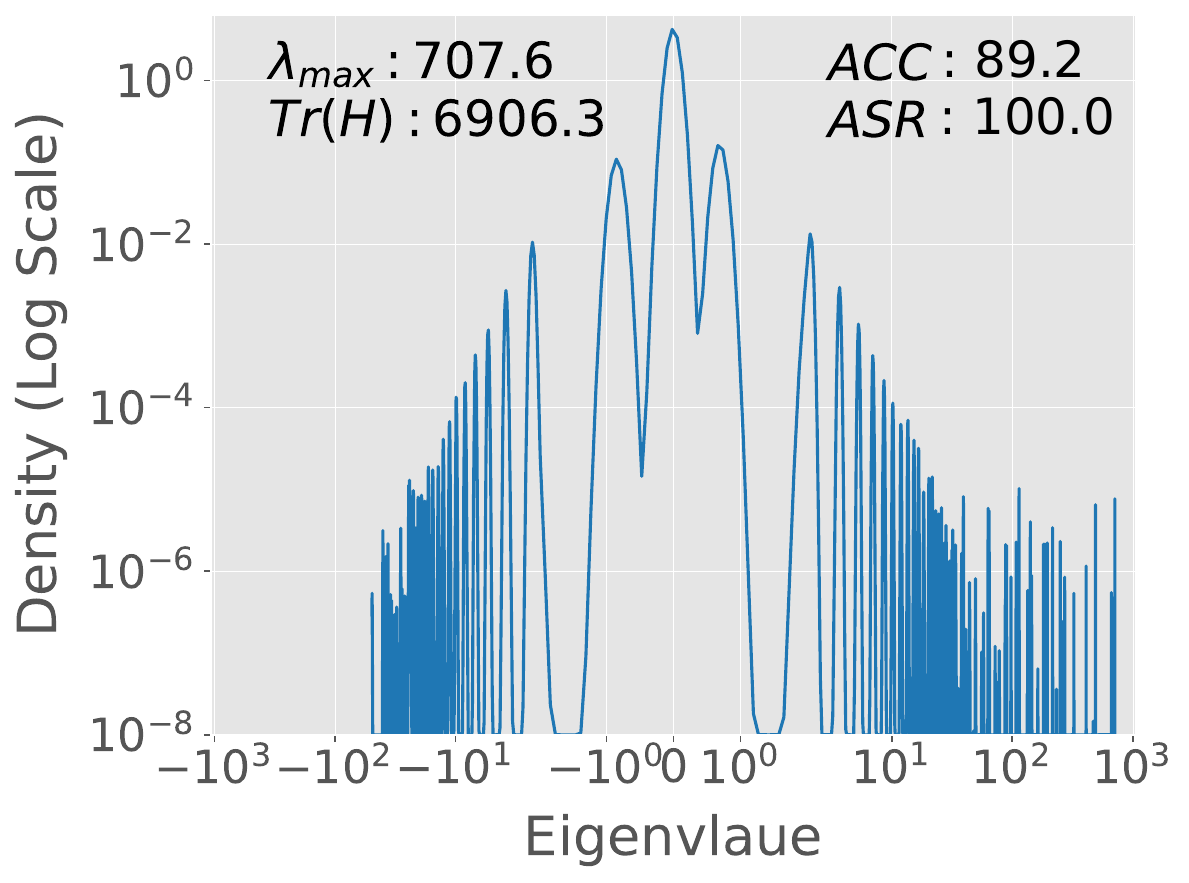}
    \caption{\footnotesize Attack (MobileNetV2)}
    \label{fig:badnets}
\end{subfigure}
\begin{subfigure}{0.3\linewidth}
    \includegraphics[width=1\linewidth]{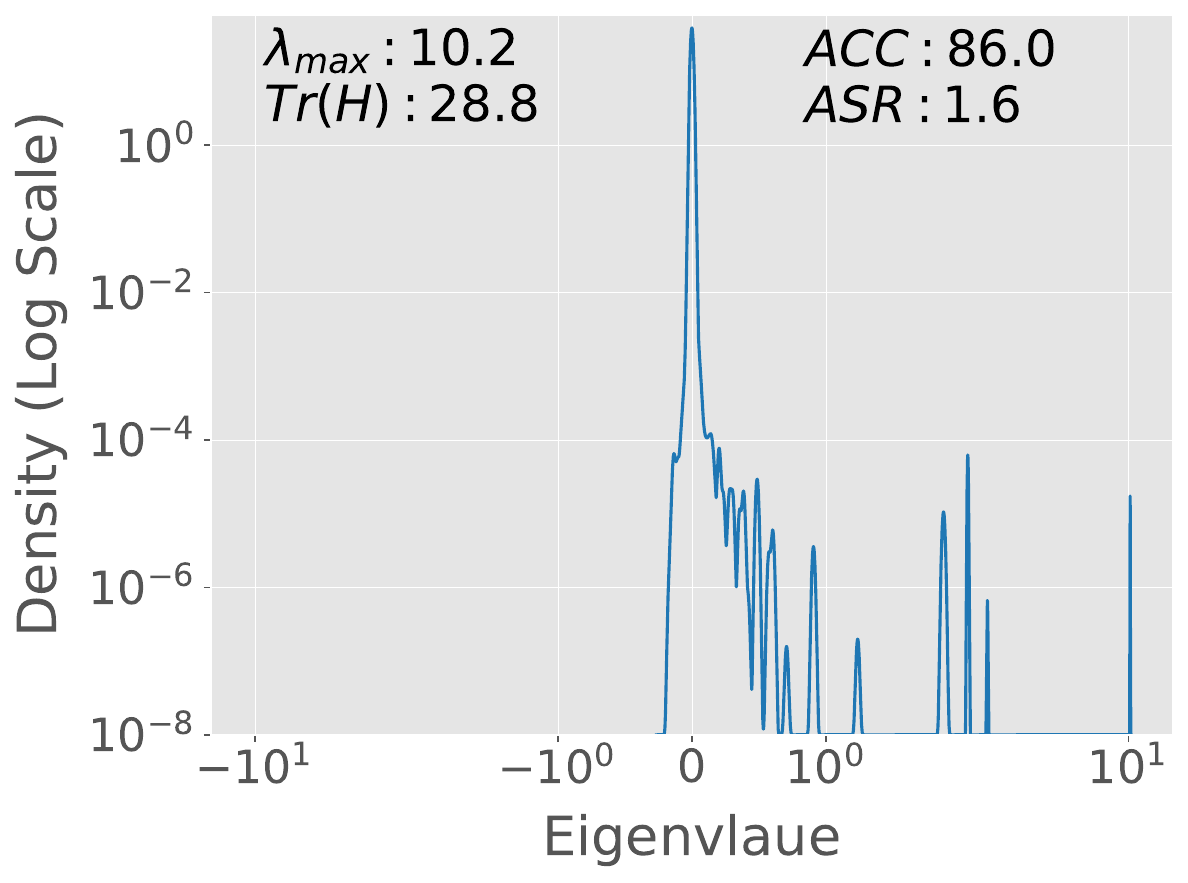}
    \caption{\footnotesize Purification (MobileNetV2) }
    \label{fig:badnets}
\end{subfigure}
\begin{subfigure}{0.3\linewidth}
    \includegraphics[width=1\linewidth]{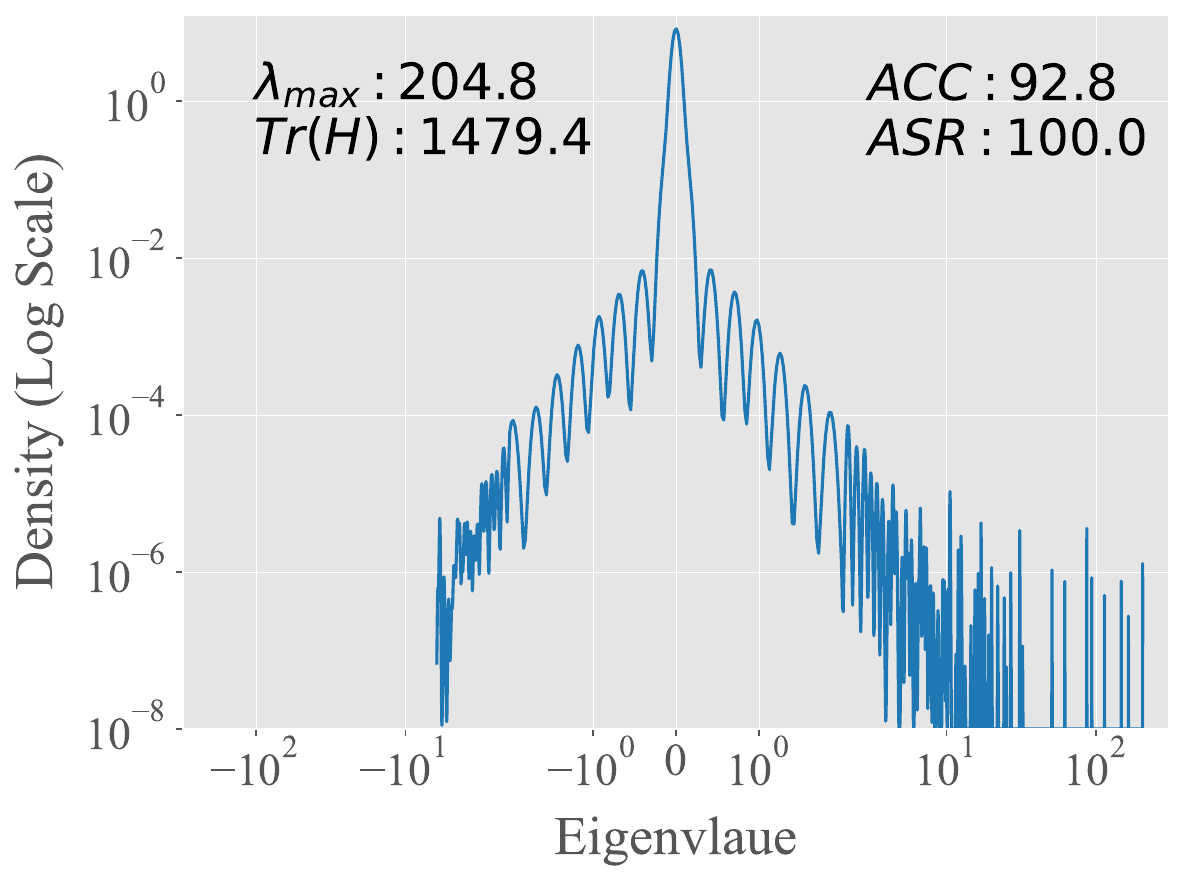}
    \caption{\footnotesize Attack (GoogleNet)}
    \label{fig:badnets}
\end{subfigure}
\begin{subfigure}{0.3\textwidth}
    \includegraphics[width=1\linewidth]{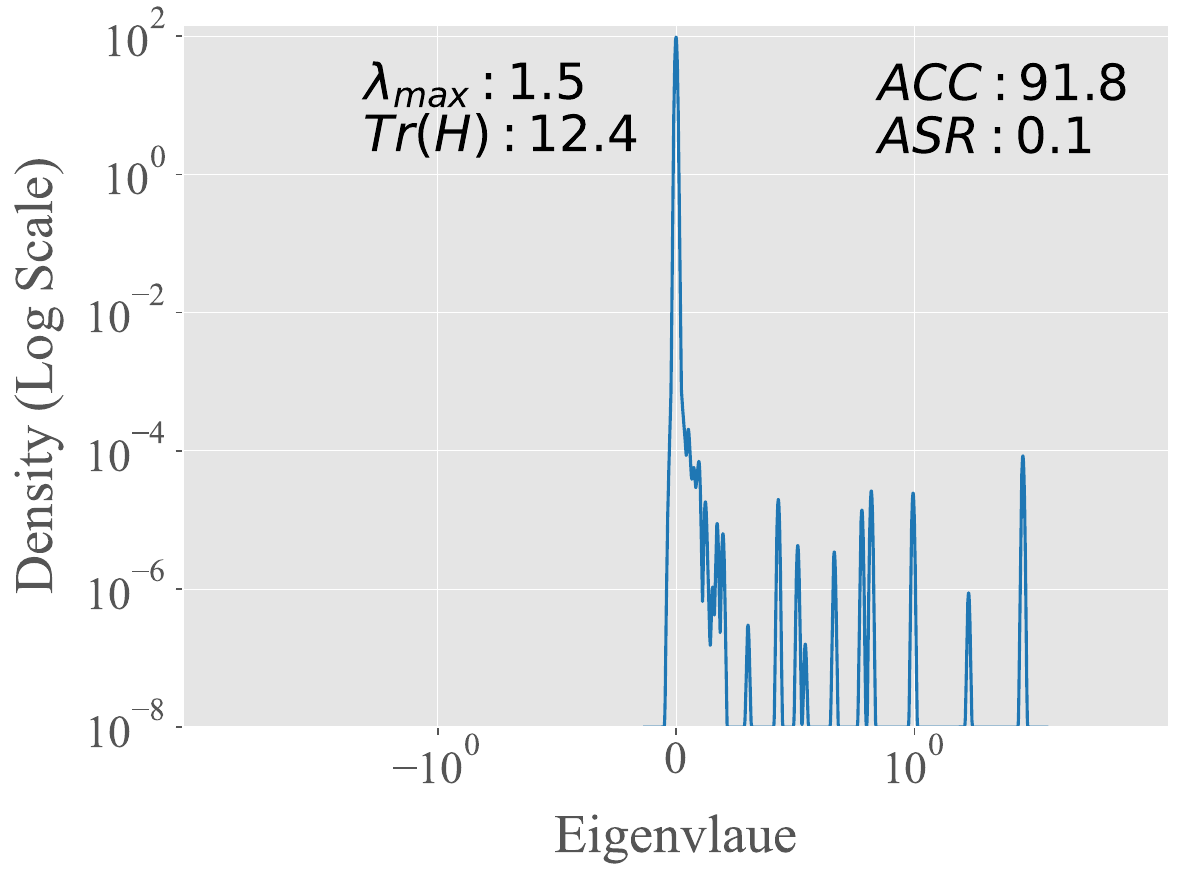}
    \caption{\footnotesize Purification (GoogleNet) }
    \label{fig:badnets}
\end{subfigure}
\begin{subfigure}{0.3\textwidth}
    \includegraphics[width=1\linewidth]{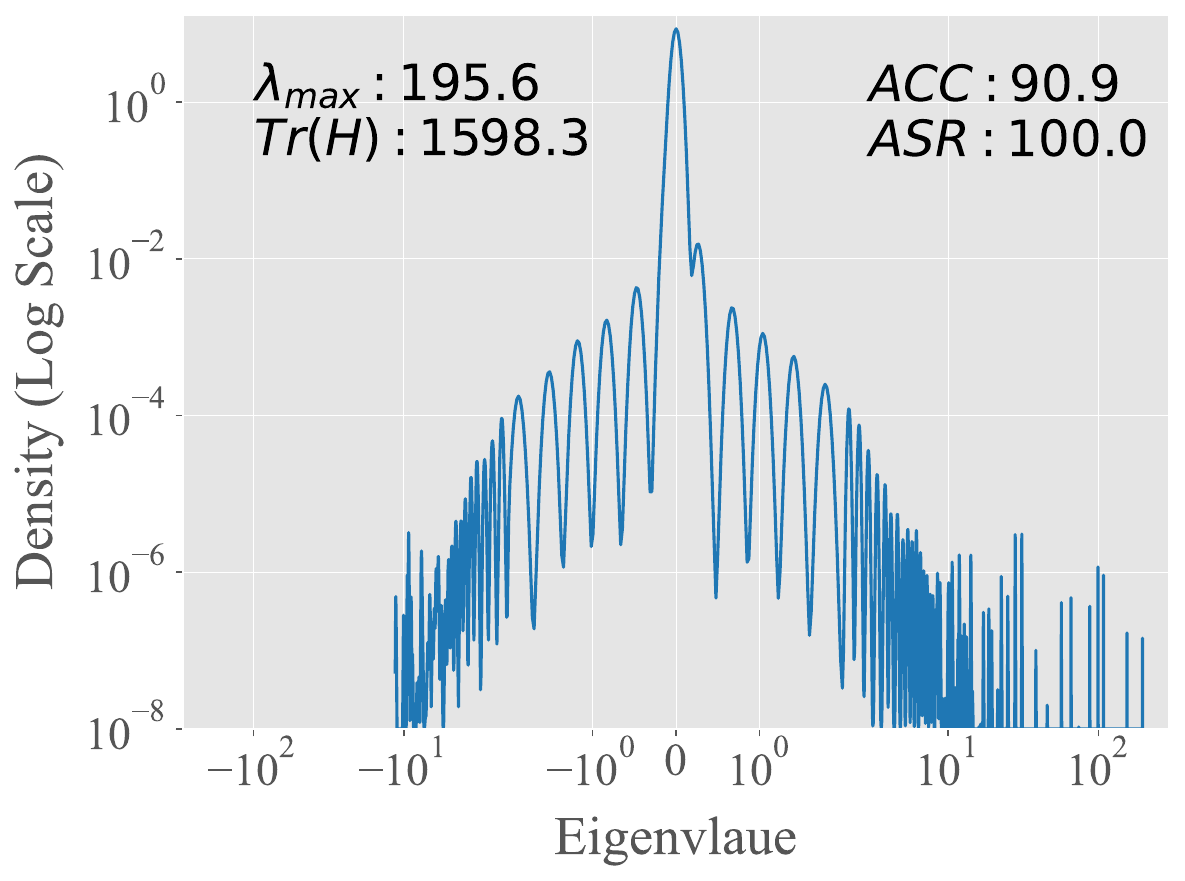}
    \caption{\footnotesize Attack (InceptionV3)}
    \label{fig:badnets}
\end{subfigure}
\begin{subfigure}{0.3\textwidth}
    \includegraphics[width=1\linewidth]{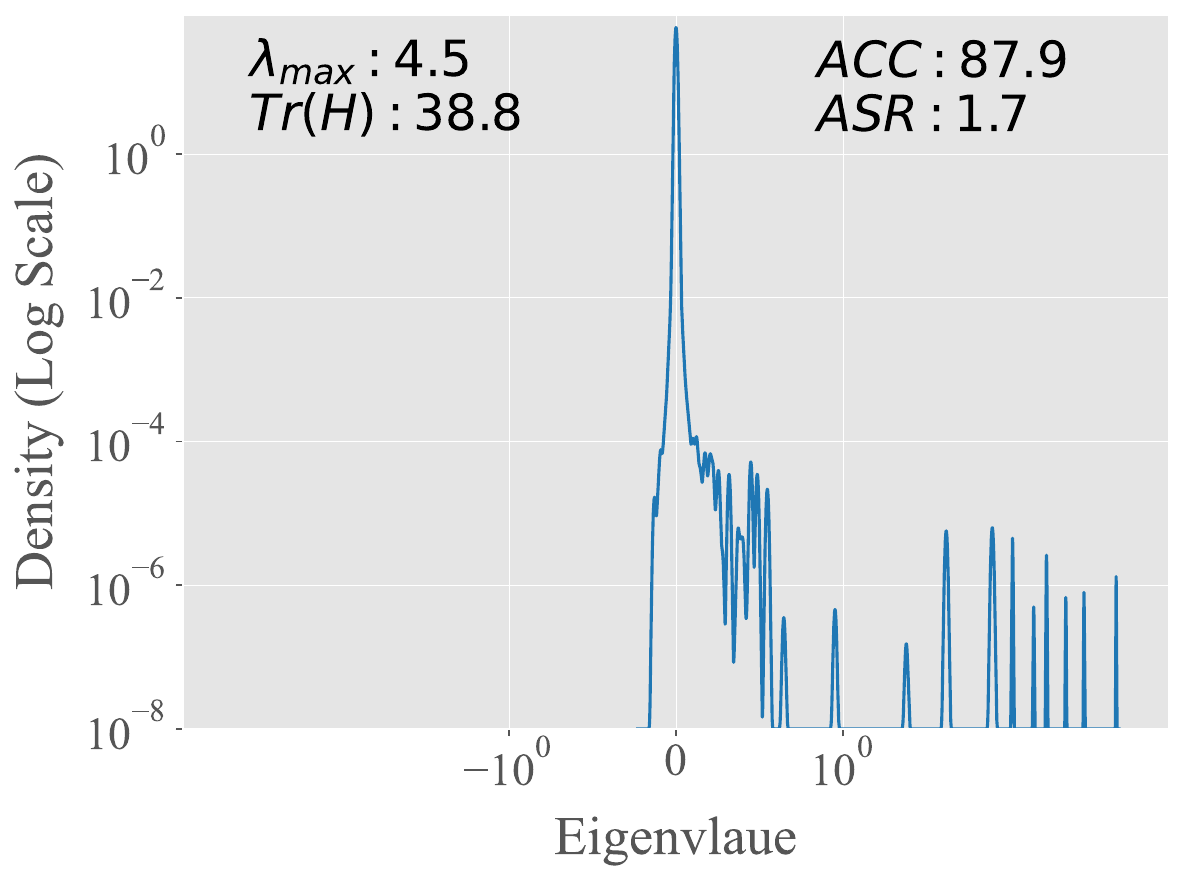}
    \caption{\footnotesize Purification (InceptionV3) }
    \label{fig:badnets}
\end{subfigure}
\begin{subfigure}{0.3\textwidth}
    \includegraphics[width=1\linewidth]{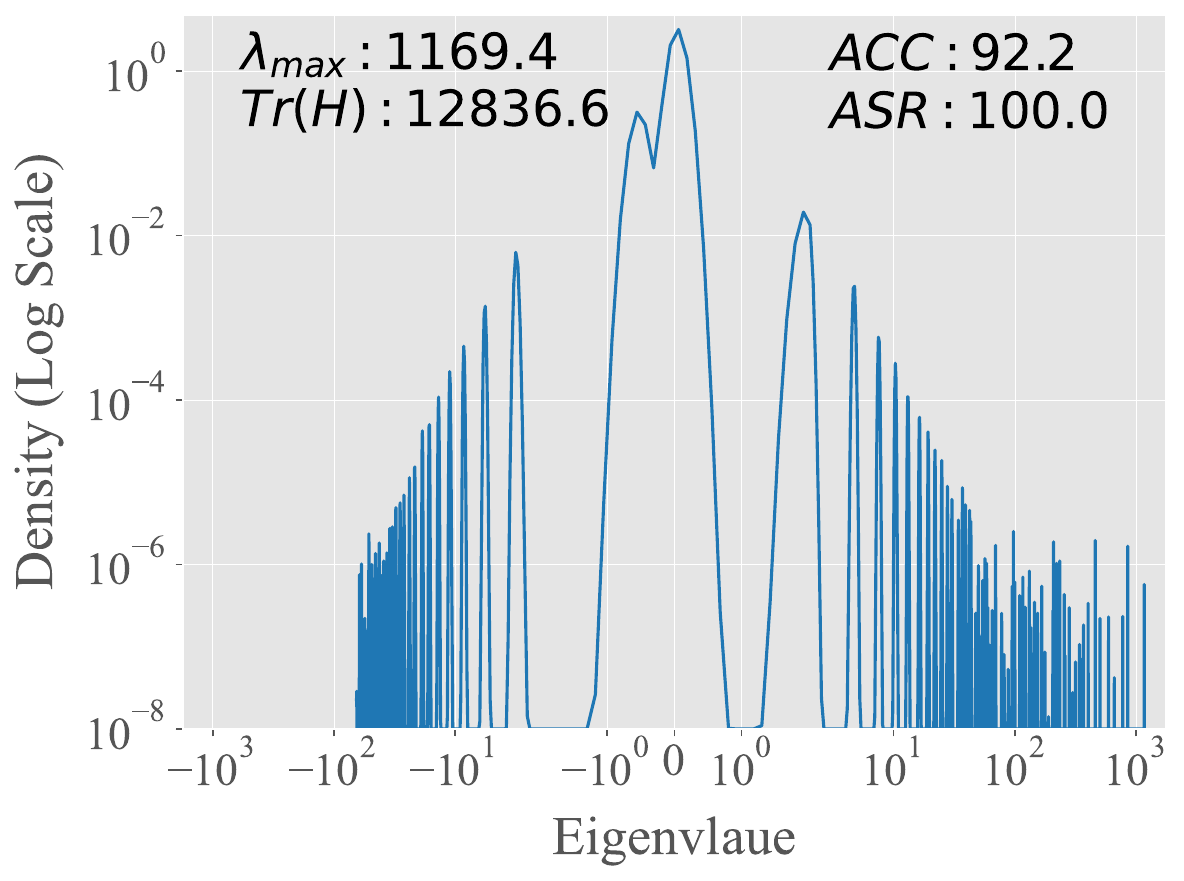}
    \caption{\footnotesize Attack (DenseNet121)}
    \label{fig:badnets}
\end{subfigure}
\begin{subfigure}{0.3\linewidth}
    \includegraphics[width=1\linewidth]{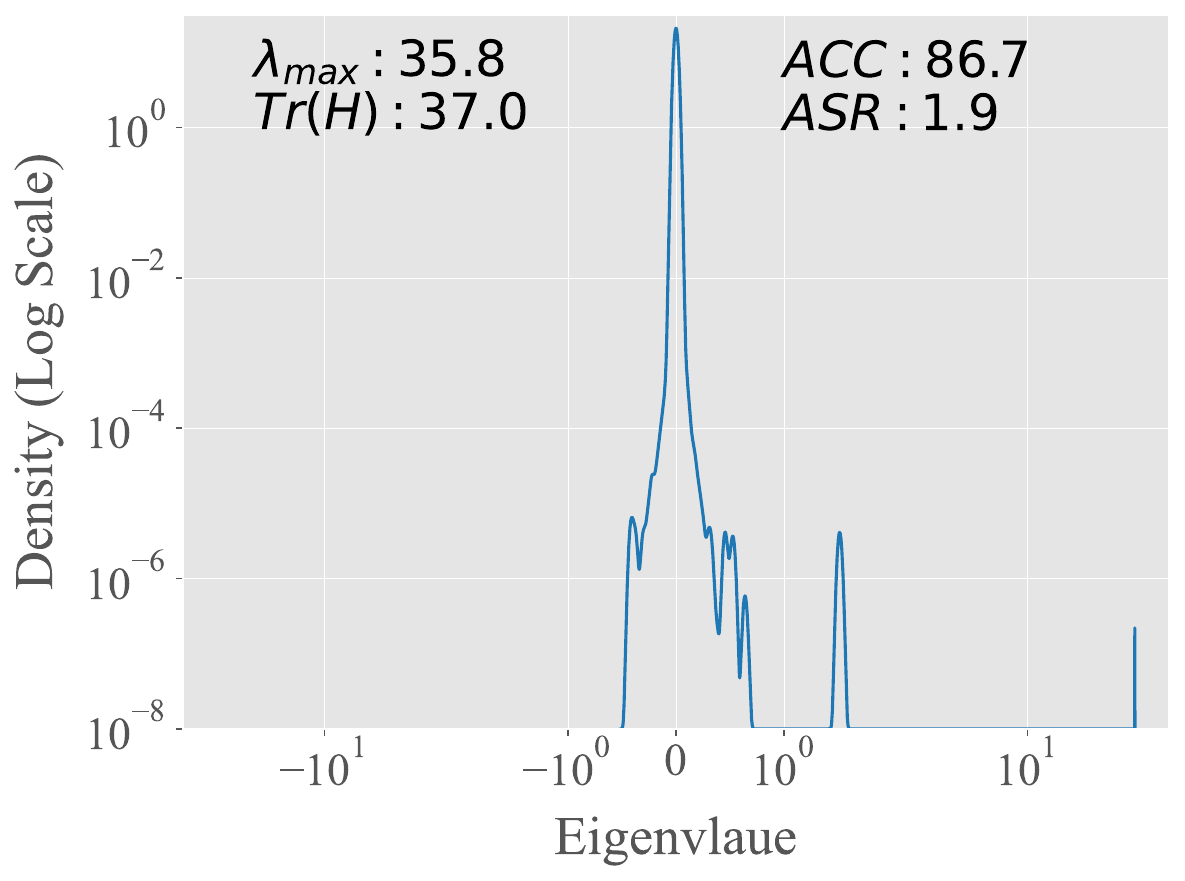}
    \caption{\footnotesize Purification (DenseNet121) }
    \label{fig:badnets}
\end{subfigure}
\caption{ Smoothness Analysis of Backdoor Attack and Purification for different architectures. For all architectures, we consider the Badnets attack on CIFAR10. }
\label{fig:smoothness_arch}

\end{figure*}

\subsubsection{More Results on Smoothness Analysis}\label{sec:more_smoothness_analysis}
For smoothness analysis, we follow the PyHessian implementation\footnote{\url{https://github.com/amirgholami/PyHessian}} and modify it according to our needs. We use a single batch with size 200 to calculate the loss Hessian for all attacks with CIFAR10 and GTSRB datasets. 


\noindent \textbf{Different Architectures.} We conduct further smoothness analysis for the ImageNet dataset and different architectures. In Fig.~\ref{fig:smoothness_imagenet}, we show the Eigendensity plots for different five different attacks. We used 2 A40 GPUs with 96GB system memory. However, it was not enough to calculate the loss hessian if we consider all 1000 classes of ImageNet. Due to GPU memory constraints, we consider an ImageNet subset with 12 classes. We train a ResNet34 architecture with five different attacks. To calculate the loss hessian, we use a batch size of 50. Density plots before and after purification further confirm our proposed hypothesis. To test our hypothesis for larger architectures, we consider five different architectures for CIFAR10, i.e., VGG19~\citep{simonyan2014very}, MobileNetV2~\citep{sandler2018mobilenetv2}, DenseNet121~\citep{huang2017densely}, GoogleNet~\citep{szegedy2014going}, Inception-V3~\citep{szegedy2016rethinking}. Each of the architectures is deeper compared to the ResNet18 architecture we consider for CIFAR10.

\end{document}